\documentclass[11pt, oneside]{article}
\usepackage[utf8]{inputenc}

\usepackage{graphicx}
\usepackage{fancyhdr}
\usepackage{geometry}
\geometry{a4paper}
\usepackage{amsmath}
\usepackage{tensor}
\usepackage{amssymb}
\usepackage{latexsym}
\usepackage[numbers]{natbib}	
\usepackage{textcomp}
\usepackage{subfig}
\usepackage{multirow}
\usepackage{eufrak}
\usepackage{mathrsfs} 

\usepackage[section]{placeins} 

\usepackage{algpseudocode}

\usepackage{amsthm}
\usepackage{bm}
\usepackage{array}

\usepackage[usenames,dvipsnames]{color}
\usepackage{ifthen}
\usepackage{enumerate}
\usepackage{pgf,tikz}
\usetikzlibrary{arrows}
\DeclareMathAlphabet{\mathpzc}{OT1}{pzc}{m}{it}

\setlength{\textwidth}{155.0mm}
\setlength{\textheight}{225.0mm}
\setlength{\oddsidemargin}{7.0mm}
\setlength{\evensidemargin}{7.0mm}
\setlength{\topmargin}{-3.0mm}

\usepackage{tkz-euclide}

\usepackage{hyperref}
\hypersetup{
    bookmarks=true,         
    unicode=false,          
    pdftoolbar=true,        
    pdfmenubar=true,        
    pdffitwindow=false,     
    pdfstartview={FitH},    
    pdftitle={Convex duality for stochastic shortest path problems in known and unknown environments},    
    pdfauthor={Kelli Francis-Staite},     
    pdfsubject={Stochastic Shortest Paths},   
    pdfcreator={KS},   
    pdfproducer={KS}, 
    pdfkeywords={Stochastic Shortest Paths} {Markov Decision Processes} {Convex Duality}, 
    pdfnewwindow=true,      
    colorlinks=true,       
    linkcolor=black,          
    citecolor=black,        
    filecolor=black,      
    urlcolor=black,
    linkcolor=black,          
    citecolor=black,        
    filecolor=magenta,      
    urlcolor=cyan           
}
\usepackage[nameinlink]{cleveref}
\crefname{Program}{{}Program}{{}Programs}

\numberwithin{equation}{section}

\pagestyle{fancy}
\fancyhf{}
\fancyhead[L,R]{}
\fancyhead[R,L]{}
\fancyfoot[C]{\thepage}
\fancyfoot[L,R]{}

\theoremstyle{plain}
\newtheorem{thm}{Theorem}[section]
\newtheorem{corollary}[thm]{Corollary}
\newtheorem{lemma}[thm]{Lemma}
\newtheorem{proposition}[thm]{Proposition}

\newtheorem{conjecture}[thm]{Conjecture}

\theoremstyle{definition}
\newtheorem{defn}{Definition}[section]
\newtheorem{assu}{Assumption}[section]

\theoremstyle{remark}
\newtheorem{example}{Example}[section]

\newtheorem{rmk}{Remark}[section]

\newcommand{\Ac}{\mathcal{A}}
\newcommand{\Sc}{\mathcal{S}}
\newcommand{\Mc}{\mathcal{M}}
\newcommand{\Eb}{\mathbb{E}}

\newcommand{\R}{\mathbb{R}}
\newcommand{\nin}{\notin}
\DeclareMathOperator*{\argmax}{arg\,max}
\DeclareMathOperator*{\argmin}{arg\,min}
\DeclareMathOperator{\CB}{CB}
\DeclareMathOperator{\spn}{span}

\title{Convex duality for stochastic shortest path problems in known and unknown environments}
\author{Kelli Francis-Staite}
\date{\today}

\begin{document}

\maketitle
  \begin{abstract}
This paper studies Stochastic Shortest Path (SSP) problems in known and unknown environments from the perspective of convex optimisation. It first recalls results in the known parameter case, and develops understanding through different proofs. It then focuses on the unknown parameter case, where it studies extended value iteration (EVI) operators. This includes the existing operators used in \citet{rosenbergCohen} and \citet{Tarbouriech2019} based on the $\ell_1$ norm and supremum norm,  as well as defining EVI operators corresponding to other norms and divergences, such as the KL-divergence. This paper shows in general how the EVI operators relate to convex programs, and the form of their dual, where strong duality is exhibited. 

This paper then focuses on whether the bounds from finite horizon research of \citet{neuPikeBurke} can be applied to these extended value iteration operators in the SSP setting. It shows that similar bounds to \cite{neuPikeBurke} for these operators exist, however they lead to operators that are not in general monotone and have more complex convergence properties. In a special case we observe oscillating behaviour. This paper generates open questions on how research may progress, with several examples that require further examination. 
  \end{abstract}
\tableofcontents
\pdfbookmark[1]{Contents}{contents}

\section{Introduction}
This paper focuses on stochastic shortest path problems in known and unknown environments and their relationship to convex duality. Stochastic Shortest Path (SSP) problems are a type of Markov Decision Process (MDP) where agents must trade off reaching a goal state with minimising costs incurred along the way. To do this, agents seek optimal choices of actions (called a \emph{policy}) that depend only on the current state, to move to a new state in the effort to reach a goal state. Each choice incurs a \emph{cost} and agents also seek to minimise their total or expected cost, which is called the \emph{value} of their policy. 
In the SSP setting, an agent may take an unknown and potentially infinite number of actions before reaching the goal state, which means SSPs are considered as a subclass of infinite horizon MDPs.

Shortest paths (SP) problems have been studied as far back as \citet{Wiener1873} in the context of finding paths through mazes. These problems originally considered `deterministic’ actions, where actions have only one known outcome. Such deterministic problems can be solved by dynamic programming techniques, such as Dijkstra’s Algorithm from \citet{Dijkstra1959}.

Stochastic shortest path problems have been studied over the last 40 years, with early research as in \citet{BertsekasTsitsiklis91}. The ‘stochastic’ part of SSPs refers to an agent deciding on an action where the outcome of that action, including the cost incurred and the state it leads to, may depend on realisations random variables. Unlike SPs, SSPs require probability theory to study the properties of these random variables and develop algorithms for determining the optimal policy. 

SSPs have been studied in the \emph{known} and \emph{unknown} settings, where parameters of the MDP are known or unknown respectively. The known setting (also called \emph{planning}) was studied in \citet{BertsekasTsitsiklis91} and \citet{Kallenberg}, which considers using algorithms such as \emph{policy iteration} (PI) and \emph{value iteration} (VI) to solve. Both VI and PI involve iterating certain operators to find their fixed points, which give the optimal policy. These algorithms are dual, in the sense that they correspond to dual linear optimisation programs, as we show in \Cref{subsec:UtoLP}. 

The unknown SSP setting has only recently been studied, as in \citet{Tarbouriech2019}, \citet{rosenbergCohen} and \citet{Tarbouriech2021}. Here an agent faces a trade-off between \emph{exploring} (or \emph{learning/sensing}) the probability distributions versus \emph{exploiting} the current knowledge. The algorithms studied in \cite{Cohen2021,rosenbergCohen,Tarbouriech2019,Tarbouriech2021} are known as optimistic algorithms and are derived from upper confidence bound algorithms. These manage the choice of exploration versus exploitation by using concentration inequalities from probability theory that aim to quantify the trade-off. The approaches of \cite{Cohen2021,rosenbergCohen,Tarbouriech2019,Tarbouriech2021} attempt to estimate the probability distributions directly and based on these estimates, the agent then computes a policy using a version of value iteration. This involves iterating a modified operator to the usual VI. 
These algorithms are termed \emph{model-based}, as one or more parameters are estimated at each step.  In contrast, \emph{model-free} algorithms do not explicitly estimate the parameters and usually rely on estimating the value function such as in Q-learning algorithms, see for example \citet{YuBertsekas2013}.

Aside to this, \citet{neuPikeBurke} demonstrated how convex duality links optimistic model-based value iteration related algorithms (which they deem \emph{value-optimistic}) and model-based policy iteration related algorithms (which they deem \emph{model-optimistic}) in the finite horizon unknown MDP setting, and how this leads to new algorithms. We expected extending their analysis to the infinite horizon setting would lead to new algorithms for SSPs, as well as better understanding of existing algorithms. In particular, we expected it would lead to a deeper understanding of the operators associated with the extended value iteration approach of \citet{rosenbergCohen}. This paper is the result.

\subsection{Main results and layout of this paper} 
We start by recalling SSPs with known parameters as MDPs \Cref{sec:SSPsAsMDPsKnown}. We detail policy and value iteration, their corresponding operators, and recall results on the convergence of these algorithms from \citet{BertsekasTsitsiklis91}. We include several slightly different proofs of known results. In \Cref{subsec:UtoLP} we describe the relationship of policy and value iteration to linear optimisation programs.

\Cref{sec:unknownMDPs} describes SSPs when the transition functions are unknown and must be estimated. We do not consider the case where costs are unknown. General goals of SSP research with unknown parameters are discussed, and extended value iteration (EVI) is described in \Cref{subsubsec:EVI} as in \citet{rosenbergCohen} and \citet{jaksch10a}. The relationship between EVI and convex duality is explored in \Cref{subsec:UtoCP}, which is new research. We show that the EVI operators can be derived from a convex optimisation program, whose solutions is the fixed point of the operators. We show that the form of the dual program and prove there is no duality gap, i.e. strong duality holds.

The main contributions to new research are in \Cref{sec:bounds}. Here we study how using various norms and divergences in our convex programs can lead to approximations and new value iteration-inspired operators. We study the $\ell_1$ norm in depth in \Cref{subsec:l1norm}, and this leads to \Cref{conj:l1dagger} regarding the convergence of its corresponding operator. In \Cref{appn:exploringTheConjecture} we describe intuition for this conjecture and possible proof techniques, as well as giving empirical evidence that the conjecture holds.  

We consider the supremum norm in \Cref{subsec:supnorm}, the KL-divergence in \Cref{subsec:KLdiv}, the Reverse KL-divergence in \Cref{subsec:reverseKLdiv}, the $\chi^2$-divergence in \Cref{subsec:Chisquaredivbound}, and a variance-weighted supremum norm in \Cref{subsec:varweightLinfinitynorm}. In each case we detail one or more approximations and in \Cref{subsec:discussion} we discuss whether we expect these to lead to well-behaved operators. 

In the appendices we include supplementary material, including proofs of several mathematical identities in \Cref{appn: proofBoundCumulant,appn:minAlambdaBoneOnLambda,appn:minofLogfunction,appn:proofSpan,appn:weightedspan}, which are used to determine the bounds in \Cref{sec:bounds}. We also detail alternative approaches for the KL-divergence in \Cref{appn:alternativeKLdiv}, for the $\chi^2$-divergence in \Cref{appn:alternativeChisquared} and for the variance-weighted supremum norm in \Cref{appn:AlternativeVarWeightsupNorm}.

\section{Stochastic shortest paths as MDPs with known parameters}\label{sec:SSPsAsMDPsKnown}
In this section we give the definition of stochastic shortest path problems as Markov decision processes, set our notation and give the basic results from \citet{Kallenberg,BertsekasVol2} and \citet{BertsekasTsitsiklis91} as well as expanding on some of the ideas presented. Note that \citet{Kallenberg} tends to focus on rewards instead of costs. 

Recall that a Markov decision process (MDP) is a tuple $\Mc = (\Sc,\Ac,P,c)$ which governs a discrete stochastic process over some time horizon. Here $\Sc=\{1,2,\ldots, N\}$ is called the state space, $\Ac(s)$ is a set of actions for a state $s$, $P(s'|s,a)\in [0,1]$ is the probability that choosing action $a\in \Ac(s)$ at a state $s$ will lead to state $s'$ at the next time step, and $c(s,a)\in [0,1]$ is the cost of choosing action $a\in \Ac(s)$ at a state $s$. We will often abuse notation and write $a\in\Ac$ when $s$ is obvious. In general MDPs may have $P$ and $c$ depending on the time step, however we do not do this for stochastic shortest paths. 

Often MDPs are studied by generating a selection of actions called a policy $\pi:\Sc\times\mathcal{T}\to \Ac$ that is considered as a rule to pick a certain action $\pi(s_t)$ when the MDP is at state $s_t$ at time step $t\in \mathcal{T}={1,2,3\ldots}$. These policies generate sequences of states $s_1,s_2,\ldots,$ from the MDP with probabilities depending on $P$. Policies are then studied for their properties.

A stochastic shortest path problem (SSP) is an infinite horizon MDP with initial state $s_{init}\in \Sc$ and an additional \emph{goal} state $g\nin \Sc$. Here, the probabilities $P(\cdot|s,a)$ are substochastic so that 
\begin{equation} \label{eqn:substochastic}
\sum_{s'\in \Sc} P(s'|s,a)\le 1
\end{equation} and the remaining probability is considered the probability of reaching the goal state from that state-action pair
\[ P(g|s,a) = 1 - \sum_{s'\in \Sc} P(s'|s,a)\in [0,1].\] 

The aim for SSPs is to find an \emph{optimal} policy $\pi:\Sc\times\mathcal{T}\to \Ac$, so that starting from $s_1=s_{init}$ and applying this policy should reach the goal state with least cost. As the MPD is stochastic we consider the expected cost, called the \emph{cost-to-go} 
\[ J_{\pi}(s_{init}) = \lim_{T\to \infty}\Eb\left [\sum_{t=1}^T c(s_t,\pi(s_t))\right],\]
 as the quantity to minimise. The expectation is taken over the probability of the sequence  $s_1,\ldots, s_T$ given the MDP $\Mc$ and the policy $\pi$. For a stationary policy $\pi$ then writing $J_{\pi}(\cdot)$ as a vector over all possible initial states in $\Sc$ and expanding the previous equation we can write
 \[ J_{\pi} = \lim_{T\to \infty}\Eb\left [\sum_{t=1}^T c(\cdot,\pi(\cdot))\right] = \lim_{T\to \infty}\sum_{t=1} P_{\pi}^{t-1}c_{\pi}.\] Here we write $P_{\pi}$ to refer to the $N\times N$ matrix with $s$-th, $s'$-th entry $P(s'|s,\pi(s))$ and $c_{\pi}$ to refer to the $N\times 1$ column vector $c(\cdot,\pi(\cdot))$.
 
The MDP may be specified such that this expectation does not exist for a given policy, including for stationary policies. Importantly, only \emph{proper} policies are guaranteed to reach the goal state and have a finite cost-to-go.

\begin{defn} \label{defn:proper}
We define a \emph{proper} policy $\pi:\Sc\to\Ac$ to be a policy such that starting at any state $s_1$ and following this policy for at most $N$ stages, then there is a positive probability of reaching the goal state. That is, \[ \min_s\mathbb{P}(s_N=g|s_{init}=s,\pi) >0,\] which for a stationary policy gives \[ \min_{i}\max_{j}[P_{\pi}^N]_{ij} >0.\] Policies that are not proper are called \emph{improper}.
\end{defn}
It may not always be possible to find an optimal policy under the current $\Mc$. For example, such a policy requires at least one state to be able to transition to the goal state under at least one action with positive probability, as well as other conditions on the transition probabilities. We make the following assumption to assist with this. 

\begin{assu}\label{assu:proper}
Each SSP has at least one stationary proper policy, and every improper policy has at least one initial state such that the cost-to-go is infinite.
\end{assu}
Proper stationary policies must reach the goal state in a finite number of time steps, as shown in \citet[\S8.9]{Kallenberg}.

We usually assume that the costs are bounded below by some minimum $c_{\min}>0$ to prevent issues with \emph{zero cycles} occurring, this then implies improper policies have at least one initial state such that the cost-to-go is infinite without \Cref{assu:proper}.

Note that a zero-cycle (or better, a \emph{zero-region}) in an MDP is where a policy $\pi$, a state $s_1$ and an integer $n$ exist such that any instance of following the policy $\pi$ from state $s_1$ for $n$ steps returns to state $s_1$ at least once and with zero costs incurred. Such zero-cycles are problematic as once a MDP enters such a region following this policy it will not leave, and therefore will not reach the goal state, however it will incur no cost by remaining there. This may be unrealistic in practice, given actions usually cost small amounts to the user (e.g. time). In general, SSPs where zero costs are incurred may be transformed by adding a small amount $\epsilon$ to all zero costs and studied using our algorithms as $\epsilon \to 0$. See \citet[\S2]{rosenbergCohen} or \citet[\S5]{Tarbouriech2019} for further details.

We define the Bellman operator $L_\pi$ for a stationary policy $\pi$ and the optimal Bellman operator $U$ such that for $x\in\R^{N}$ we have
\[ L_{\pi}x = c_\pi + P_\pi x \quad \text{ so that } \quad (L_{\pi}x)_s = c(s,\pi(s)) + \sum_{s'\in S}P(s'|s,\pi(s))x_{s'}\] 
and 
\begin{equation}  (Ux)_s = \min_a \left\{ c(s,a) + \sum_{s'\in S} P(s'|s,a)x_{s'}\right\} = (\min_{\pi}L_{\pi}x)_s.\label{eqn:defnU} \end{equation}

We will use the convention that for real vectors $x,y$, then $y\ge x$ (or $y>x$) means that operator $\ge $ (or $>$) is applied elementwise.

Bertsekas and Tsitsiklis established the following, and we have adapted the proof to our own understanding.
\begin{lemma}[\citet{BertsekasTsitsiklis91}] \label{lem:PI}
Under \Cref{assu:proper} then for a stationary policy $\pi$ \begin{itemize}
    \item If there is a vector $x$ such that $L_{\pi}x \le x$ then $\pi$ is proper and $x \ge J_{\pi}$.
    \item For a proper stationary policy $\pi$ then $J_{\pi}$ is the unique fixed point of the equation $L_{\pi}x = x$ and $\lim_{k\to \infty}L_{\pi}^k =J_{\pi}$.
\end{itemize}
\end{lemma}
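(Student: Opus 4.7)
The plan is to derive both parts from the telescoping identity
\[L_\pi^k x = \sum_{t=0}^{k-1} P_\pi^t c_\pi + P_\pi^k x,\]
which follows by induction from the definition $L_\pi y = c_\pi + P_\pi y$. Two structural facts underpin the argument: (i) $L_\pi$ is monotone in the componentwise order, since $P_\pi$ has non-negative entries; and (ii) for a proper stationary $\pi$, \Cref{defn:proper} forces $\rho := \max_s [P_\pi^{N-1}\mathbf{1}]_s < 1$, because $[P_\pi^{N-1}\mathbf{1}]_s = \Pb(s_N \neq g \mid s_1 = s, \pi) < 1$ for every $s$; iterating then gives $P_\pi^{m(N-1)}\mathbf{1} \le \rho^m \mathbf{1}$, so $P_\pi^k \to 0$ geometrically.

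For Part 1, starting from $L_\pi x \le x$, applying (i) inductively yields $L_\pi^k x \le x$ for every $k$, so the telescoping identity rearranges to $\sum_{t=0}^{k-1} P_\pi^t c_\pi \le x - P_\pi^k x$. At this stage I cannot invoke (ii), because properness is precisely what I am trying to prove, but substochasticity of $P_\pi$ still gives $P_\pi^k \mathbf{1} \le \mathbf{1}$ and therefore $-P_\pi^k x \le \|x\|_\infty \mathbf{1}$ componentwise. Hence the partial sums are bounded above by $x + \|x\|_\infty \mathbf{1}$ uniformly in $k$; since $c_\pi \ge 0$ they are elementwise non-decreasing in $k$, so they converge to a finite limit, which is $J_\pi$. \Cref{assu:proper} then rules out $\pi$ being improper, so $\pi$ is proper; now (ii) applies, and taking $k \to \infty$ in the inequality yields $J_\pi \le x$.

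For Part 2, assume $\pi$ is proper. Fact (ii) makes $\sum_{t=0}^\infty P_\pi^t c_\pi$ converge geometrically, so $J_\pi$ is finite, and an index shift shows $L_\pi J_\pi = c_\pi + P_\pi \sum_{t=0}^\infty P_\pi^t c_\pi = J_\pi$. For uniqueness, any fixed point $x$ of $L_\pi$ satisfies $x = L_\pi^k x$ for all $k$, so the telescoping identity gives $x = \sum_{t=0}^{k-1} P_\pi^t c_\pi + P_\pi^k x$; letting $k \to \infty$ and using (ii) forces $x = J_\pi$. The same identity applied to an arbitrary starting $x$ then proves $L_\pi^k x \to J_\pi$.

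The main obstacle is Part 1: the hypothesis $L_\pi x \le x$ carries no sign information on $x$, so one cannot simply drop the $P_\pi^k x$ term to read off $J_\pi \le x$. The resolution is the two-stage argument above, in which substochasticity alone yields just enough control to conclude that $J_\pi$ is finite, and then \Cref{assu:proper} does the conceptual work of converting finiteness of $J_\pi$ into properness of $\pi$ (without it, an improper policy with zero-cost cycles could satisfy $L_\pi x \le x$ while still having finite $J_\pi$). Only once properness is secured can (ii) be invoked a second time to sharpen finiteness into the sought bound $J_\pi \le x$.
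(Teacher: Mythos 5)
Your proof is correct and follows the same overall skeleton as the paper's: the telescoping identity $L_\pi^k x = \sum_{t=0}^{k-1}P_\pi^t c_\pi + P_\pi^k x$, monotonicity of $L_\pi$, the observation that substochasticity alone bounds $P_\pi^k x$ so that the increasing partial sums converge to a finite $J_\pi$, the appeal to \Cref{assu:proper} to convert finiteness into properness, and only then the use of $P_\pi^k\to 0$ to extract $J_\pi\le x$. The one place you genuinely diverge is in establishing $P_\pi^k\to 0$ for a proper $\pi$: the paper reorders $P_\pi$ into its canonical block form over communicating classes, argues that a block with a modulus-one eigenvalue would trap the chain away from the goal, and concludes that the spectral radius is strictly less than one. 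You instead read off from \Cref{defn:proper} that $[P_\pi^{N-1}\mathbf{1}]_s = \Pb(s_N\ne g\mid s_1=s,\pi)<1$ for every $s$, set $\rho=\max_s[P_\pi^{N-1}\mathbf{1}]_s<1$, and use non-negativity of $P_\pi^{N-1}$ to iterate $P_\pi^{m(N-1)}\mathbf{1}\le\rho^m\mathbf{1}$. Your route is more elementary and self-contained (no Perron--Frobenius-style decomposition, and it delivers an explicit geometric rate $\rho^{\lfloor k/(N-1)\rfloor}$), whereas the paper's spectral argument buys the additional structural fact, used later in \Cref{lem:LpiIsContraction}, that $\rho(P_\pi)<1$ and hence that $L_\pi$ is a contraction in some induced norm. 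Both are sound; your two-stage handling of Part 1, where you explicitly refuse to invoke $P_\pi^k\to 0$ before properness is secured, is exactly the care the paper's proof also takes.
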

\begin{proof}
For (i), induction shows $x\ge L^{k}_{\pi}x$ for any $k$. Expanding we see that $L^{k}_{\pi}x = \sum_{i=0}^{k-1}P^i_\pi c_\pi + P^k_\pi x$. We note that $||P^k_\pi x||\leq ||P^k_\pi||||x||\leq ||x||$ by \Cref{eqn:substochastic}, so $P^k_\pi x$ is bounded below and above. We also have that $\sum_{i=0}^{k-1}P^i(\pi)c(\pi)$ is positive, so it is bounded below by $0$, and therefore it must be also bounded above. As $\sum_{i=0}^{k-1}P^i_\pi c_\pi$ is increasing and bounded above, it must converge, and its limit is $J_{\pi}$ the cost-to-go of the policy $\pi$. Hence $\pi$ is proper.

To show that $J_{\pi}\le x$, we need to discuss why $\lim_{k\to\infty}P^k_\pi=0$. As $P_\pi$ is a square, non-negative matrix, we can reorder the rows and corresponding columns of the matrix so that $P_\pi$ consists of  blocks of irreducible matrices along a diagonal corresponding to communicating classes. Here, all elements above the blocks will be zero, and there may be elements below the blocks that correspond to non-communicating elements. See \citet[pg.~15]{Seneta1981} for details. This decomposition does not change the eigenvalues of $P_\pi$, all of which have length less than or equal to one as $P_\pi$ is substochastic. 

Then $P_\pi^k$ will have the blocks to the power of $k$ along the diagonal. If any of these blocks have associated eigenvalues with length equal to one, these block contain cycles such that if the MPD starts in these states, it will never leave these states and not reach the goal state. This would mean $\pi$ is an improper policy. Otherwise the blocks will have eigenvalues with norm less than one, and the powers of these blocks and the matrix itself will eventually converge to the zero matrix. As we know $\pi$ is proper, then $D^k$ must converge to the zero matrix and  $\lim_{k\to\infty}P^k(\pi)=0$. This then implies $J_{\pi}\le x$, and also that $x\ge 0$. This also means that for a proper policy in SSP we must require every eigenvalue of $P^k(\pi)$ to be less than 1.

For (ii), if $\pi$ is proper then from the proof of (i) we have $L^{k}_{\pi}x = \sum_{i=0}^{k-1}P^i_\pi c_\pi + P^k_\pi x$ which converges as $k\to \infty$ to $J_{\pi}$ for all $x$. As $L^{k+1}_{\pi}x=c_\pi+P_\pi L^k_{\pi}x$, taking the limit as $k\to \infty$ gives $J_{\pi}=c_\pi+P_\pi J_{\pi}=L_{\pi}J_{\pi}$. Uniqueness follows by considering any solution $J$ to $L_{\pi}x=x$, and then induction shows $L_{\pi}^kx=x$, and taking the limit as $k\to \infty$ gives $J_{\pi}=x$.
\end{proof}
An alternative way to show that there is a unique fixed point is to show that $L_{\pi}$ is a contraction when $\pi$ is proper. 
\begin{lemma}\label{lem:LpiIsContraction}
When $\pi$ is stationary and proper then $L_{\pi}$ is a contraction under an induced vector norm.
\end{lemma}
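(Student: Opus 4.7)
The plan is to reduce the claim to showing that the matrix $P_\pi$ satisfies $\|P_\pi\| < 1$ in some induced operator norm. This is because for any $x,y\in\R^N$,
\[
L_\pi x - L_\pi y = (c_\pi + P_\pi x) - (c_\pi + P_\pi y) = P_\pi(x-y),
\]
so once we have an induced norm $\|\cdot\|_*$ with $\|P_\pi\|_* \le \beta < 1$, we immediately get $\|L_\pi x - L_\pi y\|_* \le \beta\|x-y\|_*$, i.e. $L_\pi$ is a contraction with modulus $\beta$.

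Next, I would extract from the proof of \Cref{lem:PI} the fact that for a proper stationary policy $\pi$ we have $\lim_{k\to\infty} P_\pi^k = 0$, equivalently that every eigenvalue of $P_\pi$ has modulus strictly less than $1$, i.e.\ $\rho(P_\pi) < 1$. This is what makes the construction of a good norm possible: it guarantees that the Neumann series $(I-P_\pi)^{-1} = \sum_{k=0}^\infty P_\pi^k$ converges entrywise with non-negative entries.

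My preferred approach is to construct an explicit \emph{weighted supremum norm} rather than quote a general matrix-analysis theorem. Define the positive vector
\[
v := (I - P_\pi)^{-1}\mathbf{1} = \sum_{k=0}^\infty P_\pi^k\mathbf{1},
\]
which is finite and strictly positive because $\pi$ is proper. It satisfies $P_\pi v = v - \mathbf{1}$, so componentwise $P_\pi v < v$, and setting $\beta := \max_s (v_s-1)/v_s < 1$ gives $P_\pi v \le \beta v$. Then for the weighted sup norm $\|x\|_v := \max_s |x_s|/v_s$, a direct calculation using non-negativity of the entries of $P_\pi$ yields
\[
\|P_\pi x\|_v = \max_s \frac{1}{v_s}\Bigl|\sum_{s'} P(s'|s,\pi(s))\, x_{s'}\Bigr| \le \max_s \frac{(P_\pi v)_s}{v_s}\,\|x\|_v \le \beta\|x\|_v,
\]
so $\|P_\pi\|_v \le \beta < 1$ and the reduction in the first paragraph concludes the proof.

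The main obstacle is really just picking the right norm: in the standard sup or $\ell_1$ norm $P_\pi$ need not be a contraction (we only know its entries sum to at most $1$ along rows). The properness of $\pi$ enters crucially, through $\lim_{k\to\infty} P_\pi^k = 0$, to produce the weight $v$ with $P_\pi v \le \beta v$; without properness, the Neumann series defining $v$ need not even converge. An alternative route, which I would mention if the explicit weight feels too ad hoc, is to invoke the classical fact that $\rho(A) = \inf_{\|\cdot\|}\|A\|$ over induced norms, so $\rho(P_\pi)<1$ directly yields some norm in which $L_\pi$ contracts; but the weighted sup norm above has the advantage of giving an explicit contraction modulus that will be useful later in the paper.
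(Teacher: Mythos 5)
Your proof is correct, but it reaches the key inequality $\lVert P_\pi\rVert<1$ by a genuinely different route than the paper. The paper's proof makes the same reduction ($L_\pi x-L_\pi y=P_\pi(x-y)$, so it suffices to bound an induced norm of $P_\pi$ below $1$), extracts $\rho(P_\pi)<1$ from the proof of \Cref{lem:PI} exactly as you do, and then quotes the general matrix-analysis theorem that for every $\epsilon>0$ there is a vector norm whose induced matrix norm satisfies $\lVert P_\pi\rVert<\rho(P_\pi)+\epsilon$; taking $\epsilon=\tfrac12(1-\rho(P_\pi))$ finishes the argument, together with the Banach fixed point theorem. You instead construct the norm explicitly: the weight $v=(I-P_\pi)^{-1}\mathbf{1}=\sum_{k\ge 0}P_\pi^k\mathbf{1}$ satisfies $P_\pi v=v-\mathbf{1}\le\beta v$ with $\beta=\max_s(v_s-1)/v_s<1$ (note $v\ge\mathbf{1}$ entrywise, so $\beta$ is well defined and each ratio is strictly below $1$), and the weighted supremum norm $\lVert\cdot\rVert_v$ then contracts $P_\pi$ by the factor $\beta$. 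Your version is more self-contained — it avoids the external citation, produces an explicit contraction modulus, and the weight $v_s$ has a pleasant interpretation as the expected number of steps to absorption starting from $s$; it is also in the same spirit as the weighted-sup-norm argument the paper later uses for the operator $U$ in \Cref{lem:properContractionU}. What the paper's route buys is brevity and generality: it works verbatim for any matrix with spectral radius below $1$ without needing non-negativity of the entries, whereas your explicit bound $\lvert\sum_{s'}P(s'|s,\pi(s))x_{s'}\rvert\le\sum_{s'}P(s'|s,\pi(s))v_{s'}\lvert x_{s'}\rvert/v_{s'}$ does rely on $P_\pi\ge 0$ (which of course holds here). Either way the conclusion via Banach's theorem is the same.
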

\begin{proof}For any $x_1, x_2 \in \R^{N}$ then 
\begin{align*}
    \lVert L_{\pi}x_1 - L_{\pi}x_2 \rVert =  \lVert P_\pi(x_1 -x_2) \rVert \leq \lVert P_\pi\rVert \lVert x_1 -x_2 \rVert.
\end{align*}
Then we know from the proof of \Cref{lem:PI}(i) that the eigenvalue of $P_\pi$ with the largest norm has norm less than one. The norm of this eigenvalue is called the \emph{spectral radius of $P_\pi$}, often denoted $\rho(P_\pi)$. Then by \citet[Thm.~3,pg.~27]{Des2009}, for all $\epsilon >0$ there is a norm on $\R^{N}$ such that the induced matrix norm gives $\lVert P_\pi\rVert < \rho(P_\pi) + \epsilon$. Taking $\epsilon = \frac{1}{2}(1-\rho(P_\pi))$ implies that there exists a norm where $\lVert P_\pi\rVert<1$.

The Banach contraction mapping theorem from \citet{Banach1922} then states that $L_{\pi}$ has a unique fixed point and that the sequence $L_{\pi}^k(x_1)$ converges to this fixed point for any initial $x_1$.
\end{proof}
Here is a direct way we can extend this to the operator $U$, this theorem is known in the literature as referenced but the proof is adapted from \citet{Tseng1988}. See also \citet[pg.~102--103]{BertsekasVol2}.

\begin{lemma}[\citet{Veinott1969} and \citet{Tseng1988}]\label{lem:properContractionU} 
If all stationary policies are proper then $U$ is a contraction.
\end{lemma}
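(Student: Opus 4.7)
The plan is to exhibit a weighted sup-norm under which $U$ is a strict contraction. Concretely, I will produce a vector $w>0$ and a constant $\alpha\in(0,1)$ such that for every stationary policy $\pi$ we have $P_\pi w\le \alpha w$ componentwise, and then use the elementary inequality $|\min_a f(a)-\min_a g(a)|\le \max_a |f(a)-g(a)|$ to transfer this contraction from each $L_\pi$ to $U$.

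The first step, and the one where the hypothesis that every stationary policy is proper enters, is the construction of $w$. I would replace the cost function $c$ by the constant cost $\tilde c\equiv 1$ (keeping $P$ unchanged) and consider the auxiliary ``worst-case'' operator $\bar U x = \max_\pi L_\pi^{\tilde c} x$, where the max is over the finitely many stationary deterministic policies. Since every stationary policy is proper, \Cref{lem:PI} applied with cost $\tilde c$ gives a finite cost-to-go $J_\pi^{\tilde c}$ for each $\pi$, and since there are only finitely many such $\pi$ we may set
\[ w_s = \max_\pi J_\pi^{\tilde c}(s). \]
Standard dynamic programming arguments (or a direct verification that $w$ is the unique fixed point of $\bar U$) give $w_s = 1 + \max_a \sum_{s'}P(s'|s,a)\,w_{s'}$ for every $s$, hence
\[ \sum_{s'}P(s'|s,a)\,w_{s'} = w_s - 1 < w_s \qquad \text{for every } s,a. \]
Because $\Sc$ and $\Ac$ are finite, the constant $\alpha := \max_{s,a}\bigl(\sum_{s'}P(s'|s,a)\,w_{s'}\bigr)/w_s$ lies strictly below $1$, and $P_\pi w\le \alpha w$ componentwise for every stationary $\pi$.

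Define the weighted sup-norm $\|x\|_w := \max_s |x_s|/w_s$. For the contraction estimate, fix $x,y\in\R^N$ and $s\in\Sc$. Using the min/max inequality on $(Ux)_s$ and $(Uy)_s$,
\[ |(Ux)_s - (Uy)_s| \le \max_a \Bigl|\sum_{s'} P(s'|s,a)(x_{s'}-y_{s'})\Bigr| \le \max_a \sum_{s'} P(s'|s,a)\,w_{s'}\,\|x-y\|_w \le \alpha\, w_s\,\|x-y\|_w. \]
Dividing by $w_s$ and taking the maximum over $s$ yields $\|Ux-Uy\|_w \le \alpha\,\|x-y\|_w$ with $\alpha<1$, i.e.\ $U$ is a contraction in this norm.

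The main obstacle is the construction of $w$: one needs to use the properness of every policy, not merely the optimal one, in order to guarantee that the ratio $(P_\pi w)_s/w_s$ is strictly below $1$ \emph{uniformly in $\pi$}. The auxiliary unit-cost SSP handled by the ``worst-case'' operator $\bar U$ is the device that extracts this uniform bound from \Cref{assu:proper}; everything else is a routine manipulation of the Bellman operator and the induced weighted norm.
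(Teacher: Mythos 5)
Your proof is correct in outline, but it takes a genuinely different route from the paper. The paper (following Tseng) builds the weights combinatorially: it partitions $\Sc$ into layers $S_1,\dots,S_r$ according to how quickly every action can reach earlier layers, sets $\eta$ to be the smallest positive transition probability, and takes explicit weights $\omega_s=1-\eta^{2q}$ for $s\in S_q$, which yields the explicit contraction modulus $\gamma=\frac{1-\eta^{2r-1}}{1-\eta^{2r}}$ directly from the definition of properness. You instead take $w_s=\max_\pi J^{\tilde c}_\pi(s)$, the worst-case expected hitting time of the goal, and extract $\alpha=\max_{s,a}(P(\cdot|s,a)\cdot w)/w_s<1$ from the Bellman equation $w_s=1+\max_a\sum_{s'}P(s'|s,a)w_{s'}$; this is Veinott's (and Bertsekas's textbook) construction. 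Your version is more conceptual and gives arguably the most natural weights, while the paper's is more elementary and self-contained. The one place where your argument needs care is the assertion that $w$ satisfies the max-Bellman equation (equivalently, that stationary policies suffice and $\max_\pi J^{\tilde c}_\pi$ is a fixed point of $\bar U$): the standard proofs of that fact for undiscounted SSPs often go \emph{through} a weighted-sup-norm contraction, so you must avoid circularity. It can be done without contraction --- e.g.\ by policy improvement over the finitely many stationary deterministic policies using the monotonicity of each $L^{\tilde c}_\pi$ together with \Cref{lem:PI}(i) (any fixed point of $\bar U$ dominates every $J^{\tilde c}_\pi$), or by invoking the Bertsekas--Tsitsiklis results for the negated-cost problem, which apply since all policies are proper --- but as written this step is asserted rather than proved, and it carries essentially all of the weight that the paper's layered construction carries. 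The final transfer from $P_\pi w\le\alpha w$ to $\lVert Ux-Uy\rVert_w\le\alpha\lVert x-y\rVert_w$ via $|\min_a f-\min_a g|\le\max_a|f-g|$ is the same device the paper uses (its \Cref{lem:maxarrange}) and is fine.
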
 

\begin{proof} 
For any $x_1, x_2 \in \R^{N}$ and $s\in\Sc$ then
\begin{align*}
    \lvert(U x_1)_s - (U x_2)_s \rvert & = \lvert \min_a (c(s,a) + \sum_{s'}P(s'|s,a)x_1(s')) - \min_a (c(s,a) + \sum_{s'}P(s'|s,a)x_2(s'))\rvert\\
    & \leq\max_a  \lvert  (c(s,a) + \sum_{s'}P(s'|s,a)x_1(s')) - (c(s,a) + \sum_{s'}P(s'|s,a)x_2(s'))\rvert\\
    & = \max_a \lvert\sum_{s'}P(s'|s,a)(x_1(s')-x_2(s'))\rvert\\
    & = \max_{\pi}\lvert P(\cdot|s,\pi(s))(x_1-x_2)\rvert\\
    &= \lvert P(\cdot|s,\hat{\pi}(s))(x_1-x_2)\rvert,
\end{align*} where $\hat{\pi}$ is the policy which maximises $|P_{\hat{\pi}}(x_1-x_2)|$. Note that moving from line 1 to line 2 above uses \Cref{lem:maxarrange}.

We define $S_0=\{g\}$ and iteratively define $S_{q}$ for $q=1,\ldots, N$ such that
\[ S_{q} = \{ s\in \mathcal{S}\setminus S_{1}\cup S_{2}\cup\ldots \cup S_{q-1} \mid \forall a\in \mathcal{A},\; \exists s'\in S_{q'} \text{ such that } 0\le q'< q \text{ and } P(s'|s,a) >0\}.\] These sets are clearly disjoint. Using that all policies are proper, then $S_1$ is non-empty and while $S_{1}\cup S_{2}\cup\ldots \cup S_{q-1}\ne \mathcal{S}$ then $S_{q}$ is non-empty. Ignoring all empty $S_{q}$ we see that we can partition $\mathcal{S}$ so that $\mathcal{S}=S_1\cup S_2\cup \ldots \cup S_r$, $r\le N$. We set \[\eta = \min_{s'\in \Sc\cup\{g\},s\in\Sc,a\in\Ac}\{ P(s'|s,a) >0\}\in (0,1)\] and $\gamma = \frac{1-\eta^{2r-1}}{1-\eta^{2r}}\in (0,1)$. For each $s\in S_{q}$ we define $\omega_s = 1-\eta^{2q}\in (0,1)$.

Note that if $s\in\mathcal{S}_q, s'\in\mathcal{S}_{q'}$ with $q'< q$ then $\omega_{s'}< \omega_s $ and that $\frac{1-\eta^{2s'-1}}{1-\eta^{2s'}} \le \frac{1-\eta^{2s-1}}{1-\eta^{2s}}$.

For a given $s\in\Sc_{q}$ and $a\in \mathcal{A}(s)$, take $s'\in S_{0}\cup S_{1}\cup \ldots \cup S_{q-1}$ such that $P(s'|s,a) >0$. Then we have 
\begin{align*}
    \sum_{s''}P(s''|s,a)\frac{\omega_{s''}}{\omega_s} =&\sum_{s''\ne s'}P(s''|s,a)\frac{\omega_{s''}}{\omega_s} + P(s'|s,a)\frac{\omega_{s'}}{\omega_s}\\
    &\le \sum_{s''\ne s'}P(s''|s,a)\frac{1}{\omega_s} + P(s'|s,a)\frac{\omega_{s'}}{\omega_s}\\
    & = (1-P(s'|s,a)-P(g|s,a)) \frac{1}{\omega_s} + P(s'|s,a)\frac{\omega_{s'}}{\omega_s}\\
    & = (1-P(s'|s,a)(1-\omega_{s'}) -P(g|s,a))\frac{1}{\omega_s}\\
    &\le (1-\eta(1-\omega_{s'}))\frac{1}{\omega_s}\\
    &\le \frac{1-\eta^{2q-1}}{1-\eta^{2q}}\le \gamma.
\end{align*}
Then consider that 
\begin{align*} (Ux_1-Ux_2)_s&\le  |P(\cdot|s,\hat{\pi}(s))(x_1-x_2)|\\
&= |\sum_{s'}P(s'|s,\hat{\pi}(s))(x_{1,s'}-x_{2,s'})|\\
&=|\omega_s\sum_{s'}P(s'|s,\hat{\pi})(s))\frac{\omega_{s'}}{\omega_s}(x_{1,s'}-x_{2,s'})\frac{1}{\omega_{s'}}|\\
 &\le \omega_s\sum_{s'}P(s'|s,\hat{\pi})(s))\frac{\omega_{s'}}{\omega_s} \max_{s'}\left[|x_{1,s'}-x_{2,s'}|\frac{1}{\omega_{s'}}\right]\\
 &\le \omega_s\gamma \max_{s'}\left[|x_{1,s'}-x_{2,s'}|\frac{1}{\omega_{s'}}\right].
\end{align*} Note that exchanging $x_1$ and $x_2$ results in the same inequality, so that we have 
\[ \left|(Ux_1-Ux_2)_s\frac{1}{\omega_s}\right| \le \gamma \max_{s'}\left[\left|x_{1,s'}-x_{2,s'}\right|\frac{1}{\omega_{s'}}\right]\] and so 
\[ \lVert Ux_1-Ux_2\rVert_{\mathbf{\omega}} \le \gamma \lVert x_1-x_2\rVert_{\mathbf{\omega}},\] where the norm is the weighted supremum norm with weights $\mathbf{\omega} = (\omega_1,\ldots, \omega_N)$. As $\gamma$ does not depend on $x_1$ nor $x_2$ then $U$ is a contraction.
\end{proof}
Bertsekas and Tsitsiklis use this to prove the following theorem, although \citet[Thm.~8.38]{Kallenberg} has a slightly different proof. Both use the monotonicity of $L_{\pi}$, that is if $x\le y$ then $L_{\pi}x\le L_{\pi}y$.

\begin{lemma}[\citet{BertsekasTsitsiklis91}] \label{lem:VIconverges}
For a SSP under \Cref{assu:proper} then the optimal policy $\pi^*$ exists and is stationary and proper. The optimal cost-to-go $J_{\pi^*}$ is the unique fixed point of the operator $U$, and is the limit of $U^k(x)$ for any $x\in \R^N$. Finally, a stationary policy $\pi$ is optimal if and only if $L_{\pi}J_{\pi} = UJ_{\pi}$. 
\end{lemma}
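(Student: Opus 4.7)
The plan is to exploit monotonicity of $U$ together with \Cref{lem:PI} applied to greedy policies, avoiding reliance on $U$ being a contraction, which is only guaranteed when every stationary policy is proper (cf.\ \Cref{lem:properContractionU}).

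To establish existence of a fixed point of $U$, I would fix a proper stationary $\pi_0$ from \Cref{assu:proper} and set $x_0 := J_{\pi_0}$, which is finite and non-negative. By definition of $U$ as a minimum over policies, $U x_0 \le L_{\pi_0} x_0 = x_0$, so monotonicity of $U$ forces $x_k := U^k x_0$ to be componentwise non-increasing, while non-negativity of the one-step costs and substochasticity keep $x_k \ge 0$. The sequence thus converges to some $J^* \ge 0$ with $U J^* = J^*$ by continuity of $U$. Let $\pi^*$ be any greedy policy attaining the minimum in $U J^* = J^*$, so $L_{\pi^*} J^* = J^*$; then \Cref{lem:PI}(i) makes $\pi^*$ proper and $J_{\pi^*} \le J^*$. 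For the reverse inequality, note that $L_\pi J^* \ge U J^* = J^*$ for every stationary $\pi$, and iterating along a proper $\pi$ gives $J_\pi = \lim_{k\to\infty} L_\pi^k J^* \ge J^*$ by \Cref{lem:PI}(ii). Specialising to $\pi^*$ yields $J^* = J_{\pi^*}$, and since improper policies carry an infinite coordinate by \Cref{assu:proper}, $\pi^*$ is optimal.

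For uniqueness of the fixed point, any $x$ with $U x = x$ admits a greedy $\pi$ with $L_\pi x = x$, proper with $J_\pi = x$ by \Cref{lem:PI}, so $x \ge J^*$; conversely, $L_{\pi^*} x \ge U x = x$ iterates up to $J_{\pi^*} \ge x$, forcing $x = J^*$. To push convergence to an arbitrary starting point $x \in \R^N$, I would squeeze $x$ between two initial vectors with controlled behaviour under $U$. Let $e$ denote the all-ones vector in $\R^N$ and take $M > 0$ so large that $y_- := -M e \le x \le J_{\pi_0} + M e =: y_+$. Substochasticity of $P_{\pi_0}$ gives $U y_+ \le L_{\pi_0} y_+ = J_{\pi_0} + M P_{\pi_0} e \le y_+$, and non-negativity of costs combined with substochasticity yields $(U y_-)_s \ge \min_a [\, c(s,a) - M \sum_{s'} P(s'|s,a) \, ] \ge -M = (y_-)_s$. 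Monotonicity of $U$ then makes $U^k y_+$ non-increasing and $U^k y_-$ non-decreasing; both sequences are bounded and so converge to fixed points of $U$, which coincide with $J^*$ by uniqueness. The sandwich $U^k y_- \le U^k x \le U^k y_+$ delivers $U^k x \to J^*$.

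Finally, the optimality criterion follows quickly: if $\pi$ is optimal then $J_\pi = J^*$ is finite, $\pi$ must be proper, and $L_\pi J_\pi = J_\pi = J^* = U J^* = U J_\pi$; conversely the identity $L_\pi J_\pi = U J_\pi$ requires $J_\pi$ to be finite, hence $\pi$ is proper, $L_\pi J_\pi = J_\pi$ by \Cref{lem:PI}(ii), and combined with the hypothesis this gives $U J_\pi = J_\pi$, so $J_\pi = J^*$ by uniqueness. The step I expect to be the main obstacle is the global convergence $U^k x \to J^*$ for arbitrary $x$: without $U$ being a contraction the one-line Banach argument is unavailable, and the monotone-sandwich construction above is the essential workaround.
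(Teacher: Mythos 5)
The paper does not actually prove this lemma: it states it as a citation to \citet{BertsekasTsitsiklis91} (and \citet{Kallenberg}), remarking only that those proofs rest on the monotonicity of $L_{\pi}$. Your proposal therefore cannot be compared line-by-line against an in-paper argument, but it is a complete and essentially correct reconstruction in exactly the spirit the paper alludes to: a monotone/sandwich argument built on \Cref{lem:PI} rather than on the contraction property of \Cref{lem:properContractionU} (which would be unavailable here, since \Cref{assu:proper} does not make every stationary policy proper). The individual steps check out: $Ux_0\le x_0$ for $x_0=J_{\pi_0}$ gives a non-increasing, non-negative sequence converging to a fixed point $J^*$; the greedy policy at $J^*$ is proper with $J_{\pi^*}\le J^*$ by \Cref{lem:PI}(i); the reverse inequality $J_{\pi}\ge J^*$ for every proper stationary $\pi$ follows from $L_{\pi}J^*\ge UJ^*=J^*$, monotonicity of $L_{\pi}$, and \Cref{lem:PI}(ii); uniqueness and the two bracketing sequences $y_{\pm}$ (whose boundedness follows from $y_-\le U^k y_-\le U^k y_+\le y_+$) are handled correctly; and the optimality criterion is a routine consequence.

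The one genuine gap is the scope of the optimality claim. The paper defines policies as maps $\pi:\Sc\times\mathcal{T}\to\Ac$, i.e.\ possibly time-dependent, and the lemma asserts that \emph{the} optimal policy exists and is stationary. Your argument establishes $J^*\le J_{\pi}$ only for stationary $\pi$ (proper ones via the iteration argument, improper ones dismissed by the infinite coordinate from \Cref{assu:proper}); it does not rule out a non-stationary policy achieving a strictly smaller cost-to-go at some state. Closing this requires the additional standard step (as in \citet{BertsekasTsitsiklis91}) of bounding the $T$-stage cost of an arbitrary, possibly history-dependent policy below by iterates of $U$ applied to a suitable terminal vector, e.g.\ showing $\Eb\bigl[\sum_{t=1}^{T}c(s_t,a_t)\bigr]\ge (U^{T}\mathbf{0})(s_{init})$ and letting $T\to\infty$. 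Given that the paper immediately restricts attention to stationary policies afterwards, this is a contained omission, but as written your proof proves a slightly weaker statement than the one claimed.
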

Note that if the cost-to-go is known for a policy $\pi$ then one can find a corresponding policy with that cost-to-go by acting \emph{greedily}. That is we take \[\pi(s) \in \argmin_{a\in\Ac} c(s,a) + \sum_{s'\in S} P(s'\mid s,a) J_{\pi}.\]

As the optimal policy of an SSP is stationary, we restrict our attention to stationary policies only in the sequel.

We note that solving for the fixed points of $L_{\pi}$ can proceed as follows: \[L_{\pi_0}x=x = c_{\pi_0}+P_{\pi_0} x \quad \Leftrightarrow \quad (I-P_{\pi_0})x = c_{\pi_0}.\] As $P_{\pi_0}$ has spectral radius less than 1 then $(I-P_{\pi_0})$ is invertible and $x = (I-P_{\pi_0})^{-1}c_{\pi_0}$ is the fixed point.

This result and the above lemmas allow for algorithms to solve for the optimal policy $\pi$ and corresponding value vector $v$ in a SSP with known transition probabilities. We summarise the main algorithms here, adapted from \citet{Kallenberg}.

\subsubsection*{Policy Iteration for known SSP}
\begin{algorithmic}
\Require State space \(S\), action space \(A\), proper policy $\pi_0$, instance of a known SSP.

\State \!\!\!\!\!\!{\bf output:} Optimal policy $\pi$, with value vector $v$.
\Ensure Set $x\leftarrow J_{\pi_0}$, which is the unique solution to $L_{\pi_0}x=x$.
\State {\bf Step 1}

Set $y_s \leftarrow \min_{a}\{ c(s,a) +\sum_{s'\in\Sc}P(s'|s,a)x_{s'}\}$ for $s=1,\ldots, N$.

\State {\bf Step 2}
\If{ $||y-x||=0$}
\State Set $v\leftarrow y$
\State Set $\pi(s)\leftarrow \argmin_a\{c(s,a) +\sum_{s'\in\Sc}P(s'|s,a)x_{s'}\}$ for $s=1,\ldots, N$.
\State {\bf STOP}
\Else{}
\State Set $\pi_0(s)\leftarrow \argmin_a\{c(s,a) +\sum_{s'\in\Sc}P(s'|s,a)x_{s'}\}$ for $s=1,\ldots, N$.
\State Set $x\leftarrow J_{\pi_0}$, the unique solution to $L_{\pi_0}x=x$. 
\State {\bf RETURN to Step 1}
\EndIf
\end{algorithmic}

Note that policy iteration is equivalent to the simplex algorithm applied to the linear program in \Cref{LP dualKnown} as in \citet[pg.~621]{Kallenberg}. Notably, variants of policy iteration give variants of solution methods for this linear program.

\subsubsection*{Value Iteration for known SSP}
\begin{algorithmic}
\Require State space \(S\), action space \(A\), proper policy $\pi_0$, instance of a known SSP, scalar $\epsilon>0$.
\State \!\!\!\!\!\!{\bf output:} Near optimal policy $\pi$, with approximate value vector $v$.
\Ensure Set $x\leftarrow J_{\pi_0}$, which is the unique solution to $L_{\pi_0}x=x$.
\State {\bf Step 1}

Set $y_s \leftarrow \min_{a}\{ c(s,a) +\sum_{s'\in\Sc}P(s'|s,a)x_{s'}\}$ for $s=1,\ldots, N$.

\State {\bf Step 2}
\If{ $||y-x||\le \epsilon$}
\State Set $v\leftarrow y$
\State Set $\pi(s)\leftarrow \argmin_a\{c(s,a) +\sum_{s'\in\Sc}P(s'|s,a)x_{s'}\}$ for $s=1,\ldots, N$.
\State {\bf STOP}
\Else{ }
\State Set $y\leftarrow x$
\State {\bf RETURN to Step 1}
\EndIf
\end{algorithmic}

The previous lemmas ensure that both of these algorithms converge. Value iteration is essentially just iterating the operator $U$ to find its fixed point to within a certain tolerance, while policy iteration incrementally finds a series of policies using the operator $L_{\pi}$. There are other, more efficient versions of value iteration such as the Gauss-Seidel variant, see \citet[\S3.6,\S8.9]{Kallenberg}, \citet[\S6.3]{Puterman} and \citet[\S1.3.2]{BertsekasVol2} for further details.

\subsection{Relationship to linear programs}\label{subsec:UtoLP}
In this section we show how considering fixed points of $U$ leads us to a set of primal/dual linear programs and how to interpret them. We are guided by \cite[\S3.5]{Kallenberg} who considers the discounted horizon case, although we deviate with adaptions to costs (rather than rewards) and the SSP setting. See also \citet[\S6.9]{Puterman}. We extend on this in \Cref{subsec:UtoCP}.

Seeking a fixed point of $U$, we see that  
\[(Ux)_s =x_s = \min_a \left\{ c(s,a) + \sum_{s'\in S} P(s'|s,a)x_{s'}\right\} \le  c(s,a) + \sum_{s'\in S} P(s'|s,a)x_{s'} \quad \forall s\in \Sc, a\in \Ac.\]

\begin{defn}
The set of vectors $x\in\R^N$ such that \[x_s \le  c(s,a) + \sum_{s'\in S} P(s'|s,a)x_{s'} \quad \forall s\in \Sc, a\in \Ac\] are called \emph{superharmonic} (with respect to the operator $U$) as in \cite[\S3.5]{Kallenberg}. 
\end{defn}

\begin{lemma}
The fixed point of $U$ can be found by solving the following linear program
\begin{align}
    \max &\sum_{s=1}^N x_s \label[Program]{LP primalKnown}\\\nonumber
    & \text{such that} \\\nonumber
     x_s \le  c(s,a) &+ \sum_{s'\in S} P(s'|s,a)x_{s'} \quad \forall s\in \Sc, a\in \Ac.
\end{align}
\end{lemma}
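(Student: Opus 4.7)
The plan is to verify that the unique fixed point $J^*$ of $U$ (whose existence is guaranteed by \Cref{lem:VIconverges}) is both feasible for \Cref{LP primalKnown} and dominates every other feasible vector componentwise; since the objective $\sum_s x_s$ is monotone under componentwise comparison of nonnegative vectors, this will force $J^*$ to be optimal.

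First, I would check feasibility. By definition $J^*_s = (UJ^*)_s = \min_a\{c(s,a) + \sum_{s'} P(s'|s,a) J^*_{s'}\}$, so in particular $J^*_s \le c(s,a) + \sum_{s'} P(s'|s,a) J^*_{s'}$ for every $(s,a)$, which is exactly the system of constraints in \Cref{LP primalKnown}. Hence $J^*$ is feasible. (As noted in the proof of \Cref{lem:PI}, $J^*\ge 0$, so the objective value $\sum_s J^*_s$ is well-defined and nonnegative.)

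Next, I would show that for any feasible $x$ we have $x \le J^*$ componentwise. The constraints say exactly that $x_s \le c(s,a) + \sum_{s'} P(s'|s,a) x_{s'}$ for every action $a$, so taking the minimum over $a$ yields $x \le Ux$. I then invoke the monotonicity of $U$: because the transition probabilities are nonnegative, $y \le z$ implies $L_\pi y \le L_\pi z$ for each $\pi$, and taking the minimum over $\pi$ preserves the inequality, giving $Uy \le Uz$. Iterating, $x \le Ux \le U^2 x \le \cdots \le U^k x$ for every $k$. By \Cref{lem:VIconverges}, $U^k x \to J^*$ as $k\to\infty$, so passing to the limit gives $x \le J^*$. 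Summing the coordinates yields $\sum_s x_s \le \sum_s J^*_s$, which combined with feasibility of $J^*$ shows that $J^*$ attains the maximum and is thus an optimal solution to \Cref{LP primalKnown}.

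The argument is essentially routine once one has the monotonicity of $U$ and the convergence of value iteration in hand, so no single step should pose a serious obstacle. The most delicate point is recording the monotonicity of $U$ explicitly, since it has not been isolated as a lemma in the excerpt; I would state it in one line inside the proof. One should also note that the LP does not automatically have a unique optimiser (any superharmonic vector equal to $J^*$ is optimal), but uniqueness is not claimed in the lemma statement: we only need to exhibit \emph{an} optimal solution equal to the fixed point of $U$, and the argument above does precisely that.
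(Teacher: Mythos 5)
Your proof is correct, but it takes a genuinely different route from the paper's. The paper argues policy-by-policy: for a feasible (superharmonic) $x$ it rewrites the constraint as $c_\pi \ge (I-P_\pi)x$, multiplies by the nonnegative matrix $(I-P_\pi)^{-1}=\sum_{t\ge 0}P_\pi^t$ to get $J_\pi \ge x$ for every stationary $\pi$, and then minimises over $\pi$. You instead observe that the constraints say $x\le Ux$, invoke monotonicity of $U$ to get $x\le U^kx$ for all $k$, and pass to the limit using the convergence of value iteration from \Cref{lem:VIconverges}. Your route is cleaner in one respect: the paper's Neumann-series step quietly assumes $(I-P_\pi)^{-1}$ exists and is nonnegative for \emph{every} stationary policy, which under \Cref{assu:proper} is only guaranteed for proper ones; your argument sidesteps this entirely by leaning on the already-established convergence of $U^k$. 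What the paper's approach buys in exchange is the explicit identity $J_\pi=(I-P_\pi)^{-1}c_\pi$, which is reused later when interpreting the dual program. One small remark: your closing caveat about non-uniqueness of the optimiser is unnecessarily cautious — since you have shown every feasible $x$ satisfies $x\le J^*$ componentwise, any $x$ with $\sum_s x_s=\sum_s J^*_s$ must equal $J^*$, so the optimal solution is in fact unique, exactly as the paper asserts.
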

\begin{proof}
We first show that $x\le J_{\pi^*}$ for all superharmonic $x$.

As there is a norm such that $\rVert P_\pi\lVert <1$ then we see that 
\[\left\lVert \sum_{t=0}^{\infty}P_{\pi}^t\right\rVert \le  \sum_{t=0}^{\infty}\lVert P_{\pi}\rVert = \frac{1}{1-\lVert P_{\pi}\rVert}\] so $\sum_{t=0}^{\infty}P_{\pi}^t$ converges and then $(I-P_\pi)^{-1} = \sum_{t=0}^{\infty}P_{\pi}^t\ge 0$. 
Then for a super harmonic $x$ we must have 
\begin{align*}
   x &\le  c_\pi+  P_\pi x \quad \forall \pi\\
  \Leftrightarrow \quad  c_{\pi}&\ge (I- P_\pi )x\\
  \Leftrightarrow \quad  J_{\pi}= (I-P_\pi)^{-1} c_{\pi}&\ge (I-P_\pi)^{-1} (I- P_\pi )x = x \quad \forall \pi
\end{align*}
Minimising over $\pi$ we see that $x\le J_{\pi^*}$ for all superharmonic $x$.

We know that taking the maximum elementwise over the possible superharmonic $x$ is attainable by the optimal policy cost-to-go $J_{\pi^*}$, and it is clear that any superharmonic $x$ that attains this maximum is the fixed point of $U$, so it must be $J_{\pi^*}$ by uniqueness. 

So finding a fixed point of $U$ is equivalent to finding an superharmonic $x\in \R^N$ that is maximum in every element, and is therefore equivalent to maximising the sum of its elements. This means the fixed point of $U$ is the unique solution to \Cref{LP primalKnown}.
\end{proof}

Applying Lagrange multipliers $q(s,a)$ as in \citet[\S5.1.2]{Boyd} we can determine the Lagrange dual function of \Cref{LP primalKnown} as follows
\begin{align*}
    L(q) &=\max_x\sum_{s=1}^N x_s-\sum_{s,a}q(s,a)(x_s -  c(s,a) - \sum_{s'\in S} P(s'|s,a)x_{s'})\\
    & = \max_x\sum_{s=1}^N x_s(1-\sum_a q(s,a)+\sum_{s',a}q(s',a)P(s|s',a)) +\sum_{s,a}q(s,a)c(s,a)\\
\end{align*} 
Taking the minimum over $q(s,a)\ge 0$ then we have the dual linear program of \Cref{LP primalKnown}
\begin{align}
    \min_{q(s,a)\ge 0} &\sum_{s,a}q(s,a)c(s,a) \label[Program]{LP dualKnown}\\\nonumber
    & \text{such that} \\\nonumber
     \sum_{a}q(s,a)&= 1+\sum_{s',a}q(s',a)P(s|s',a) \quad \forall s\in \Sc.
\end{align}
As \Cref{LP primalKnown} always has an optimal solution and linear programs exhibit strong duality \citet[\S5.2.4]{Boyd} then \Cref{LP dualKnown} also always has an optimal solution. \citet[pg.~621]{Kallenberg} shows that policy iteration is equivalent to the simplex algorithm applied to \Cref{LP dualKnown}.

\subsubsection{Interpretation of the dual program}\label{subsubsec:occupancy}
The dual variables $q(s,a)$ of \Cref{LP dualKnown} can be interpreted as a form of \emph{occupancy measure}, where
\[ q(s,a) = N\sum_{t=0}^\infty\mathbb{P}(\text{being in state $s$ and performing action $a$ at time $t$ under policy $\pi$}).\] Here for any $s\in \Sc$ we consider the probability of $s=s_{init}$ to be equal to $1/N$. Then minimising $\sum_{s,a} c(s,a)q(s,a)$ is equivalent to minimising the expected cost of the sequences of states and actions under policy $\pi$. In addition, the constraint  \[\sum_{a}  q(s,a) = 1+ \sum_{a} \sum_{s'} P(s\mid s',a) q(s',a)\]
can be derived from considering under policy $\pi$ that
\begin{align*}
\frac{1}{N}&\sum_aq(s,a) \\
&=\sum_{t=0}^\infty\mathbb{P}(\text{achieving state $s$ at time $t$})\\
 & =\mathbb{P}(s_0=s)+\\
 &\sum_{s',a'}\sum_{t=1}^\infty\mathbb{P}(\text{achieving state $s$ at time $t$, and achieving state $s'$ and taking action $a'$ at time $t-1$})\\
 & =\mathbb{P}(s_0=s)+\\
 &\sum_{s',a'}\sum_{t=0}^\infty\mathbb{P}(\text{being in state $s$ at time $t$, and being in state $s'$ and taking action $a'$ at time $t$})\\
 &= \frac{1}{N}+\sum_{s',a'}\sum_{t=0}^\infty P(s\mid s',a') \mathbb{P}(\text{achieving state $s'$ and taking action $a'$ at time $t$})\\
  &= \frac{1}{N}\left(1+\sum_{s',a'}P(s\mid s',a')q(s',a')\right)
\end{align*}
 for all $s\in \mathcal{S}, a\in \mathcal{A}$.
 
Note that for any feasible solution $q(s,a)$ to \Cref{LP dualKnown} then we have \[\sum_a q(s,a)= 1+\sum_{s',a}q(s',a)P(s|s',a)\ge 1.\] Then solving \Cref{LP dualKnown} for the optimal $q(s,a)$, the corresponding optimal policy $\pi$ is given by \begin{equation}\pi(s)=\frac{q(s,a)}{\sum_a q(s,a)}.\label{eqn:qtopi}\end{equation} \citet[\S 3.5]{Kallenberg} shows precisely how this can be derived using complementary slackness.

\section{Stochastic shortest paths with unknown transitions}\label{sec:unknownMDPs}

In this section, we focus on finding optimal policies for SSPs when the transitions $P$ are unknown. We assume that an agent may interact with the SSP and record data from the visits to different states, that is the algorithm is performed \emph{online}. In this setting, algorithms are studied that seek to \emph{explore} the SSP to \emph{learn} the model parameters and also \emph{exploit} their findings by applying policies that are optimal with respect to their new found knowledge. 

The algorithms vary by how they trade off between further exploration versus current exploitation. The differences between the algorithms can be subsequently measured by understanding the properties of the \emph{regret}. This measures the distance between expected costs following the true optimal policy over multiple episodes versus the expected costs following the policies implemented by the algorithm over the same number of episodes. By episodes, we mean an agent engaging with the SSP by starting at an initial state $s_{init}$ and continuing until the goal state is found.  

The regret over $K$ episodes is defined as
\[R_{K} = \sum_{k=1}^K \sum_{t=1}^{T^k}c(s_t^k,a_t^k)-K\min_{\pi}J_{\pi}(s_{init}). \] Here $T^k$ is the time taken to complete episode $k$, and $(s_t^k,a_t^k)$ is the state and action taken at time $t$ in episode $k$ under the algorithm studied. Intuitively, algorithms that do not achieve the right balance between exploring and exploiting will incur large regret, and the regret will not reduce efficiently as the number of episodes increases. Algorithms that are improving as they iterate should incur sub-linear regret, showing that the regret of each episode is decreasing on average. This behaviour may not be observed initially as the algorithm may seek to explore without trying to minimise regret before exploiting the knowledge --- however, after this exploration period, the regret becomes sub-linear. 

A particular goal of an SSP algorithm with unknown parameters is to minimise the regret of an algorithm. The regret of any algorithm is not expected to be known explicitly, but usually is able to be bounded with some high probability, with the bound depending on various parameters such as the number of episodes and often parameters dependent on the MDP.  Minimising the regret of an algorithm is often referred to as achieving the minimax bound as in \citet{Cohen2021}, i.e. minimising the maximum/worst-case regret. \citet{Cohen2021} show this bound is $\tilde{O}(\sqrt{(B_*^2+B_*)|\mathcal{S}||\mathcal{A}|K})$ for $B_*>1$ improving on \citet{rosenbergCohen}. 

Other SSP research goals include reducing the dependence of the bound and the algorithm on parameters. These parameters include $c_{\min}>0$, a lower bound on the costs; $B_*$, an upper bound on the expected cost of the optimal policy; the diameter of the policy; the size of the action and state spaces; $T_*$ an upper bound on the time to reach the goal state until the optimal policy and $K$ the number of episodes (iterations) of the algorithm. See \citet[\S1]{Tarbouriech2021} for further discussion on recent SSP research goals.

\subsubsection*{Structure of general SSP algorithm with unknown $P$ but known $c$}
\begin{algorithmic}
\Require State space \(S\), action space \(A\), instance of a known SSP, initial state $s_{init}$ and goals state $g$, maximum episode number $K$, trigger point parameters.
\Ensure Set $s\leftarrow s_{init}$, regret $R_i = 0$ for $i=1,\ldots, K$, $R_0=0$
\For {i = 1,2, \ldots, K}
 Set $s\leftarrow s_{init}$, $R_{k} \leftarrow R_{k-1}$\\
{\bf while} $s\ne g$
\State {\bf Step 1} Determine new position $s'$ by following some fixed policy
\State {\bf Step 2} Record information gathered from moving to this new state (LEARN)
\State {\bf Step} Determine if trigger point is activated
\If{trigger point is activated}
Use all information gathered so far to update the policy used at Step 1. (EXPLOIT)
\EndIf

Update $R_{k} \leftarrow R_{k}+c(s',a)$.

Update $s\leftarrow s'$.\\
{\bf end while}
\EndFor
\end{algorithmic}

A very simple algorithm for SSPs is an ultimate \emph{greedy} algorithm that proceeds as follows:

\subsubsection*{Ultimate greedy algorithm for SSP with unknown $P$ but known $c$}
\begin{algorithmic}
\Require State space \(S\), action space \(A\), instance of a known SSP, initial state $s_{init}$ and goals state $g$, probability $\epsilon \in (0,1)$, maximum episode number $K$.
\Ensure Set $s=s_{init}$, regret $R_i = 0$ for $i=1,\ldots, K$, $R_0=0$
\For {i = 1,2, \ldots, K}
 Set $s=s_{init}$, $R_{k} \leftarrow R_{k-1}$\\
{\bf while} $s\ne g$
\State {\bf Step 1} Set $a_{\min} = \argmin_{a\in \Ac(s)}c(s,a)$
\State {\bf Step 2} Choose action $a$ by selecting $a_{\min}$ with probability $(1-\epsilon)$, and otherwise uniformly select $a$ as one of the other actions to take. Identify the next state $s'$.

Update $R_{k} \leftarrow R_{k}+c(s',a)$.

Update $s\leftarrow s'$.\\
{\bf end while}
\EndFor
\end{algorithmic}
Here, no learning is undertaken at any stage, it is purely exploitative on the current known costs, and this algorithm is not even guaranteed to terminate unless all policies are proper. The regret is linear. No estimates of any model parameters are generated.

In general, algorithms are divided into two categories, model-free and model-based. Model-free algorithms for SSP such as \emph{Q-learning} have been discussed in \citet{YuBertsekas2013} and more recently in \citet{Chen2021} and we do not cover further here. See also \citet[\S6.5]{BartoSutton}, \citet{Watkins1989} and \citet{Watkins1992} for further details.

Since model-based algorithms rely on estimation, there are both Bayesian and frequentists approaches and Bayesian SSP algorithms have been recently studied in \citet{JJahromi2021}. We instead focus on frequentist approaches with known costs, such as in \citet{rosenbergCohen}. One aspect of this research is adapting the Bellman operators to \emph{optimistic} versions that are used to solve for subsequent actions --- these operators allow for some degree of tolerance around the estimated parameters and seek to be optimal within this tolerance. Another aspect involves adapting the algorithm parameters to ensure appropriate amounts of exploration versus exploitation in order to minimise the regret. 

We note that most of the SSP frequentist approaches tend to adapt value iteration approaches with modified operators $U$, along the same lines as the extended value iteration of \citet{jaksch10a}. We have not so far encountered policy iterations methods applied to unknown SSPs, such as those studied for discounted horizon methods in \citet{Kaufman2011} or finite horizons in \citet{Auer2006}.

In general, the costs may be chosen to be known (as in our case and \citet{rosenbergCohen}), unknown and requiring estimation as in \citet{Chen2021}, or adversarial as in \citet{Neu2012,Rosenberg2020, ChenLuo2021}. The adversarial setting is quite distinct, while the known and unknown costs cases have similar approaches.

Motivations for algorithms for unknown SSPs include the upper confidence bounds of bandit algorithms and their extensions, as well as other algorithms devised for MDPs with different horizon settings, including discounted, finite or average horizons. \citet{Cohen2021} and \citet{Chen2021} explicitly show that finite horizon methods satisfying certain properties can be used to create algorithms for SSPs by reducing SSPs to finite-horizon approximations.  

We are also motivated by finite-horizon perspectives, particularly those of \citet{neuPikeBurke}. They consider finite-horizon MDP algorithms with unknown transition functions from the perspective convex-duality. From their viewpoint, finite horizon algorithms' operators differ in the terms of a metric or divergence used to determine the tolerance from the estimated parameters. Using convex optimisation, they show that optimistic model-based value iteration related algorithms (which they deem \emph{value-optimistic}) are dual to model-based policy iteration related algorithms (which they deem \emph{model-optimistic}). They also consider how bounds on these operators can lead to more efficient algorithms with straight-forward regret bounds. 

We now recall extended value iteration from \citet{jaksch10a} and its application to unknown SSPs as in \citet{rosenbergCohen}, before we consider how this relates to convex optimisation in a similar way to \citet{neuPikeBurke}.

\subsection{Extended value iteration} \label{subsubsec:EVI}
The particular algorithms that we are interested are versions of extended value iteration (EVI) algorithms. Extended value iteration was first described in \citet{jaksch10a} for undiscounted infinite horizon problems. Here the operator $U$ is adapted to $\hat{U}$ where a minimum is taken over transition functions in some feasible set. This gives 
\[ (\hat{U}x)_s = \min_{a\in \mathcal{A}} \left\{ c(s,a) + \min_{\tilde{P}\in \mathcal{P}} \left\{ \sum_{s'\in \mathcal{S}} \tilde{P}(s'|s,a) x_{s'}\right\} \right\}.\]

Iterating the operator $\hat{U}$ converges provided the set $\mathcal{P}$ is compact, as this is equivalent to value-iteration for an \emph{extended} SSP MDP (the original MDP with an extended compact action space) as described  \citet[\S3.1]{jaksch10a}. This converges to its fixed point $\hat{J}^*$ by the version of \Cref{lem:VIconverges} for compact actions sets as in \citet{BertsekasTsitsiklis91}. This $\hat{J}^*$ can be considered an \emph{optimistic} cost-to-go, as we discuss below. First we give an outline of the EVI algorithm adapted from \citet{rosenbergCohen}.

\subsubsection*{Extended value iteration for SSPs with unknown $P$ but known $c$}
This algorithm is adapted from \citet[Alg.~1,Alg.~2]{rosenbergCohen}. It is motivated by the UCRL2 algorithm in \citet[Fig.~1]{jaksch10a}. See also \citet[Alg.~1]{Tarbouriech2019}.
\begin{algorithmic}
\Require State space \(S\), action space \(A\), instance of a known SSP, initial state $s_{init}$ and goals state $g$, bound of cost-to-go of optimal policy $B_*$, confidence parameter $\delta$.
\Ensure Set $s\leftarrow s_{init}$, regret $R_i = 0$ for $i=1,\ldots, K$, $R_0=0$. $\forall (s,a,s
')\in S\times A \times S, N(s,a,s')\leftarrow 0, N(s,a) \leftarrow 0$, arbitrary policy $\tilde{\pi}$, $t\leftarrow1$
\For{k = 1,2, \ldots}
 Set $s\leftarrow s_{init}$, $R_{k} \leftarrow R_{k-1}$\\
{\bf while} $s\ne g$
\State {\bf Step 1} Determine action $a$ and new position $s'$ by following $\tilde{\pi}$. 
\State {\bf Step 2} Update $N(s,a,s')\leftarrow 1+N(s,a,s'), N(s,a) \leftarrow 1+ N(s,a)$ (LEARN)
\State {\bf Step} Determine if trigger point is activated, either if $s'=g$ or if $N(s,a)$ too small (using $\delta$ and $B_*$).
\If{trigger point is activated}
Calculate empirical transition functions $\hat{P}$ where $\hat{P}(s,a,s') = N(s,a,s')/\max\{N(s,a),1\}$. Update $\tilde{\pi}$ to the optimistic policy found by iterating $\hat{U}$ over $\tilde{P}\in\mathcal{P}$. (EXPLOIT)
\EndIf

Update $t \leftarrow t+1$.

Update $R_{k} \leftarrow R_{k}+c(s',a)$.

Update $s\leftarrow s'$.\\
{\bf end while}
\EndFor
\end{algorithmic}

The set $\mathcal{P}$ is usually defined as a convex set of the form 
\[\mathcal{P}_{\epsilon} = \{ \tilde{P} \in \Delta \mid D(\tilde{P}(\cdot|s,a),\hat{P}(\cdot|s,a) \le \epsilon(s,a)\}\] where $\Delta$ is the set of all feasible transition probabilities $P$ for the MDP and $\hat{P}$ is some transition probability that has been estimated from available data. 
The function $D:\Delta_{s,a}\times\Delta_{s,a}\to [0,\infty)$ is some appropriate divergence or metric on the space of feasible transition probabilities from the state/action pair $(s,a)$. 

Usually the set $\mathcal{P}_{\epsilon}$ can be considered a confidence region (rather than an interval) of where the true transition probability $P$ is likely to lie, and taking the $\tilde{P}$ that minimises $\sum_{s'\in \mathcal{S}} \tilde{P}(s'|s,a) x_{s'}$ in $\hat{U}x$ is similar to taking an lower bound on the expected cost-to-go. Note that if we formulated our MDP using rewards instead of costs, this would be an upper bound, as is familiar from algorithms such as UCRL2 \cite{jaksch10a}. 

At each episode of the algorithm, an $\epsilon$ is decided upon and a given $\hat{P}$ is determined, then $\hat{U}x$ it iterated until it converges to $\hat{J}^*$ (within some tolerance). Then the ``optimal'' policy is chosen by 
\[\pi^*(s) = \argmin_{a\in \mathcal{A}} \left\{ c(s,a) + \min_{\tilde{P}\in \mathcal{P}} \left\{ \sum_{s'\in \mathcal{S}} \tilde{P}(s'|s,a) \hat{J}^*_{s'}\right\}\right\},\] which determines how next to interact with the MDP. This policy can be considered as choosing the actions that are (almost) optimal with respect to an \emph{optimistic} model of the MDP, which is known as the paradigm of \emph{optimisim in the face of uncertainty}.

This leads to obvious questions around the choice of $D$, how to estimate $\hat{P}$ and how to determine $\epsilon$. The majority of the theory so far tends to estimate $\hat{P}$ as the empirical transition function from prior interaction with the MDP, although may tilt this to \emph{optimistically} favour the goal state, say to ensure $\hat{P}$ has all policies proper, as we discuss in \Cref{rem:modifiedP}. See also \citet[eq.~5]{Tarbouriech2019} and \citet[\S A.5]{neuPikeBurke} for examples. 

Different choices of $D$ tend to change the regret of the algorithm, and have included the $\ell_1$-norm as in  \citet{jaksch10a} and \citet[Alg.~1,3]{rosenbergCohen}, and the $\ell_{\infty}$-norm in \citet[Alg.~2]{rosenbergCohen}. An upper bound on the regret can usually be found with some probability that is dependent on the likelihood that the true transition function $P$ is within the set $\mathcal{P}_{\epsilon}$. This probability is tied to the definition of $\epsilon$. The choice of $D$ also determines which concentration inequalities are appropriate to be used in the proof of the regret, such as the Hoeffding bounds for the $\ell_1$ norm and the Bernstein bounds for the $\ell_{\infty}$ norm, see \citet{Boucheron2012} for further details on concentration inequalities.

At each iteration of $\hat{U}$, minimising over $\mathcal{P}$ is equivalent to solving a linear program. If $\mathcal{P}$ has a piecewise linear boundary then $\mathcal{P}$ is a convex polytope and one can find the optimal solution by finding the minimum of the vertices of the region. As the state space and action space size of the MDP increases, this involves increased computational power and may be a deterrent from using this method.

\subsection{Relationship to convex optimisation}\label{subsec:UtoCP}
Our research involves the relationship of the EVI operator $\hat{U}$ to convex optimisation programs, similar to the relationship of $U$ to linear programs detailed in \Cref{subsec:UtoLP}. While we have been guided by the work of \citet{neuPikeBurke} and the known SSP case as in \Cref{subsec:UtoLP}, we have not seen this presentation before for SSPs.

Here we take \[\mathcal{P}_{\epsilon} = \{ \tilde{P} \in \Delta \mid D(\tilde{P}(\cdot|s,a),\hat{P}(\cdot|s,a)) \le \epsilon(s,a)\}.\] We assume $D$ to be a suitable metric or divergence such that positive homogenenity holds, i.e. $D(aP,aP')=aD(P,P')$ for any $a\ge 0$ and $P,P'$ valid probability transition functions. We note that the EVI operators corresponding the $\ell_1$-norm and the supremum norm have been studied in \citet{rosenbergCohen} and \citet{Tarbouriech2019}, however we have not seen any other EVI operators studied in the SSP literature. 

Then we can write the operator $\hat{U}$ as
\begin{align*} 
(\hat{U}x)_s &= \min_{a\in \mathcal{A}} \left\{ c(s,a) + \min_{\tilde{P}\in \mathcal{P}_{\epsilon}} \left\{ \sum_{s'\in \mathcal{S}} \tilde{P}(s'|s,a) x_{s'}\right\} \right\}\\
& = \min_{a\in \mathcal{A}} \left\{ c(s,a) +  \sum_{s'\in \mathcal{S}} \hat{P}(s'|s,a) x_{s'} + \min_{\tilde{P}\in \mathcal{P}_{\epsilon}} \left\{ \sum_{s'\in \mathcal{S}}(\tilde{P}(s'|s,a) - \hat{P}(s'|s,a)) x_{s'}\right\}\right\}\\
& = \min_{a\in \mathcal{A}} \left\{ c(s,a) +  \sum_{s'\in \mathcal{S}} \hat{P}(s'|s,a) x_{s'} + \CB_{\min}(s,a)(x)\right\}
\end{align*}
where 
\[\CB_{\min}(s,a)(x) = \min_{\tilde{P}\in \mathcal{P}_{\epsilon}} \left\{ \sum_{s'\in \mathcal{S}}(\tilde{P}(s'|s,a) - \hat{P}(s'|s,a)) x_{s'}\right\}.\] Note that as $\epsilon$ tends to $0$ then $\CB_{\min}$ also tends to zero.

Then following \Cref{subsec:UtoLP}, for a fixed point $x$ of $\hat{U}$ we have
\begin{align*}
    (\hat{U}x)_s &=x_s = \min_{a\in \mathcal{A}} \left\{ c(s,a) +  \sum_{s'\in \mathcal{S}} \hat{P}(s'|s,a) x_{s'} + \CB_{\min}(s,a)(x)\right\}\\
    &\le  c(s,a) +  \sum_{s'\in \mathcal{S}} \hat{P}(s'|s,a) x_{s'} + \CB_{\min}(s,a)(x) \quad \forall s\in \Sc, a\in \Ac.
\end{align*}
This again leads to the definition of superharmonic with respect to $\hat{U}$.
\begin{defn}
The set of vectors $x\in\R^N$ such that 
\[x_s \le  c(s,a) + \sum_{s'\in \mathcal{S}} \hat{P}(s'|s,a) x_{s'} + \CB_{\min}(s,a)(x) \quad \forall s\in \Sc, a\in \Ac\] 
are called \emph{superharmonic} with respect to $\hat{U}$. The set of all such superharmonic vectors is convex, as for any two superharmonic $x_1$, $x_2$ and $\alpha \in [0,1]$ then $\alpha x_1+(1-\alpha)x_2$ is also superharmonic (this uses that the minimum in $\CB_{\min}$ obeys the triangle inequality, e.g. $\min_i(y_i)+\min_i(z_i)\le \min_i(y_i+z_i)$ for any vectors $y,z$.)
\end{defn}
As $\mathcal{P}_{\epsilon}\subset\Delta$ is convex, then as in \Cref{subsec:UtoLP} we can show that $x\le \hat{J}^*$ for all superharmonic $x$, and that the unique fixed point $x$ of $\hat{U}$ is unique superharmonic vector such that  $x = \hat{J}^*$. 

\begin{lemma}
Any superharmonic vector with respect to $\hat{U}$ is superharmonic with respect to $U$ with $P=\hat{P}$. If $J^*$ is the fixed point of $U$ with $P=\hat{P}$ then $\hat{x}_s:=J^*_s + \min_{a\in\Ac} \CB_{\min}(s,a)(x)$ is superharmonic for $\hat{U}$. Then we have 
\[ \hat{x} \le \hat{J}^* \le J^*\] and \[\lim_{\epsilon\to0}\hat{x} = \lim_{\epsilon\to 0}\hat{J}^* =  J^*.\]
\end{lemma}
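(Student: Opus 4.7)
The plan is to handle the three assertions in turn. For the first, the key observation is that $\hat{P}\in\mathcal{P}_{\epsilon}$ since $D(\hat{P},\hat{P})=0\le\epsilon(s,a)$ (as $D$ is a metric or divergence), so substituting $\tilde{P}=\hat{P}$ in the definition of $\CB_{\min}$ gives $\CB_{\min}(s,a)(x)\le 0$ for any $x$. Dropping this nonpositive term from the $\hat{U}$-superharmonic inequality immediately yields the $U$-superharmonic inequality with $P=\hat{P}$, so any $\hat{U}$-superharmonic vector is also $U$-superharmonic.

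For the superharmonicity of $\hat{x}$ (reading the $x$ in the statement as $J^*$), I would write $\delta_s:=\min_a \CB_{\min}(s,a)(J^*)\le 0$ so that $\hat{x}=J^*+\delta$, and then combine the fixed-point inequality $J^*_s\le c(s,a)+\sum_{s'}\hat{P}(s'|s,a)J^*_{s'}$ with $\delta_s\le \CB_{\min}(s,a)(J^*)$ to obtain $\hat{x}_s\le c(s,a)+\sum\hat{P}(s'|s,a)J^*_{s'}+\CB_{\min}(s,a)(J^*)$ for every $(s,a)$. It then remains to show $\sum\hat{P}J^*+\CB_{\min}(s,a)(J^*)\le \sum\hat{P}\hat{x}+\CB_{\min}(s,a)(\hat{x})$, i.e.\ $\CB_{\min}(s,a)(\hat{x})-\CB_{\min}(s,a)(J^*)\ge -\sum_{s'}\hat{P}(s'|s,a)\delta_{s'}$. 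My approach is to pick the minimiser $\tilde{P}^{**}$ of $\CB_{\min}(s,a)(\hat{x})$ as an upper-bound witness at $J^*$, giving the one-sided estimate $\CB_{\min}(s,a)(\hat{x})-\CB_{\min}(s,a)(J^*)\ge \sum_{s'}(\tilde{P}^{**}-\hat{P})(s'|s,a)\delta_{s'}$, and then exploit the substochasticity of $\tilde{P}^{**}$ together with $\delta\le 0$ to close the inequality. This step---handling the concavity rather than linearity of $\CB_{\min}$ in its vector argument, so that the minimiser shifts when moving from $J^*$ to $\hat{x}$---is the main obstacle, and may force a careful use of the positive-homogeneity assumption on $D$ or a tightening of the definition of $\delta$.

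For the chain $\hat{x}\le \hat{J}^*\le J^*$: the right inequality follows by applying the first claim to the (trivially $\hat{U}$-superharmonic) fixed point $\hat{J}^*$, which is therefore $U$-superharmonic with $P=\hat{P}$, and the linear-programming argument of \Cref{subsec:UtoLP} then gives $\hat{J}^*\le J^*$. The left inequality follows from the superharmonicity of $\hat{x}$ established above and the convex-programming analogue of the same argument (sketched after the definition of $\CB_{\min}$ in \Cref{subsec:UtoCP}): iterating $\hat{U}^k\hat{x}$ gives a monotone sequence bounded by $\hat{J}^*$ whose limit is $\hat{J}^*$, whence $\hat{x}\le\hat{J}^*$. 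Finally, for the limits, as $\epsilon\to 0$ the feasible set $\mathcal{P}_{\epsilon}$ collapses to $\{\hat{P}\}$, so $\CB_{\min}(s,a)(\cdot)\to 0$ and therefore $\hat{x}\to J^*$; simultaneously $\hat{U}\to U$ (with $P=\hat{P}$), and since both are contractions by \Cref{lem:properContractionU} under the stated assumptions, continuity of the fixed point under operator perturbation yields $\hat{J}^*\to J^*$. The squeeze $\hat{x}\le \hat{J}^*\le J^*$ then forces all three to coincide in the limit.
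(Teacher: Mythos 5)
Your treatment of the first claim is exactly the paper's: $\hat{P}\in\mathcal{P}_{\epsilon}$ forces $\CB_{\min}(s,a)(x)\le 0$, so dropping this nonpositive term turns $\hat{U}$-superharmonicity into $U$-superharmonicity with $P=\hat{P}$. You have also correctly located the weak point of the middle claim: the elementary chain only produces
\[
\hat{x}_s \le c(s,a)+\sum_{s'}\hat{P}(s'|s,a)J^*_{s'}+\CB_{\min}(s,a)(J^*),
\]
whereas superharmonicity with respect to $\hat{U}$ requires $\hat{x}$, not $J^*$, in the right-hand side; the paper's own proof makes this substitution silently. However, your proposed repair does not close. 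Writing $\delta=\hat{x}-J^*\le 0$ and letting $\tilde{P}^{**}$ attain $\CB_{\min}(s,a)(\hat{x})$, your one-sided estimate reduces the needed inequality to $\sum_{s'}\tilde{P}^{**}(s'|s,a)\,\delta_{s'}\ge 0$; since $\tilde{P}^{**}\ge 0$ and $\delta\le 0$ this holds only when $\delta$ vanishes on the support of $\tilde{P}^{**}$, so the estimate runs in the wrong direction.

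The obstruction is not an artefact of your method: the middle claim, and with it $\hat{x}\le\hat{J}^*$, is false in general. Take $N=1$, a single action, $c=1$, $\hat{P}(s|s,a)=0.5$, $\epsilon=0.1$. Then $J^*=2$ and $\CB_{\min}(x)=-0.1x$ for $x\ge 0$, so $\hat{x}=J^*-0.1J^*=1.8$, while $\hat{J}^*$ solves $\hat{J}^*=1+0.4\hat{J}^*$, giving $\hat{J}^*=5/3<1.8$; directly, $\hat{U}(1.8)=1.72<1.8$, so $\hat{x}$ is not superharmonic for $\hat{U}$. The one-step correction $\min_{a}\CB_{\min}(s,a)(J^*)$ is too small because the perturbation compounds through the recursion; a genuine lower bound needs the accumulated correction, on the order of $\CB_{\min}$ scaled by $(1-\lVert\hat{P}_{\pi}\rVert)^{-1}$. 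Your arguments for $\hat{J}^*\le J^*$ (fixed point of $\hat{U}$ is $\hat{U}$-superharmonic, hence $U$-superharmonic, hence below $J^*$) and your observation that $\CB_{\min}\to 0$ as $\epsilon\to 0$ are sound and match the paper, but both the left inequality and the squeeze-theorem deduction that $\hat{J}^*\to J^*$ rest on the failed middle step; the limit for $\hat{J}^*$ is still true but must be obtained another way, e.g.\ by bounding $J^*-\hat{J}^*$ through the resolvent $(I-\hat{P}_{\pi})^{-1}$ rather than by squeezing against $\hat{x}$.
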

\begin{proof}
As $\CB_{\min}$ is negative, then any superharmonic vector with respect to $\hat{U}$ is superharmonic with respect to $U$ with $P=\hat{P}$. If $J^*$ is the fixed point of $U$ with $P=\hat{P}$ then for each $s\in\Sc$ we have 
\begin{align*}
    J^*_s &\le  c(s,a) + \sum_{s'\in \mathcal{S}} \hat{P}(s'|s,a) J^*_{s'} \quad \forall a\in\Ac,\\
    \Leftrightarrow \quad J^*_s +  \CB_{\min}(s,a)(J^*_s) &\le c(s,a) + \sum_{s'\in \mathcal{S}} \hat{P}(s'|s,a) J^*_{s'} +  \CB_{\min}(s,a)(x) \quad \forall a\in\Ac,\\
        \Rightarrow \quad J^*_s + \min_{a\in\Ac} \CB_{\min}(s,a)(J^*_s) &\le c(s,a) + \sum_{s'\in \mathcal{S}} \hat{P}(s'|s,a) J^*_{s'} +  \CB_{\min}(s,a)(J^*_s) \quad \forall a\in\Ac
\end{align*}
So the vector $\hat{x}_s =J^*_s + \min_{a\in\Ac} \CB_{\min}(s,a)(x)$ is superharmonic for $\hat{U}$ and is a lower bound for $\hat{J}_s^*$, which is a lower bound for $J_s^*$. Then we see that 
\begin{align*}
    \hat{x}_s = \left(J^*_s + \min_{a\in\Ac} \CB_{\min}(s,a)(x)\right) \le \hat{J}^*_s \le J^*_s.
\end{align*}
As $\epsilon$ tends to zero, then $\CB_{\min}$ tends to zero, and therefore $\hat{x}_s$ converges to $J_s^*$ and, by the Squeeze Theorem (see for example \cite[Thm.~3.19]{Rudin1976} or \cite[Chpt.~1~\S~1 Prop.~1.7]{Knapp2016}), so to does $\hat{J}_s^*$.
\end{proof}

Now finding a fixed point of $\hat{U}$ is equivalent to finding an superharmonic $x\in \R^N$ that is maximum in every element, and is therefore equivalent to maximising the sum of its elements. This means that fixed points of $\hat{U}$ can be found by solving the following convex optimisation program
\begin{align}
    \max_{x\in \R^N} &\sum_{s\in\Sc} x_s \label[Program]{CP primalUnknown} \tag{A}\\\nonumber
    & \text{such that}\\\nonumber
     x_s \le  c(s,a) &+ \sum_{s'\in \mathcal{S}} \hat{P}(s'|s,a) x_{s'} + \CB_{\min}(s,a)(x) \quad \forall s\in \Sc, a\in \Ac.
\end{align}
Here we can say that \Cref{CP primalUnknown} is convex by considering the constraint to define a convex region, i.e. the set of all superharmonic vectors.

We can again apply Lagrange multipliers $q(s,a)\ge 0$ as in \citet[\S5.1.2]{Boyd} to determine the Lagrange dual function of \Cref{CP primalUnknown} as follows
\begin{align*}
    L(q) &=\max_{x\in \R^N}\sum_{s=1}^N x_s-\sum_{s,a}q(s,a)\left(x_s -  c(s,a) - \sum_{s'\in S} P(s'|s,a)x_{s'}- \CB_{\min}(s,a)(x)\right)\\
    & = \max_{x\in \R^N}\min_{\tilde{P}\in\mathcal{P}_{\epsilon}}\sum_{s=1}^N x_s\left(1-\sum_a q(s,a)+\sum_{s',a}q(s',a)\tilde{P}(s|s',a)\right) +\sum_{s,a}q(s,a)c(s,a).
\end{align*} 
If we can exchange the maximum and the minimum in this equation (indeed we show we can in \Cref{prop:dualityOPsUnknown}), then taking the minimum over $q(s,a)$ returns the following non-convex program, which we consider to be the dual program to \Cref{CP primalUnknown}
\begin{align}
    \min_{q(s,a)\ge 0,\tilde{P}\in\mathcal{P}_{\epsilon} } &\sum_{s,a}q(s,a)c(s,a) \label[Program]{OP dualUnknown}\tag{B}\\\nonumber
    & \text{such that} \\\nonumber
     \sum_{a}q(s,a)&= 1+\sum_{s',a}q(s',a)\tilde{P}(s|s',a) \quad \forall s\in \Sc.
\end{align} This is non-convex due to the product $q(s',a)\tilde{P}(s|s',a)$ of variables.

We now show that  \Cref{CP primalUnknown} and  \Cref{OP dualUnknown} exhibit strong duality, in the sense that they have the same optimal solution.

\begin{proposition}\label{prop:dualityOPsUnknown}
\Cref{CP primalUnknown} and \Cref{OP dualUnknown} have the same optimal solution whenever there is a feasible solution to \Cref{LP dualKnown} with $P=\hat{P}$. We consider these programs to be dual optimisation programs.
\end{proposition}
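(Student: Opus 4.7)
The plan is to view \Cref{CP primalUnknown} as the LP primal of an extended SSP MDP, and then match \Cref{OP dualUnknown} with the corresponding LP dual, so that the strong duality asserted in the proposition follows from the known case of \Cref{subsec:UtoLP} applied to that extended MDP. Weak duality $\mathrm{val}(\mathrm{A})\le \mathrm{val}(\mathrm{B})$ is immediate from the formal Lagrangian computation above the proposition combined with the minimax inequality $\max_x\min_{\tilde P}\le \min_{\tilde P}\max_x$ applied to the inner $\CB_{\min}$ term; the real task is producing a feasible $(q^*,\tilde P^*)$ for \Cref{OP dualUnknown} whose objective equals $\mathrm{val}(\mathrm{A})=\sum_s \hat J^*_s$.

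Following the extended MDP construction of \citet{jaksch10a} already referenced in \Cref{subsubsec:EVI}, I would introduce $\bar{\Mc}$ with state space $\Sc$, compact extended action set $\bar{\Ac}(s)=\{(a,p):a\in \Ac(s),\; p\in \mathcal{P}_{\epsilon}^{s,a}\}$, cost $\bar c(s,(a,p))=c(s,a)$, and transition $\bar P(s'\mid s,(a,p))=p(s')$. Since $\CB_{\min}(s,a)(x)=\min_{p\in \mathcal{P}_{\epsilon}^{s,a}}\sum_{s'}(p(s')-\hat P(s'\mid s,a))x_{s'}$, the single constraint of \Cref{CP primalUnknown} indexed by $(s,a)$ is equivalent to the family of linear constraints $x_s\le \bar c(s,(a,p))+\sum_{s'}\bar P(s'\mid s,(a,p))x_{s'}$ for all $p\in \mathcal{P}_{\epsilon}^{s,a}$. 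Thus \Cref{CP primalUnknown} is precisely the (semi-infinite) LP primal for $\bar{\Mc}$ in the sense of \Cref{LP primalKnown}, and its optimal value is $\sum_s \hat J^*_s$ by \Cref{lem:VIconverges} applied to $\bar{\Mc}$. The hypothesis that \Cref{LP dualKnown} with $P=\hat P$ is feasible guarantees finiteness of the optimum via the bound $\hat J^*\le J^*$ established in the lemma preceding the proposition.

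Next I would identify \Cref{OP dualUnknown} with the LP dual of $\bar{\Mc}$. Given feasible $(q,\tilde P)$ for \Cref{OP dualUnknown}, concentrating mass $q(s,a)$ at the extended action $(a,\tilde P(\cdot\mid s,a))$ gives a feasible extended dual solution with identical objective. Conversely, given a feasible extended dual $\bar q$, set $q(s,a)=\sum_p \bar q(s,a,p)$ and, whenever $q(s,a)>0$, $\tilde P(\cdot\mid s,a)=\bigl(\sum_p \bar q(s,a,p)\,p(\cdot)\bigr)/q(s,a)$; convexity of $\mathcal{P}_{\epsilon}^{s,a}$ keeps $\tilde P(\cdot\mid s,a)$ inside the feasible set, the flow constraint is preserved, and the linear objective is unchanged. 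Both directions match, so $\mathrm{val}(\mathrm{B})$ equals the LP dual value of $\bar{\Mc}$, and combining with strong LP duality for $\bar{\Mc}$ forces $\mathrm{val}(\mathrm{A})=\mathrm{val}(\mathrm{B})$.

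The main obstacle is lifting the strong LP duality of \Cref{subsec:UtoLP} from finite actions to the compact $\bar{\Ac}(s)$. I would address this either by a density argument --- approximating $\mathcal{P}_{\epsilon}^{s,a}$ by finite subsets, applying finite LP duality, and passing to the limit using compactness of $\mathcal{P}_{\epsilon}^{s,a}$ and continuity of the objective --- or by restricting without loss to the finitely many minimising pairs $(a,p^*)$ that realise $\hat U$ at its fixed point, which turns $\bar{\Mc}$ into a finite MDP for which \Cref{subsec:UtoLP} applies verbatim. A self-contained alternative is to verify Slater's condition for \Cref{CP primalUnknown} directly, by translating a primal optimum $x^*$ for $(\hat P,c)$ to $x^*-\delta\mathbf{1}$ with $\delta>0$ small and exploiting $c_{\min}>0$ together with strictness of $\CB_{\min}$ for $\epsilon>0$, and then to invoke standard convex Lagrangian strong duality.
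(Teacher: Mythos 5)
Your route is genuinely different from the paper's. The paper never touches the extended-MDP picture: it convexifies \Cref{OP dualUnknown} directly via the change of variables $M(s',s,a)=q(s',a)\tilde{P}(s|s',a)$, uses positive homogeneity of $D$ to rewrite the divergence constraint as $D(M(s,a,\cdot),q(s,a)\hat{P}(\cdot|s,a))\le q(s,a)\epsilon(s,a)$, verifies Slater's condition for the resulting convex program \Cref{CP dualKnown2} using the hypothesised feasible solution of \Cref{LP dualKnown} with $P=\hat P$, and then shows the Lagrangians of \Cref{CP dualKnown2} and \Cref{OP dualUnknown} coincide, so that strong duality for the convexified program transfers to the nonconvex one. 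You instead interpret \Cref{CP primalUnknown} as the (semi-infinite) primal LP of the Jaksch-style extended MDP and \Cref{OP dualUnknown} as its dual after averaging over extended actions; the barycentre argument using convexity of $\mathcal{P}_\epsilon^{s,a}$ is correct, and your ``restrict to the minimising pairs $(a,p^*)$ at the fixed point'' device is the cleanest way to reduce to the finite LP duality of \Cref{subsec:UtoLP}. What each approach buys: the paper's stays entirely inside finite-dimensional convex duality and produces the reformulation \Cref{CP dualKnown2} as a usable object in its own right (mirroring Neu--Pike-Burke), at the cost of a somewhat laborious Lagrangian-matching computation; yours reuses the known-SSP machinery and is conceptually transparent, at the cost of having to police the compact action space.

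Two points need tightening. First, your third remedy does not actually close the argument as stated: Slater's condition for \Cref{CP primalUnknown} gives strong duality between \Cref{CP primalUnknown} and its Lagrangian dual $\min_{q\ge0}\max_x\min_{\tilde P}(\cdots)$, but identifying that with \Cref{OP dualUnknown} still requires exchanging $\max_x$ with $\min_{\tilde P}$ --- which is precisely the step the proposition exists to justify. You would additionally need a minimax theorem (Sion's suffices, since the Lagrangian is bilinear and each $\mathcal{P}_\epsilon^{s,a}$ is compact and convex); note also that a far simpler Slater point than $x^*-\delta\mathbf{1}$ is $x=0$, since $\CB_{\min}(s,a)(0)=0$ and $c(s,a)\ge c_{\min}>0$. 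Second, your ``minimising pairs'' reduction quietly needs the finite MDP with transitions $p^*$ (not $\hat P$) to satisfy \Cref{assu:proper} so that \Cref{lem:VIconverges} and the LP duality of \Cref{subsec:UtoLP} apply to it and yield an optimal occupancy measure $q^*$ with value $\sum_s\hat J^*_s$; the stated hypothesis only guarantees this for $\hat P$. The optimistic minimiser shifts mass toward the goal when $\hat J^*\ge0$, so this is plausible, but it should be argued (or assumed) explicitly --- the paper sidesteps the issue because its Slater point lives at $\hat P$ itself.
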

\begin{proof}
Following \citet[lem.~10]{neuPikeBurke} we define the variables 
\[M(s',s,a) = q(s',a)\tilde{P}(s|s',a)\ge 0.\] Using \Cref{eqn:substochastic}, we require \[ \sum_{s}M(s',s,a) \le q(s',a)\] for all $s'\in\Sc$ and $a\in\Ac$. 

Then we have that 
\begin{align*}
    D(\tilde{P}(\cdot|s,a),\hat{P}(\cdot|s,a)) \le \epsilon(s,a)
\end{align*}
is equivalent to
\[
D(M(s,a,\cdot), q(s,a)\hat{P}(\cdot | s,a)) \leq q(s,a)\epsilon(s,a)
\] 
whenever $q(s,a)$ is positive using that $D$ is positive homogeneous.

If we now consider the following convex program
\begin{align}
    \min_{q(s,a)\ge 0,M(s',s,a)\ge 0 } &\sum_{s,a}q(s,a)c(s,a) \label[Program]{CP dualKnown2}\\\nonumber
    & \text{such that} \\\nonumber
     \sum_{a}q(s,a)&= 1+\sum_{s',a}M(s',s,a) \quad \forall s\in \Sc,\\\nonumber
     D(M(s,a,\cdot), q(s,a)\hat{P}(\cdot | s,a)) &\leq q(s,a)\epsilon(s,a),\\\nonumber
     \sum_{s}M(s',s,a) &\le q(s',a),
\end{align} 
we can see that this is equivalent to \Cref{OP dualUnknown}  by considering that any solution to \Cref{OP dualUnknown} is a solution to \Cref{CP dualKnown2} using the definition of $M$, and any solution to \Cref{CP dualKnown2} is a solution to \Cref{OP dualUnknown} by setting 
\[ 
\tilde{P}(s'| s,a) = \begin{cases} \frac{M(s,a,s')}{q(s,a)} & q(s,a)>0\\                                            
\hat{P}(s'| s,a) & q(s,a)=0.
\end{cases}
\]

Such a convex problem has zero duality gap if we can establish that a constraint qualification exists. In this case, we can use Slater's condition \cite[eq.~5.27,\S5.2.3, pg.~226-227]{Boyd}, which says that \Cref{CP dualKnown2} has zero duality gap if there exists is $\hat{M}, \hat{q}$ such that the constraints in  \Cref{CP dualKnown2}  are satisfied and \[D(\hat{M}(s,a,\cdot), \hat{q}(s,a)\hat{P}(\cdot \mid s,a)) < \hat{q}(s,a)\epsilon (s,a).\]
This is true for $\hat{q}(s,a)$ any feasible solution to \Cref{LP dualKnown} with $P(s'\mid s,a)=\hat{P}(s'\mid, s,a)$, and $ M(s',a,s) = \hat{P}(s\mid s',a)\hat{q}(s',a)$. This establishes zero duality gap for the convex dual to \Cref{CP dualKnown2}.

If we take the Lagrangian of \Cref{CP dualKnown2} we have
\begin{align*}
   & \underset{x, \gamma \geq 0, \lambda \geq 0}{\max} \, \underset{q \geq 0, M\ge 0}{\min} \Bigg \{ \sum_{s\in\Sc} \sum_{a\in \Ac} c(s,a) q(s,a) - \sum_{s=1}^N x_s \left( \sum_a q(s,a) - \sum_a \sum_{s' = 1}^N M(s', a,s) - 1 \right) + 
   \\&\quad \sum_{s,a} \gamma (s,a) \left(D(M(s,a, \cdot), q(s,a) \hat{P}(\cdot \mid s,a)) - \sum q(s,a) \right) + \sum_{s',a} \lambda (s',a) \left( \sum_sM(s',a,s) - q(s',a) \right)\Bigg\} \end{align*} which is equal to
   \begin{align*}& \underset{x, \gamma \geq 0, \lambda \ge 0}{\max} \,\underset{q \geq 0, \tilde{P}\ge 0}{\min} \Bigg\{ \sum_{s=1}^N \sum_{a\in A(S)} c(s,a) q(s,a) - \sum_{s=1}^N x_s \left( \sum_a q(s,a) - \sum_a \sum_{s' = 1}^N \tilde{P}(s\mid s',a)q(s',a) - 1 \right) + \\ 
    &\quad \sum_{s,a}\gamma(s,a)q(s,a) \left( D( \tilde{P}(\cdot \mid s,a), \hat{P}(\cdot \mid s,a)) - \epsilon(s,a) \right) + \sum_{s',a} \lambda (s',a)q(s',a) \left( \sum_s\tilde{P}(s'\mid a,s) - 1 \right) \Bigg\} .\end{align*}
    We can see that this is equivalent to 
    \begin{align*}
    & \underset{x, \bar{\gamma} \geq 0,\bar{\lambda}\ge0}{\max}\, \underset{q \geq 0, \tilde{P}\in\mathcal{P}}{\min} \Bigg\{ \sum_{s\in\Sc} \sum_{a\in \Ac} c(s,a)q(s,a) - \sum_{s} x_s \left( \sum_a q(s,a) - \sum_a \sum_{s=1}^N \tilde{P}(s \mid s', a)q(s', a) - 1\right) 
    \\ &+ \quad \sum_{s,a} \tilde{\gamma}(s,a) \left( D(\tilde{P}(\cdot \mid s,a), \hat{P}( \cdot \mid s,a) ) - \epsilon (s,a) \right) + \sum_{s',a} \bar{\lambda} (s',a) \left( \sum_s\tilde{P}(s'\mid a,s) - 1 \right) \Bigg\}
\end{align*}
which is the Lagrangian of \Cref{OP dualUnknown}. This equivalence follows by first checking that any solution to the former gives a solution to the latter by setting $\bar{\lambda}(s,a)=\lambda(s,a)q(s,a)$ and $\bar{\gamma}(s,a)=\bar{\gamma}(s,a)q(s,a)$. Then checking vice-versa by considering that we can set
\[ 
\lambda(s,a)  = \begin{cases} \frac{\bar{\lambda}(s,a)}{q(s,a)} & q(s,a)>0\\
0 & q(s,a)=0,
\end{cases}
\]
and
\[ 
\gamma(s,a)  = \begin{cases} \frac{\bar{\gamma}(s,a,s')}{q(s,a)} & q(s,a)>0\\
0 & q(s,a)=0.
\end{cases}
\] 
Here we note that whenever $q(s,a)=0$ we must have $D(\tilde{P}(\cdot \mid s,a), \hat{P}( \cdot \mid s,a) ) - \epsilon (s,a)\le 0$ and $\sum_s\tilde{P}(s'\mid a,s) - 1\le 0$ for a non-infinite solution, in which case, maximising over $\bar{\lambda}$ and $\bar{\gamma}$ would set both of these parameters to zero.

As the Lagrangian of \Cref{OP dualUnknown} is equivalent to the Lagrangian of  \Cref{CP dualKnown2} then their dual programs must be equivalent.

Therefore we have shown that \Cref{OP dualUnknown} is equivalent to the convex program, \Cref{CP dualKnown2}, that \Cref{CP dualKnown2} has zero duality gap and therefore the same optimal solution as its dual, and its dual is equivalent to \Cref{CP primalUnknown}. Therefore there is no duality gap between \Cref{OP dualUnknown} and \Cref{CP primalUnknown}. 

\end{proof}

In general, we can see that convex programming techniques can be used to solve for the fixed points of the operators. The dual variables $q(s,a)$ can again be considered occupancy measures as in \Cref{subsubsec:occupancy} and the corresponding policy given by \Cref{eqn:qtopi}. The requirement of feasibility in \Cref{prop:dualityOPsUnknown} is equivalent to $\hat{P}$ determining an MDP that satisfies \Cref{assu:proper}. 

We remark that the EVI operator $\hat{U}$ is related to solving \Cref{CP primalUnknown}, however there has been no study in the literature of the dual program in \Cref{OP dualUnknown} nor its related policy iteration methods for SSPs. Such methods have been studied in \citet{Kaufman2011} and \citet{Auer2006} for discounted infinite horizon and finite horizon MDPs respectively, and these approaches could be adapted here. This is possible future research that we do not do here.

Note that \Cref{CP primalUnknown} and the operator $\hat{U}$ are complicated by the presence of $\CB_{\min}(s,a)(x)$. If we can find a suitable bound on $\CB_{\min}$ we may be able to develop nice approximations that are less computationally intensive to compute. We do this in \Cref{sec:bounds}.

\section{\texorpdfstring{Bounds on $\CB_{\min}$}{Bounds on CBmin}}\label{sec:bounds}
Here we study bounds of $\CB_{\min}$. Each bound depends on the metric or divergence used to determine $\mathcal{P}_{\epsilon}$. Many of the examples can be considered the SSP versions of the bounds derived by \citet{neuPikeBurke} for finite horizon MDPs.

\subsection{\texorpdfstring{The $\ell_1$-norm}{The l-1 norm}} \label{subsec:l1norm}
Here we take the $\ell_1$ norm, $D(P_1,P_2) = \lVert P_1-P_2 \rVert_1$. For EVI, this case is studied in \citet[Alg.~1]{rosenbergCohen} and \citet{Tarbouriech2019}.

In this case $\CB_{\min}(s,a)$ reduces to checking which $\tilde{P}$ of a set of points minimise $\sum_{s'}P(s'|s,a)x_{s'}$. As with the simplex algorithm for linear programs, these points are the vertices of the feasible region. This region is the intersection the $\ell_1$ epsilon ball and $\Delta$, the set of all valid transition probabilities, where \[ \Delta =\{P\mid \sum_{s'}P(s'|s,a)\le 1, P(s'|s,a)\ge 0, \forall s',s\in \mathcal{S}, a \in \mathcal{A}\}.\] 

However, due to the non-negativity of $x$, and the non-negativity of $\Delta$, only a subset of $N$ of these points needs to be checked, one for each state. Which of these points end up determining $\tilde{P}$ relies heavily on $x$. This is illustrated in \Cref{fig:ell1normregion}. 

\begin{figure}
    \centering
\definecolor{uuuuuu}{rgb}{0.26666666666666666,0.26666666666666666,0.26666666666666666}
\definecolor{zzqqtt}{rgb}{0.6,0.,0.2}
\definecolor{qqttcc}{rgb}{0.,0.2,0.8}
\definecolor{cqcqcq}{rgb}{0.7529411764705882,0.7529411764705882,0.7529411764705882}
\begin{tikzpicture}[line cap=round,line join=round,>=triangle 45,x=8.0cm,y=8.0cm]
\draw [color=cqcqcq,, xstep=0.8cm,ystep=0.8cm] (-0.242979794833089,-0.23537905057446323) grid (1.0718709439830993,1.0659959120020261);
\draw[->,color=black] (-0.242979794833089,0.) -- (1.0718709439830993,0.);
\foreach \x in {-0.2,0.2,0.3,0.4,0.5,0.6,0.7,0.8,0.9,1.}
\draw[shift={(\x,0)},color=black] (0pt,2pt) -- (0pt,-2pt) node[below] {\footnotesize $\x$};
\draw[->,color=black] (0.,-0.23537905057446323) -- (0.,1.0659959120020261);
\foreach \y in {-0.2,0.2,0.3,0.4,0.5,0.6,0.7,0.8,0.9,1.}
\draw[shift={(0,\y)},color=black] (2pt,0pt) -- (-2pt,0pt) node[left] {\footnotesize $\y$};
\draw[color=black] (0pt,-10pt) node[right] {\footnotesize $0$};
\clip(-0.242979794833089,-0.23537905057446323) rectangle (1.0718709439830993,1.0659959120020261);
\fill[line width=2.pt,color=qqttcc,fill=qqttcc,fill opacity=0.10000000149011612] (0.,1.) -- (0.,0.) -- (1.,0.) -- cycle;
\fill[line width=2.pt,color=zzqqtt,fill=zzqqtt,fill opacity=0.10000000149011612] (0.5,0.4) -- (0.2,0.1) -- (0.5,-0.2) -- (0.8,0.1) -- cycle;
\draw [line width=2.pt,color=qqttcc] (0.,1.)-- (0.,0.);
\draw [line width=2.pt,color=qqttcc] (0.,0.)-- (1.,0.);
\draw [line width=2.pt,color=qqttcc] (1.,0.)-- (0.,1.);
\draw [->,line width=2.pt,color=zzqqtt] (0.5,0.1) -- (0.2,0.1);
\draw [->,line width=2.pt,color=zzqqtt] (0.5,0.1) -- (0.5,0.);
\draw [line width=2.pt,color=zzqqtt] (0.5,0.4)-- (0.2,0.1);
\draw [line width=2.pt,color=zzqqtt] (0.2,0.1)-- (0.5,-0.2);
\draw [line width=2.pt,color=zzqqtt] (0.5,-0.2)-- (0.8,0.1);
\draw [line width=2.pt,color=zzqqtt] (0.8,0.1)-- (0.5,0.4);
\draw [line width=3.6pt] (0.2,0.1)-- (0.3,0.);
\begin{scriptsize}
\draw [fill=black] (0.5,0.1) circle (2.5pt);
\draw[color=black] (0.5665293349944471,0.14001757324567793) node {$\hat{P}_{s,a}$};
\draw[color=zzqqtt] (0.388456577541303,0.1409801286913706) node {$\min\{\epsilon(s,a),0\}$};
\draw[color=zzqqtt] (0.6502716587697096,0.04664969501348898) node {$\min\{\epsilon(s,a),0\}$};
\draw [fill=uuuuuu] (0.2,0.1) circle (2.5pt);
\draw[color=uuuuuu] (0.17476926859753014,0.1506056831482973) node {$Q$};
\draw [fill=uuuuuu] (0.3,0.) circle (2.5pt);
\draw[color=uuuuuu] (0.2613992587098705,-0.039980295098851294) node {$T$};
\end{scriptsize}
\end{tikzpicture}
    \caption{This shows the valid probability transition functions $\Delta$ in the shaded blue region for a given state-action pair $(s,a)\in\mathcal{S}\times\mathcal{A}$, where $|\Sc|=2$. Shaded in red is the region such that the $\ell_1$ norm distance from $\hat{P}_{s,a}$ does not exceed $\epsilon(s,a)$. The bottom left-hand corner of the intersection of these two regions  shows the points $Q$ and $T$ which are the candidates for $\tilde{P}$. Which of these points determines $\tilde{P}$ will depend on $x$.}
    \label{fig:ell1normregion}
\end{figure}
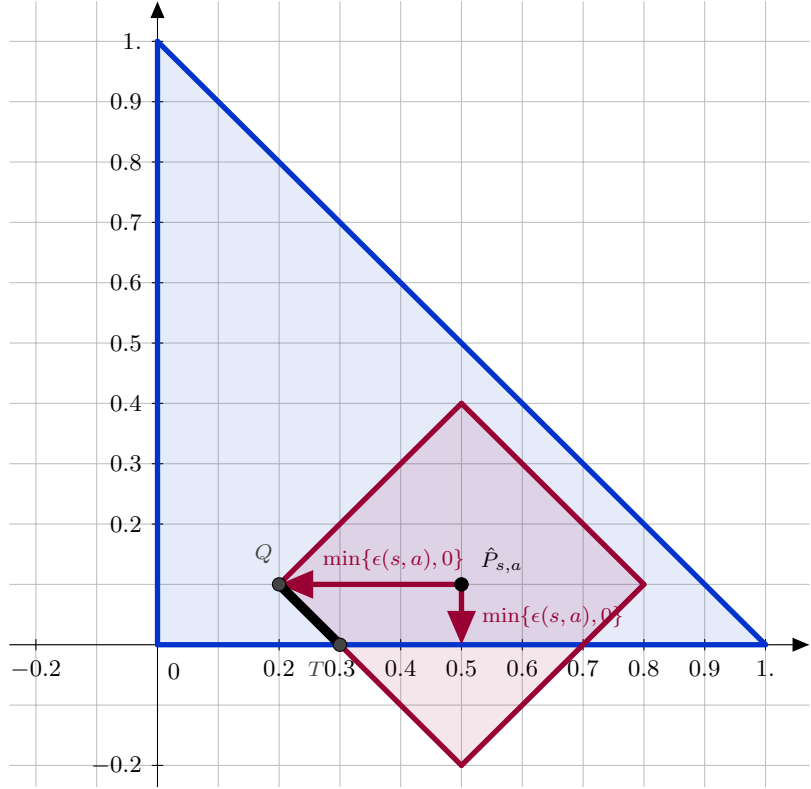

We now describe a lower bound for $\CB_{\min}(s,a)(x)$ when $D(P_1,P_2) = \lVert P_1-P_2 \rVert_1$. We also require that $\tilde{P}\in \Delta$. We then have that
\begin{align*}
\CB_{\min}(s,a)(x) & = \min_{\tilde{P}\in \Delta} \{\langle x,\tilde{P}-\hat{P}\rangle \mid D(\tilde{P},\hat{P}) \leq \epsilon(s,a) \} \\
& =  \max_{\lambda \le 0} \min_{\tilde{P}\geq 0} \{\langle x-\lambda 1,\tilde{P}-\hat{P}\rangle  - \lambda \hat{P}(g|s,a) \mid D(\tilde{P},\hat{P}) \leq \epsilon(s,a) \} \\
& = - \min_{\lambda \le 0} \max_{\tilde{P}\geq 0} \{-\langle x-\lambda 1,\tilde{P}-\hat{P}\rangle  + \lambda \hat{P}(g|s,a)\mid D(\tilde{P},\hat{P}) \leq \epsilon(s,a) \}\\
& \geq - \min_{\lambda \le 0} \max_{\tilde{P}\in \R^S} \{-\langle x-\lambda 1,\tilde{P}-\hat{P}\rangle +  \lambda \hat{P}(g|s,a)\mid D(\tilde{P},\hat{P}) \leq \epsilon(s,a) \}\\
& \geq - \min_{\lambda \le 0} \max_{\tilde{P}\in \R^S, D(\tilde{P},\hat{P})\le \epsilon(s,a)} - - \lVert x-\lambda 1 \rVert_\infty \lVert\tilde{P}-\hat{P}\rVert_1  + \lambda \hat{P}(g|s,a)\\
& \geq -  \min_{\lambda \le 0}\epsilon(s,a) \lVert x-\lambda 1 \rVert_\infty + \lambda \hat{P}(g|s,a)\\
& = - \epsilon(s,a) \ \max(x).
\end{align*}
Note that here we use the notation $\langle,\rangle$ as the euclidean inner product of vectors, and we have dropped the $s,a$ on each of the $\hat{P},\tilde{P}$ so that for example
\[ \langle x,\tilde{P}-\hat{P}\rangle = \sum_{s'\in\Sc} x_{s'}(\tilde{P}(s'|s,a)-\hat{P}(s'|s,a)).\]

To move from line 1 to line 2, we use the constraint that \[\sum_{s'\in\Sc}\tilde{P}(s'|s,a)\le 1 = \sum_{s'\in\Sc}\hat{P}(s'|s,a)+P(g|s,a),\] which results in the term
\[ -\lambda\left(\sum_{s'\in\Sc}\tilde{P}(s'|s,a)  - \sum_{s'\in\Sc}\hat{P}(s'|s,a)-P(g|s,a)\right) = \langle -\lambda 1,\tilde{P}-\hat{P}\rangle - \lambda \hat{P}(g|s,a)  .\] 
To move from line 6 to 7, we use that \[\min_{\lambda\le 0}\lambda \tilde{P}(g|s,a) =\lambda \tilde{P}(g|s,a)\Big{|}_{\lambda = 0} =0\] as $\tilde{P}(g|s,a)\ge 0$ and that  \[\min_{\lambda\le 0} \lVert x - \lambda 1\rVert_{\infty} = \lVert x - \lambda 1\rVert_{\infty}\Big{|}_{\lambda = 0} = \max(x) \] for $x\in \R^N$ and $x\ge 0$ , where the proof of this is in \Cref{appn:proofSpan}. 

\begin{rmk}\label{rmk:includegoalstatel1}
Note that we could include the goal state and define $\Delta$ such that $\sum_{s'}P(s'|s,a)= 1$. Then we would require $\lambda \in \R$ rather than $\lambda \le 0$ in the above equation. This would result in 
\[ \CB_{\min}(s,a)(x) \ge  - \epsilon(s,a) \ \spn(x) ,\]  where $\spn$ is the called the \emph{span} of $x$ and is equal to $\frac{1}{2} (\max(x) - \min(x)) $. This matches the result of \citet[Tab.~1]{neuPikeBurke}. We also consider this case in \Cref{appn:proofSpan}. However, if we are including the goal state, then the cost-to-go of the goal state is zero, and through our analysis we will ensure that $x\ge 0$ so we have $\min(x)=0$ giving the same result as above to a factor of $\frac{1}{2}$.   
\end{rmk}

We define the \emph{inflated exploration bonus} $\CB_{\min}^{\dagger}(s, a)(x)$ by 
\[\CB_{\min}^{\dagger}(s, a)(x)= -\epsilon(s,a) \ \max(x)\] 
for $\alpha>0$ corresponding to the divergence or norm used, as above.

We also note that 
\[\hat{P}(\cdot|s,a)x(\cdot) + {\CB_{\min}}(s, a)(x)=\min_{\tilde{P}\in \Delta_{\hat{P},\epsilon}} \tilde{P}(\cdot|s,a)x(\cdot)\ge 0,\]
so a more appropriate bound for $\CB_{\min}$ is
\begin{equation}\label{eqn:hatDaggerBalloon}
\CB_{\min}^{\dagger,0}(s, a)(x): = \max\left\{{\CB_{\min}}^{\dagger}(s, a)(x),-\hat{P}(\cdot|s,a)x(\cdot)\right\}.
\end{equation}
We can write the operator $U$ adjusted by this bound as
\[(\hat{U}^{\dagger,0}x)_s = \min_{a\in \mathcal{A}} \left\{ c(s,a) +  \max\left\{\sum_{s'\in \mathcal{S}} \hat{P}(s'|s,a) x_{s'} -\epsilon(s,a)\max(x),0\right\}\right\}. \]

\begin{rmk} \label{rmk:whyboundbycosts}
Note that we do require a lower bound on $\CB_{\min}^{\dagger}(s, a)(x)$, else the operator $U$ may be unbounded below. Our approach to bound by $-\hat{P}(\cdot|s,a)x(\cdot)$, which will effectively bound the modified operator $\hat{U}^{\dagger,0}$ by the minimum of the costs, is different to the approach of \citet{neuPikeBurke} and also to \citet{Tarbouriech2021}, who both bound their operators by below 0. We observed bounding by 0 lead to many cases where the operator did not converge but instead oscillates, as we discuss further in \Cref{ex:oscillatingBehaviour}. For \citet{neuPikeBurke} this is not an issue as the horizon is finite.

Unlike our approach, \citet{Tarbouriech2021} consider the unknown costs case, where they allow the bound to be 0 to represent possible zero costs as a optimistic approach. They also prove their operator is monotone and converges regardless. This is not true for our operator $\hat{U}^{\dagger,0}$, and we discuss the property of monotone and whether the operator converges further in \Cref{appn:conjecture2statecase}.
\end{rmk}

The corresponding optimisation program to $\hat{U}^{\dagger,0}x$ is as follows:
\begin{align}
    \max_{x\in \R^N} &\sum_{s\in\Sc} x_s \label[Program]{CP primalUnknowndaggerl1} \\\nonumber
    & \text{such that}\\\label{eqn:superharmonicl1dagger}
     x_s \le  c(s,a) &+ \max\left\{\sum_{s'\in \mathcal{S}} \hat{P}(s'|s,a) x_{s'} -\epsilon(s,a)\max(x),0\right\} \quad \forall s\in \Sc, a\in \Ac.
\end{align}
Note that vectors that satisfy \Cref{eqn:superharmonicl1dagger} must also be superharmonic with respect to $\CB_{\min}$, so that the optimal value must be bounded above by the optimal value of \Cref{CP primalUnknown}. Also, the vector $x$ with $x_s = \min_{a\in \mathcal{A}}c(s,a)$ is feasible, and its sum is a lower bound on the optimal value of \Cref{CP primalUnknowndaggerl1}. This means there is a unique optimal value of this optimisation program, however unlike \Cref{CP primalUnknown} it is not clear that there is a unique $x$ corresponding to this optimal value. Also, \Cref{CP primalUnknowndaggerl1} has a convex feasible region if restricted to any of the sets 
\[S_{U\cup V}=\{x\in\R^N|x_{u}> \min_{a\in\Ac}c(u,a),x_{v}= \min_{a\in\Ac}c(v,a),\;\forall u\in U,v\in V \}, \] where $U$ and $V$ are disjoint sets that partition $\mathcal{S}$. This allows convex programming techniques can be used on each $S_{U\cup V}$, of which there are exactly $2^N$ regions.
 
We have that any fixed point of $\hat{U}^{\dagger,0}$ is automatically superharmonic with respect to \Cref{eqn:superharmonicl1dagger}, so \Cref{CP primalUnknowndaggerl1} must have optimal value greater than or equal to the sum of any fixed point's elements.

However, in the case where $\epsilon = 0$, this is equivalent to \Cref{LP primalKnown} with $P=\hat{P}$, where there is a unique optimal solution. And if $\epsilon$ is greater than 1, then the constraint in \Cref{eqn:superharmonicl1dagger} reduces to $x_s\le c(s,a)$ giving the unique optimal solution of $x_s = \min_{a\in \mathcal{A}}c(s,a)$. These cases result in same outcome as solving \Cref{CP primalUnknown}. So it is only the cases where $0<\epsilon <1$ that needs to be addressed. The same considerations apply to the operator $\hat{U}^{\dagger,0}$, where for each $x$ with $x_s\ge \min_{a\in \mathcal{A}}c(s,a)\in \R^n$ we have 
\[ 0\le \min_{a\in \mathcal{A}}c(s,a) \le (\hat{U}^{\dagger,0}x)_s \le (\hat{U}x)_s \le (Ux)_s \] where $P=\hat{P}$ in the operator $U$. The first, second and fourth inequality also hold for $x\in \R^N$. This means that the fixed points of $\hat{U}^{\dagger,0}$, where they exist, must be between $\min_{a\in \mathcal{A}}c(s,a)$ and $\hat{J}^*$, the fixed point of $\hat{U}$.

\begin{lemma} \label{lem:existenceFixedPoint}
There exists a fixed point of $\hat{U}^{\dagger,0}$ in the region \[ X = \{x\in\R^N \mid  \min_{a\in \mathcal{A}}c(s,a)\le x_s\le (\hat{J}^*)_s ,\quad \forall s=1,2,\ldots,N\}.  \]
\end{lemma}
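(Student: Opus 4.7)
The plan is to apply Brouwer's fixed point theorem to the operator $\hat{U}^{\dagger,0}$ restricted to the compact convex set $X$. The set $X$ is the Cartesian product of the closed intervals $[\min_a c(s,a), (\hat{J}^*)_s]$ for each $s\in \mathcal{S}$, so it is automatically compact and convex. For non-emptiness, I would observe that $(\hat{J}^*)_s = (\hat{U}\hat{J}^*)_s = \min_a \{c(s,a) + \min_{\tilde{P}\in \mathcal{P}_\epsilon} \tilde{P}(\cdot \mid s,a)\hat{J}^*\} \ge \min_a c(s,a)$, since $\hat{J}^* \ge 0$ and $\tilde{P} \ge 0$ force the inner minimum to be non-negative. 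Continuity of $\hat{U}^{\dagger,0}$ is immediate since it is built from linear functions, componentwise maxima, and componentwise minima, all of which are continuous.

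The substantive step is showing $\hat{U}^{\dagger,0}(X) \subseteq X$. The lower bound is free: because $\max\{\,\cdot\,,0\} \ge 0$, we have $(\hat{U}^{\dagger,0}x)_s \ge \min_a c(s,a)$ for every $x\in\R^N$. For the upper bound I would establish the chain $(\hat{U}^{\dagger,0}x)_s \le (\hat{U}x)_s \le (\hat{J}^*)_s$ for each $x\in X$. The first inequality requires using both that $\CB_{\min}^{\dagger}(s,a)(x) \le \CB_{\min}(s,a)(x)$ (the bound derived above) and that
\[
\hat{P}(\cdot \mid s,a)x + \CB_{\min}(s,a)(x) = \min_{\tilde{P}\in\mathcal{P}_\epsilon}\tilde{P}(\cdot\mid s,a)x \ge 0
\]
for $x\ge 0$ (which holds on $X$ since costs are non-negative). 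Together these give
\[
\max\{\hat{P}(\cdot\mid s,a)x - \epsilon(s,a)\max(x),\,0\} \le \hat{P}(\cdot\mid s,a)x + \CB_{\min}(s,a)(x),
\]
and taking the min over $a$ yields $(\hat{U}^{\dagger,0}x)_s \le (\hat{U}x)_s$. The second inequality uses monotonicity of $\hat{U}$: for $x \le y$ componentwise and any fixed $a$ and $\tilde{P}\ge 0$, $\tilde{P}(\cdot\mid s,a)x \le \tilde{P}(\cdot\mid s,a)y$, and taking min over $\tilde{P}$ and then over $a$ preserves this order. Applying this with $y=\hat{J}^*$ and using $\hat{U}\hat{J}^* = \hat{J}^*$ gives $(\hat{U}x)_s \le (\hat{J}^*)_s$.

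With $\hat{U}^{\dagger,0}$ a continuous self-map of the non-empty compact convex set $X$, Brouwer's fixed point theorem delivers the required fixed point. The main subtlety is the first inequality in the upper-bound chain, which is the only place both properties of $\CB_{\min}^{\dagger}$ (being a valid lower bound) and of $\CB_{\min}$ (the clipping at $0$ being redundant relative to the true cost-to-go) must be combined; without the $\max\{\,\cdot\,,0\}$ clip discussed in \Cref{rmk:whyboundbycosts}, this step would fail and the operator could drop below $\hat{U}$ in an uncontrolled way.
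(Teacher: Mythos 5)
Your proposal is correct and follows the same route as the paper: the paper's proof is simply an appeal to Brouwer's fixed point theorem on $X$, with the self-map property $\min_{a}c(s,a)\le (\hat{U}^{\dagger,0}x)_s\le (\hat{U}x)_s\le (\hat{J}^*)_s$ established in the discussion immediately preceding the lemma. You have merely made explicit the verification of compactness, convexity, continuity, and invariance that the paper leaves implicit, including the correct observation that both the validity of the bound $\CB_{\min}^{\dagger}\le\CB_{\min}$ and the non-negativity of $\hat{P}(\cdot|s,a)x+\CB_{\min}(s,a)(x)$ are needed for the clipped operator to stay below $\hat{U}$.
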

\begin{proof}
Apply the Brouwer fixed point theorem \cite[Cor.~1.1.1]{Florenzano2003} to $\hat{U}^{\dagger,0}$ on $X$.
\end{proof}

\begin{conjecture}\label{conj:l1dagger}
There exists a unique fixed point of $\hat{U}^{\dagger,0}$, which is equal to the unique optimal solution to \Cref{CP primalUnknowndaggerl1}. In almost all cases, this can be found by iterating the operator $\hat{U}^{\dagger,0}$ starting at any point in $\R^N$.
\end{conjecture}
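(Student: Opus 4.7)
The plan is to split the three assertions of the conjecture --- existence of a fixed point (already provided by \Cref{lem:existenceFixedPoint}), coincidence with the unique optimiser of \Cref{CP primalUnknowndaggerl1}, and convergence of the iteration for generic starting points --- and attack them with different tools.

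For the coincidence with \Cref{CP primalUnknowndaggerl1}, the natural first move is to linearise the $\max$ terms. I would introduce an auxiliary scalar $t$ with $t\ge x_s$ for all $s\in\Sc$, and use the equivalence that $x_s \le c(s,a)+\max\{A,0\}$ iff either $x_s\le c(s,a)$ or $x_s\le c(s,a)+A$, to partition the feasible region of \Cref{CP primalUnknowndaggerl1} into the $2^N$ convex pieces $S_{U\cup V}$ already flagged in the excerpt. Within each piece the program is a genuine linear program in $(x,t)$. Running the LP-duality argument of \Cref{subsec:UtoLP} piece-by-piece, the maximum of $\sum_s x_s$ on each piece is attained where the relevant constraints are tight and so coincides with a fixed point of the restriction of $\hat{U}^{\dagger,0}$ to that piece. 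Taking the maximum across the finitely many pieces yields a global optimum which is automatically a fixed point of $\hat{U}^{\dagger,0}$; conversely, any fixed point is feasible for \Cref{eqn:superharmonicl1dagger} by construction, so only one half of the equivalence is substantive.

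For uniqueness, my first approach would be direct. Given two candidate fixed points $x,y$, I would bound, using \Cref{lem:maxarrange} for the outer $\min_a$ together with the $1$-Lipschitzness of both $u\mapsto\max\{u,0\}$ and the vector map $u\mapsto\max(u)$,
\[ |(\hat{U}^{\dagger,0}x)_s - (\hat{U}^{\dagger,0}y)_s| \le \max_a \left( \sum_{s'} \hat{P}(s'|s,a)\,|x_{s'}-y_{s'}| + \epsilon(s,a)\,\lVert x-y\rVert_\infty \right). \]
Under the feasibility hypothesis of \Cref{prop:dualityOPsUnknown}, i.e.\ that $\hat{P}$ together with any $\tilde P\in\mathcal{P}_\epsilon$ produces only proper policies, \Cref{lem:properContractionU} supplies a weighted sup-norm in which the right-hand side is a strict contraction; this forces $x=y$ and simultaneously delivers convergence from any starting point via the Banach contraction principle.

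The hard part is that, as emphasised in \Cref{rmk:whyboundbycosts} and \Cref{ex:oscillatingBehaviour}, the operator $\hat{U}^{\dagger,0}$ is not globally contractive when $\epsilon$ is large or when some $\hat{P}(\cdot|s,a)$ has mass close to one: the outer clipping $\max\{\cdot,0\}$ can flip on and off between iterations, invalidating the bound above and sending trajectories into cycles. This is precisely what makes ``almost all cases'' rather than ``all'' the right quantifier and is what upgrades the statement from a theorem to a conjecture. To close the gap I would analyse the piecewise-linear return map of $\hat{U}^{\dagger,0}$ across the regions $S_{U\cup V}$ and apply a transversality or Sard-type argument to show that persistent cycles require a measure-zero set of problem data $(\hat{P},\epsilon,c)$. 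This matches the empirical picture described in \Cref{appn:exploringTheConjecture} and is, I expect, the genuine content of the conjecture rather than a routine check.
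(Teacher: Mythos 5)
This statement is a conjecture that the paper deliberately leaves open: the author supplies only existence of a fixed point (\Cref{lem:existenceFixedPoint}, via Brouwer), a proof in the $N=1$ case, a partial analysis of the $N=2$ case, and empirical evidence, and explicitly solicits a proof or counterexample. Your proposal does not close the gap, and its central step is actually refuted by the paper's own examples. The claim that, under the feasibility hypothesis of \Cref{prop:dualityOPsUnknown}, \Cref{lem:properContractionU} supplies a weighted sup-norm in which
\[ \max_a \Bigl( \sum_{s'} \hat{P}(s'|s,a)\,|x_{s'}-y_{s'}| + \epsilon(s,a)\,\lVert x-y\rVert_\infty \Bigr) \]
is strictly contractive is false: the weighted sup-norm of \Cref{lem:properContractionU} controls only the $P_\pi$ part, and the additional $\epsilon(s,a)\lVert x-y\rVert_\infty$ term is not absorbed by it. This is precisely where the paper's own attempted proof in \Cref{appn:exploringTheConjecture} stalls (the bound degenerates to $\sum_{s'}\hat P(s'|s,a)+\epsilon$, which can exceed $1$). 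More decisively, \Cref{ex:oscillatingBehaviour} exhibits a two-state instance with substochastic $\hat P$ and all policies proper in which $\hat{L}_{\pi}^{\dagger,0}$ has a genuine period-two orbit; no norm can render the operator a contraction there, so uniqueness cannot be derived by the Banach argument at all, and any proof of uniqueness must proceed without contraction.

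The other two components are sketches rather than arguments. In the piecewise-LP step, it is not enough to observe that each region $S_{U\cup V}$ carries a convex program: the maximiser of $\sum_s x_s$ over a given piece may lie on the boundary of that piece, where the active branch of $\max\{\cdot,0\}$ switches, and such a point need not be a fixed point of the unrestricted operator $\hat{U}^{\dagger,0}$; gluing the $2^N$ local optima into a global optimum that is simultaneously a fixed point is exactly the unresolved difficulty (the paper notes it is not even clear the program has a unique optimal $x$). The Sard/transversality claim for the ``almost all cases'' clause is asserted, not proved, and as stated it conflates two different things: the conjecture asserts uniqueness of the fixed point in \emph{all} cases (including the oscillating one, where the paper still finds a unique fixed point numerically) and convergence of the iteration only in \emph{almost all} cases. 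A correct proof would need to separate these, and neither is established here.
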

We have already discussed that this holds when $\epsilon =0$ or when $\epsilon\ge 1$. In practice, the author has seen this result hold true empirically for a wide range of examples and edge cases and we discuss this further in \Cref{appn:conjecture2statecase}. In \Cref{appn:conjecture2statecase}, exploration of the case where $N=2$ still determined a single fixed point and this was equal to optimising \Cref{CP primalUnknowndaggerl1}, however the operator did not converge to fixed point, instead oscillating between two points. Note that $\epsilon$ must be large enough for this to occur, while shrinking epsilon leads to the operator again contracting to a fixed point. Other than this, \emph{this conjecture is an open question and we appreciate any thoughts or comments on this (or indeed a proof or counter-example)!}

\begin{rmk} \label{rem:modifiedP}
As an aside, in proving \Cref{conj:l1dagger} we may want to make assumptions such as all policies are proper. In practice, this means we may need to adjust $\hat{P}$ to $\hat{P}^*$, where we tilt $\hat{P}$ towards the goal state by setting
\begin{equation*} 
\hat{P}^*(s'|s,a) = \begin{cases} \mathbb{I}\{ \hat{P}(s'|s,a)==0\}\frac{1}{n(s,a)+1}+ \mathbb{I}\{ \hat{P}(s'|s,a)\ne 0\}\hat{P}(s'|s,a) & \text{ for } s'=g,\\
 \hat{P}(s'|s,a)\frac{n(s,a)}{n(s,a)+\mathbb{I}\{ \hat{P}(s'|s,a)==0\}} & \text{ for } s'\ne g.\\
\end{cases} 
\end{equation*} 
Here $\mathbb{I}$ is the indicator function that is equal to 1 when applied to a true statement and zero otherwise, and $n(s,a)$ is the number of visits to the state $(s,a)$ that has occurred by running the algorithm. In doing this, we may want to increase $\epsilon(s,a)$ to $\epsilon^*(s,a)$ by considering that 
\begin{align*}\lVert \tilde{P}(\cdot|s,a) - \hat{P}^*(\cdot|s,a)\rVert_1 &= \lVert \tilde{P}(\cdot|s,a) -\hat{P}(\cdot|s,a) + \hat{P}(\cdot|s,a)- \hat{P}^*(\cdot|s,a)\rVert_1 \\
& \le  \lVert \tilde{P}(\cdot|s,a) -\hat{P}(\cdot|s,a)\rVert + \lVert\hat{P}(\cdot|s,a)- \hat{P}^*(\cdot|s,a)\rVert_1\\
& \le \epsilon(s,a) + \frac{1}{1+n(s,a)}\\
& =: \epsilon^*(s,a).\end{align*} Using $\epsilon^*$ instead of $\epsilon$ ensures that enough learning can take place in the algorithm by not shrinking the region around $\hat{P}^*$ too quickly.  Using $\hat{P}^*$ ensures that $\sum_{s'\ne g} \hat{P}^*(s'|s,a)<1$ for all $s\in\Sc,a\in\Ac$, and would then also ensure that all policies are proper, which may be helpful for proofs including for \Cref{conj:l1dagger}.

We may additionally want to ensure all elements of $\hat{P}$ are non-zero, which we require for divergences as in \Cref{subsec:KLdiv}. In this case for each $s\in\Sc,a\in\Ac$ (or $s\in\Sc\cup\{g\},a\in\Ac$) we define 
\begin{equation*} 
\hat{P}^+(s'|s,a) =  \mathbb{I}\{ \hat{P}(s'|s,a)==0\}\frac{1}{n(s,a)+z(s,a)}+ \mathbb{I}\{ \hat{P}(s'|s,a)\ne 0\}\hat{P}(s'|s,a)\frac{n(s,a)}{n(s,a)+z(s,a)}
\end{equation*} 
where for each $s\in\Sc,a\in\Ac$ then $z(s,a)$ is the number states $s'\in\Sc$ (or $s'\in\Sc\cup\{g\}$) such that $ \hat{P}(s'|s,a)=0$. This ensures that $\sum_{s'\ne g} \hat{P}^*(s'|s,a)<1$ and $\hat{P}^+(s'|s,a)\ge 0$ for all $s,s'\in\Sc,a\in\Ac$. If we include $g$ when defining $\hat{P}^+$ then this also ensures all policies are proper. We then would define $\epsilon^+$ by considering
\begin{align*}\lVert \tilde{P}(\cdot|s,a) - \hat{P}^+(\cdot|s,a)\rVert_1 &= \lVert \tilde{P}(\cdot|s,a) -\hat{P}(\cdot|s,a) + \hat{P}(\cdot|s,a)- \hat{P}^+(\cdot|s,a)\rVert_1 \\
& \le  \lVert \tilde{P}(\cdot|s,a) -\hat{P}(\cdot|s,a)\rVert + \lVert\hat{P}(\cdot|s,a)- \hat{P}^+(\cdot|s,a)\rVert_1\\
& \le \epsilon(s,a) + \frac{2z(s,a)-\mathbb{I}\{ \hat{P}(s'|s,a)==0\}}{z(s,a)+n(s,a)}\\
& =: \epsilon^+(s,a).\end{align*}
\end{rmk}

\subsection{Supremum norm}\label{subsec:supnorm}
If we use the supremum norm $D(P_1,P_2) = \lVert P_1-P_2 \rVert_{\infty} = \sup_{s'}| P_1(s')-P_2(s')|$, we can find the exact value of $\tilde{P}$ that gives the minimum of $\hat{U}$. For EVI, this case is studied in \citet[Alg.~2]{rosenbergCohen}. It is straightforward to show that 
\[ \tilde{P}(s'|s,a) = \hat{P}(s'|s,a)-\min\{0,\epsilon(s,a)\},\] and we show how $\tilde{P}$ relates to $\hat{P}$ in \Cref{fig:supnormregion}. Note that unlike \Cref{subsec:l1norm}, this minimum does not depend on $x$. 

We then have that \begin{equation}\CB_{\min}(s,a) = \sum_{s'}\max\{-\epsilon(s,a)x_{s'},-\hat{P}(s'|s,a)x_{s'}\}.\label{eqn:CBminSupnorm}\end{equation} This matches the expression in \citet[eqn.~5]{rosenbergCohen}, provided $\epsilon(s,a)$ is defined as $28A(s,a)+4\sqrt{\hat{P}(s'|s,a)A(s,a)}$, using the notation of that paper. 

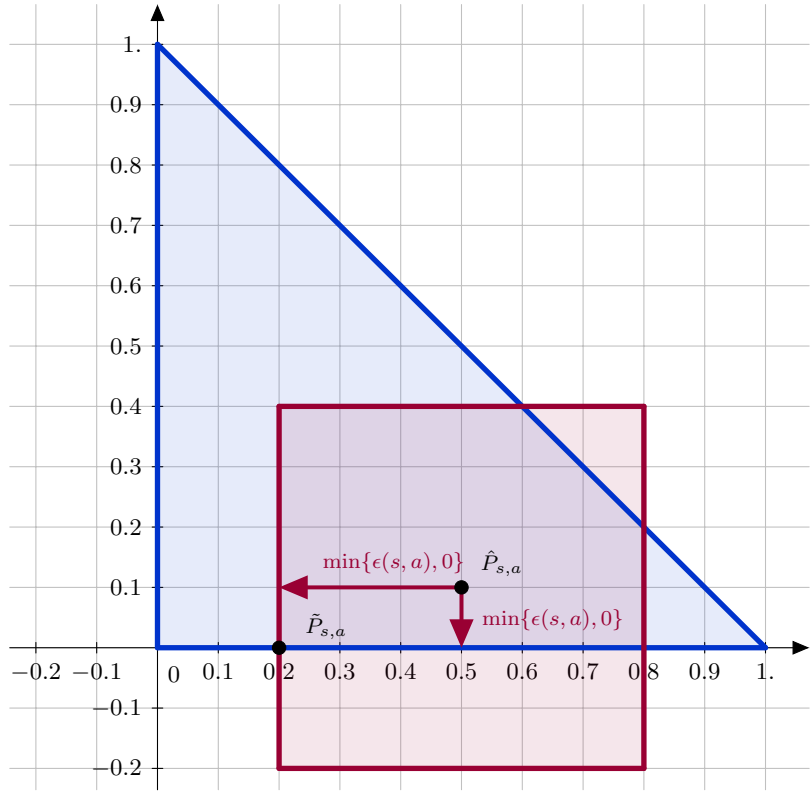
\begin{figure}
    \centering
\definecolor{zzqqtt}{rgb}{0.6,0.,0.2}
\definecolor{qqttcc}{rgb}{0.,0.2,0.8}
\definecolor{cqcqcq}{rgb}{0.7529411764705882,0.7529411764705882,0.7529411764705882}
\begin{tikzpicture}[line cap=round,line join=round,>=triangle 45,x=8.0cm,y=8.0cm]
\draw [color=cqcqcq,, xstep=0.8cm,ystep=0.8cm] (-0.242979794833089,-0.23537905057446323) grid (1.0718709439830993,1.0659959120020261);
\draw[->,color=black] (-0.242979794833089,0.) -- (1.0718709439830993,0.);
\foreach \x in {-0.2,-0.1,0.1,0.2,0.3,0.4,0.5,0.6,0.7,0.8,0.9,1.}
\draw[shift={(\x,0)},color=black] (0pt,2pt) -- (0pt,-2pt) node[below] {\footnotesize $\x$};
\draw[->,color=black] (0.,-0.23537905057446323) -- (0.,1.0659959120020261);
\foreach \y in {-0.2,-0.1,0.1,0.2,0.3,0.4,0.5,0.6,0.7,0.8,0.9,1.}
\draw[shift={(0,\y)},color=black] (2pt,0pt) -- (-2pt,0pt) node[left] {\footnotesize $\y$};
\draw[color=black] (0pt,-10pt) node[right] {\footnotesize $0$};
\clip(-0.242979794833089,-0.23537905057446323) rectangle (1.0718709439830993,1.0659959120020261);
\fill[line width=2.pt,color=qqttcc,fill=qqttcc,fill opacity=0.10000000149011612] (0.,1.) -- (0.,0.) -- (1.,0.) -- cycle;
\fill[line width=2.pt,color=zzqqtt,fill=zzqqtt,fill opacity=0.10000000149011612] (0.2,0.4) -- (0.8,0.4) -- (0.8,-0.2) -- (0.2,-0.2) -- cycle;
\draw [line width=2.pt,color=qqttcc] (0.,1.)-- (0.,0.);
\draw [line width=2.pt,color=qqttcc] (0.,0.)-- (1.,0.);
\draw [line width=2.pt,color=qqttcc] (1.,0.)-- (0.,1.);
\draw [line width=2.pt,color=zzqqtt] (0.2,0.4)-- (0.8,0.4);
\draw [line width=2.pt,color=zzqqtt] (0.8,0.4)-- (0.8,-0.2);
\draw [line width=2.pt,color=zzqqtt] (0.8,-0.2)-- (0.2,-0.2);
\draw [line width=2.pt,color=zzqqtt] (0.2,-0.2)-- (0.2,0.4);
\draw [->,line width=1.5pt,color=zzqqtt] (0.5,0.1) -- (0.2,0.1);
\draw [->,line width=1.5pt,color=zzqqtt] (0.5,0.1) -- (0.5,0.);
\begin{scriptsize}
\draw [fill=black] (0.5,0.1) circle (2.5pt);
\draw[color=black] (0.5665293349944471,0.14001757324567793) node {$\hat{P}_{s,a}$};
\draw[color=zzqqtt] (0.388456577541303,0.1409801286913706) node {$\min\{\epsilon(s,a),0\}$};
\draw[color=zzqqtt] (0.6502716587697096,0.04664969501348898) node {$\min\{\epsilon(s,a),0\}$};
\draw [fill=black] (0.2,0.) circle (2.5pt);
\draw[color=black] (0.27776270128664593,0.03606158511086961) node {$\tilde{P}_{s,a}$};
\end{scriptsize}
\end{tikzpicture}
    \caption{This shows the valid probability transition functions $\Delta$ in the shaded blue region for a given state-action pair $(s,a)\in\mathcal{S}\times\mathcal{A}$, where $|\Sc|=2$. Shaded in red is the region such that the supremum norm distance from $\hat{P}_{s,a}$ does not exceed $\epsilon(s,a)$. The bottom left-hand corner of the intersection of these two regions  is $\tilde{P}_{s,a}$.}
    \label{fig:supnormregion}
\end{figure}
This means no extra computational power is needed to determine $\tilde{P}$ nor $\CB_{\min}$ in this case. We note that \citet{neuPikeBurke} do not consider bounding this norm, potentially for this reason.

We can still follow \Cref{subsec:l1norm} and describe a lower bound for $\CB_{\min}(s,a)(x)$. Recall we require that $\tilde{P}\in \Delta$, where $\Delta$ is the set of all valid transition probabilities, so \[ \Delta =\{P\mid \sum_{s'}P(s'|s,a)\le 1, P(s'|s,a)\ge 0, \forall s',s\in \mathcal{S}, a \in \mathcal{A}\}.\] We then have that
\begin{align*}
\CB_{\min}(s,a)(x) & = \min_{\tilde{P}\in \Delta} \{\langle x,\tilde{P}-\hat{P}\rangle \mid D(\tilde{P},\hat{P}) \leq \epsilon(s,a) \} \\
& =  \max_{\lambda \le 0} \min_{\tilde{P}\geq 0} \{\langle x-\lambda 1,\tilde{P}-\hat{P}\rangle  - \lambda \hat{P}(g|s,a) \mid D(\tilde{P},\hat{P}) \leq \epsilon(s,a) \} \\
& = - \min_{\lambda \le 0} \max_{\tilde{P}\geq 0} \{-\langle x-\lambda 1,\tilde{P}-\hat{P}\rangle  + \lambda \hat{P}(g|s,a)\mid D(\tilde{P},\hat{P}) \leq \epsilon(s,a) \}\\
& \geq - \min_{\lambda \le 0} \max_{\tilde{P}\in \R^S} \{-\langle x-\lambda 1,\tilde{P}-\hat{P}\rangle +  \lambda \hat{P}(g|s,a)\mid D(\tilde{P},\hat{P}) \leq \epsilon(s,a) \}\\
& \geq - \min_{\lambda \le 0} \max_{\tilde{P}\in \R^S, D(\tilde{P},\hat{P})\le \epsilon(s,a)} - - \lVert x-\lambda 1 \rVert_1 \lVert\tilde{P}-\hat{P}\rVert_\infty  + \lambda \hat{P}(g|s,a)\\
& \geq -  \min_{\lambda \le 0}\epsilon(s,a) \lVert x-\lambda 1 \rVert_1 + \lambda \hat{P}(g|s,a)\\
& = - \epsilon(s,a) \lVert x\rVert_1.
\end{align*}
To move line 1 to line 2, we use the constraint that \[\sum_{s'\in\Sc}\tilde{P}(s'|s,a)\le 1 = \sum_{s'\in\Sc}\hat{P}(s'|s,a)+P(g|s,a),\] which results in the term
\[ -\lambda\left(\sum_{s'\in\Sc}\tilde{P}(s'|s,a)  - \sum_{s'\in\Sc}\hat{P}(s'|s,a)-P(g|s,a)\right) = \langle -\lambda 1,\tilde{P}-\hat{P}\rangle - \lambda \hat{P}(g|s,a)  .\] 
To move from line 6 to 7, we use that \[\min_{\lambda\le 0}\lambda \tilde{P}(g|s,a) =\lambda \tilde{P}(g|s,a)\Big{|}_{\lambda = 0} =0\] as $\tilde{P}(g|s,a)\ge 0$ and that  \[\min_{\lambda\le 0} \lVert x - \lambda 1\rVert_{1} = \lVert x - \lambda 1\rVert_{1}\Big{|}_{\lambda = 0} = \Vert x\rVert_1 \] for $x\in \R^N$ and $x\ge 0$ , where the proof of this is in \Cref{appn:weightedspan}. 
Given that we know $\CB_{\min}(s,a)\ge -\sum_{s'}P(s'|s,a)x_{s'}$ we can then take the bound 
\[\CB_{\min}^{\dagger,0} = \max\{- \epsilon(s,a) \lVert x\rVert_1, -\sum_{s'}P(s'|s,a)x_{s'}\},\]
similar to \Cref{eqn:hatDaggerBalloon}.

Note that if $\epsilon(s,a)$ is less than $\min_{s'}P(s'|s,a)$, then $\CB_{\min}(s,a)$ from \Cref{eqn:CBminSupnorm} is equal to $\CB_{\min}^{\dagger,0}$, and similarly if $\epsilon(s,a)$ is greater than $\max_{s'}P(s'|s,a)$, then $\CB_{\min}(s,a)$  is also equal to $\CB_{\min}^{\dagger,0}$. So they are only not equal when $\epsilon(s,a)$ is between $\min_{s'}P(s'|s,a)$ and $\max_{s'}P(s'|s,a)$, and this then depends on $x$. For example, if we have the case of \Cref{fig:supnormregion}, with $\hat{P}(s_1|s,a) = 0.5$, $\hat{P}(s_2|s,a) = 0.1$, $\epsilon(s,a) = 0.3$, and we let $x=(1,0.5)$ then $\CB_{\min}^{\dagger,0} = -0.45$ while $\CB_{\min}=-0.35$. If however $x=(1,1)$ then $\CB_{\min}^{\dagger,0}=\CB_{\min}(s,a)=-0.45$.

\subsection{KL-divergence}\label{subsec:KLdiv}

Here we consider the KL-divergence, 
\[ D_{KL}(\tilde{P}(\cdot|s,a),P(\cdot|s,a)) = \sum_{s'}\tilde{P}(s'|s,a)\log\frac{\tilde{P}(s'|s,a)}{P(s'|s,a)}. \] We have not seen the corresponding EVI operator that uses this divergence studied in the SSP literature. 

We note that this divergence requires $\tilde{P}$ to be zero whenever $P$ is to be well defined. As we do not want to make this restriction, we need to modify the definition of $\hat{P}$. Here we use $\hat{P}^+$ defined in \Cref{rem:modifiedP}, and let $\mathcal{S}_{0}\subseteq\mathcal{S}$ be the corresponding states where $\hat{P}$ is zero for a given $s,a$. In the appendix we consider a different modification used by \citet[pg.~21]{neuPikeBurke}. 

We have that 
\begin{align*}
    &D_{KL}(\tilde{P}(\cdot|s,a),\hat{P}^+(\cdot|s,a)) \\
    &= \sum_{s'\in\Sc}\tilde{P}(s'|s,a)\log\frac{\tilde{P}(s'|s,a)}{\hat{P}^+(s'|s,a)}\\
    &=\sum_{s'\in\Sc\setminus \Sc_0}\tilde{P}(s'|s,a)\log\frac{\tilde{P}(s'|s,a)}{\hat{P}(s'|s,a)\frac{n(s,a)}{n(s,a)+z(s,a)}}+\sum_{s'\in\Sc_0}\tilde{P}(s'|s,a)\log\frac{\tilde{P}(s'|s,a)}{\frac{1}{n(s,a)+z(s,a)}}\\
     &=\sum_{s'\in\Sc\setminus \Sc_0}\tilde{P}(s'|s,a)\log\frac{\tilde{P}(s'|s,a)}{\hat{P}(s'|s,a)} + \sum_{s'\in\Sc\setminus \Sc_0}\tilde{P}(s'|s,a)\left(\log\frac{\tilde{P}(s'|s,a)}{\frac{n(s,a)}{n(s,a)+z(s,a)}} + \log\frac{\tilde{P}(s'|s,a)}{\frac{1}{n(s,a)+z(s,a)}}\right)\\
     &=\epsilon +\sum_{s'\in\Sc_0}\tilde{P}(s'|s,a)\log\frac{\tilde{P}(s'|s,a)}{n(s,a)}\\
     &\le \epsilon +\sum_{s'\in\Sc_0}\tilde{P}(s'|s,a)\log\frac{1}{n(s,a)}\\
     &\le \epsilon,
\end{align*}
so we do not need to modify epsilon.

We now show three separate ways to bound $ \CB_{\min}(s,a)(x) $.

\subsubsection{Using Pinkser's inequality}\label{subsubsec:Pinkser}
We first use Pinkser's inequality to bound $ \CB_{\min}(s,a)(x) $. We have that
\begin{align*}
&\CB_{\min}(s,a)(x) \\
& = \min_{\tilde{P}\in \Delta} \{\langle x,\tilde{P}-\hat{P}^+\rangle \mid D_{KL}(\tilde{P},\hat{P}^+) \leq \epsilon(s,a) \} \\
& =  \max_{\lambda \le 0} \min_{\tilde{P}\in \Delta} \{\langle x,\tilde{P}-\hat{P}^+\rangle - \lambda\left( D_{KL}(\tilde{P},\hat{P}^+)-\epsilon(s,a)\right) \} \\
& = - \min_{\lambda \le 0} \max_{\tilde{P}\in \Delta} \{-\left(\langle x,\tilde{P}-\hat{P}^+\rangle - \lambda \left(D_{KL}(\tilde{P},\hat{P}^+)-\epsilon(s,a)\right)\right) \}\\
& \ge - \min_{\lambda \ge 0} \max_{\tilde{P}\in \Delta} \{\left(\langle x,\hat{P}^+-\tilde{P}\rangle  - \lambda \left(\frac{1}{2\log2}\lVert\tilde{P}-\hat{P}^+\rVert_{1}^2-\epsilon(s,a)\right)\right) \}  \\
& = - \min_{\lambda \ge 0} \max_{\tilde{P}\in \Delta} \{\left(\langle \frac{1}{\sqrt{\lambda}}x,\sqrt{\lambda}(\hat{P}^+-\tilde{P})\rangle  -  \frac{1}{2}\lVert\sqrt{\lambda}(\tilde{P}-\hat{P}^+)\rVert_{1}^2+\lambda\epsilon(s,a)\log2\right) \} \\
& \ge - \min_{\lambda \ge 0}  \{\left(\frac{1}{2\lambda}\lVert x\rVert_{\infty}^2+\lambda\epsilon(s,a)\log2\right) \} \\
& = - 2\lVert x\rVert_{\infty}\sqrt{\frac{\log2}{2}\epsilon(s,a)}.
\end{align*}
We use Pinkser's inequality (see \citet[pg.~26]{Yeung2008}) to move from line 4 to line 5, and we use the relationship between convex conjugates of norms from \cite[Ex.~3.27]{Boyd} to move from line 5 to line 6. Note that Pinkser's inequality requires that $\tilde{P}$ and $\hat{P}$ are stochastic (not substochastic) so we are implicitly using the goal state in each line, and noting that component of $x$ corresponding to the goal state is constrained to have value zero. We also use the result 
that the minimum with respect to $\lambda$ of $a\lambda + \frac{b}{\lambda}$ is $\sqrt{ab}$ occurring at $\lambda = \sqrt{\frac{b}{a}}$ as in \Cref{appn:minAlambdaBoneOnLambda}. 

\subsubsection{\texorpdfstring{Using the cumulant function and \Cref{appn: proofBoundCumulant}}{Using the cumulant function and the cumulant bound}}\label{subsubsec:cumulantAppendixD}
Instead of using Pinkser's inequality, we can use that the convex conjugate of a KL divergence is the cumulant function as in \citet[Cor.~4.14]{Boucheron2012} and the result in \Cref{appn: proofBoundCumulant}. Note that the result for \citet[Cor.~4.14]{Boucheron2012} uses a probability distribution, not the substochastic version that we have, so we need to add in the probability of the goal state as we do below.  We can then proceed from line 4 above as follows
\begin{align*}
&\CB_{\min}(s,a)(x) \\
& = - \min_{\lambda \ge 0} \max_{\tilde{P}\in \Delta} \{\langle -x,\tilde{P}-\hat{P}^+\rangle  - \lambda \left(D_{KL}(\tilde{P},\hat{P}^+)-\epsilon(s,a)\right) \} \\
& \ge - \min_{\lambda \ge 0}  \{\lambda\log\left(\sum_{s'}\hat{P}^+(s')e^{\frac{-1}{\lambda}(x_{s'}-\langle \hat{P}^+,x\rangle)}\right)+\lambda\epsilon(s,a) +\lambda\max_{\tilde{P}\in \Delta}\tilde{P}(g|s,a)\log\left(\frac{\tilde{P}(g|s,a)}{\hat{P}^+(g|s,a)}\right) \} \\
& \ge - \min_{\lambda \ge 0, \lambda \ge \lVert x-\langle \hat{P}^+,x\rangle \rVert_{\infty})}  \{\frac{1}{\lambda}\sum_{s'}\hat{P}^+(s')(x_{s'}-\langle \hat{P}^+,x\rangle)^2 + \lambda\epsilon(s,a) -\lambda \log\left(\hat{P}^+(g|s,a)\right)\} \\
& \ge - \min_{\lambda \ge 0, \lambda \ge \lVert x-\langle \hat{P}^+,x\rangle \rVert_{\infty})}  \{\frac{1}{\lambda}\sum_{s'}\hat{P}^+(s')(x_{s'}-\langle \hat{P}^+,x\rangle)^2 + \lambda\epsilon(s,a) \} \\
& = - \begin{cases} 2\sqrt{\hat{V}^+(x)\epsilon(s,a))} & \text{ if } \epsilon(s,a)\le f(x,\hat{P}^+),\\
\frac{1}{\lVert x-\langle \hat{P}^+,x\rangle \rVert_{\infty}}\hat{V}^+(x)+\lVert x-\langle \hat{P}^+,x\rangle \rVert_{\infty}\epsilon(s,a) & \text{ if } \epsilon(s,a)> f(x,\hat{P}^+).
\end{cases}
\end{align*}
Here \begin{equation}\hat{V}^+(x) = \sum_{s'}\hat{P}^+(s')(x_{s'}-\langle \hat{P}^+,x\rangle)^2\label{eqn:defineVplus}\end{equation} could be considered the variance of $x$ with respect to $\hat{P}^+(\cdot|s,a)$. To move from line 2 to line 3 we use result that $\lambda \log \mathbb{E}^+e^{X/\lambda}\le \mathbb{E}^+(X)+ \frac{1}{\lambda}\mathbb{E}^+(X^2)$ whenever $|X|\le \lambda$ with probability 1 (see \Cref{appn: proofBoundCumulant} for details), with $X=x - \langle \hat{P}^+,x\rangle$ considered as a finite random variable with probability distribution $\hat{P}^+$. Then $|X|\le \lambda$ with probability 1 translates to 
\[\lambda \ge \sup_{s'\in\Sc}\{|x_{s'}-\langle \hat{P}^+(s'|s,a),x\rangle| : \hat{P}^+(s'|s,a)>0\}\] which we write as $\lambda \ge \lVert x-\langle \hat{P}^+,x\rangle \rVert_{\infty}$.
The function $f(x,\hat{P}^+)$ is 
\[ f(x,\hat{P}^+) = \hat{V}^+(x)\frac{1}{\lVert x - \langle \hat{P}^+,x\rangle \rVert_{\infty}^2}.\] This comes from the location of the minimum of the function $a\lambda + b\frac{1}{\lambda}$ as in \Cref{appn:minAlambdaBoneOnLambda}.

\begin{rmk}
In the case where $\epsilon(s,a)\le f(x,\hat{P}^+)$ this results in the same bound as \citet[Tab.~1]{neuPikeBurke}.
\end{rmk}

\subsubsection{Using the cumulant function and Hoeffding's lemma}\label{subsubsec:cumulantHoeffding}
Alternatively, we can still use the cumulant function however we could proceed using Hoeffding's lemma as in \citet[Lem.~2.2]{Boucheron2012} instead of the result in \Cref{appn: proofBoundCumulant}. Then proceeding again from line 4 we  have 
\begin{align*}
\CB_{\min}(s,a)(x) & = - \min_{\lambda \ge 0} \max_{\tilde{P}\in \Delta} \{\langle -x,\tilde{P}-\hat{P}^+\rangle  - \lambda \left(D_{KL}(\tilde{P},\hat{P}^+)-\epsilon(s,a)\right) \} \\
& \ge - \min_{\lambda \ge 0}  \{\lambda\log\left(\sum_{s'}\hat{P}^+(s')e^{\frac{-1}{\lambda}(x_{s'}-\langle \hat{P}^+,x\rangle)}\right)+\lambda\epsilon(s,a) \} \\
& \ge - \min_{\lambda \ge 0}  \{\frac{1}{2\lambda}(\spn(x-\langle \hat{P}^+,x\rangle))^2 +\lambda\epsilon(s,a) \} \\
& =-\frac{2}{\sqrt{2}}\spn(x-\langle \hat{P}^+,x\rangle)\sqrt{\epsilon(s,a)}. 
\end{align*}

\begin{rmk}
This is very similar to bound for the reverse KL-divergence considered in \citet[Tab.~1]{neuPikeBurke}.

Note that we could use any of \Cref{subsubsec:Pinkser,subsubsec:cumulantAppendixD,subsubsec:Pinkser} as $\CB_{\min}^{\dagger}$, or alternatively the maximum of the three. We would also suggest bounding by $-\hat{P}^+(\cdot|s,a)x(\cdot)$ as we did in \Cref{eqn:hatDaggerBalloon}.

We give some alternative approaches to these bounds in \Cref{appn:alternativeKLdiv}.
\end{rmk}

\subsection{Reverse KL-divergence} \label{subsec:reverseKLdiv}

Here we consider the reverse KL-divergence $D_{KL}(P(\cdot|s,a),\tilde{P}(\cdot|s,a))$. This does not require any modification of $\hat{P}$. We have not seen the corresponding EVI operator that uses this divergence studied in the SSP literature. 
We then have
\begin{align*}
\CB_{\min}(s,a)(x) & = \min_{\tilde{P}\in \Delta} \{\langle x,\tilde{P}-\hat{P}\rangle \mid D_{KL}(\hat{P},\tilde{P}) \leq \epsilon(s,a) \} \\
&=\max_{\lambda \le 0}\min_{\tilde{P}\in \Delta} \{\langle x,\tilde{P}-\hat{P}\rangle - \lambda (D_{KL}(\hat{P},\tilde{P}) - \epsilon(s,a)) \} \\
&=-\min_{\lambda \le 0}\max_{\tilde{P}\in \Delta} \{\langle x,\hat{P}-\tilde{P}\rangle + \lambda (D_{KL}(\hat{P},\tilde{P}) - \epsilon(s,a)) \} \\
&=-\min_{\lambda \ge 0}\max_{\tilde{P}\in \Delta} \{\langle x,\hat{P}-\tilde{P}\rangle - \lambda (D_{KL}(\hat{P},\tilde{P}) - \epsilon(s,a)) \} \\
&\ge-\min_{\lambda \ge 0}\max_{\tilde{P}\in \Delta} \{\langle x,\hat{P}-\tilde{P}\rangle - \lambda (\frac{1}{2\log2}\lVert\hat{P}-\tilde{P}\rVert_1^2 - \epsilon(s,a))\}\\
&=-\min_{\lambda \ge 0}\max_{\tilde{P}\in \Delta} \{\langle \frac{1}{\sqrt{\lambda}}x,\sqrt{\lambda}(\hat{P}-\tilde{P})\rangle - \lambda (\frac{1}{2}\lVert\sqrt{\lambda}(\hat{P}-\tilde{P})\rVert_1^2 - \epsilon(s,a)\log2) \} \\
&\ge-\min_{\lambda \ge 0}\{\frac{1}{2\lambda}\lVert x\rVert_{\infty}^2 + \lambda\epsilon(s,a)\log2 \} \\
&=-2\lVert x\rVert_{\infty} \sqrt{\frac{\log2}{2}\epsilon(s,a)}.
\end{align*}
This uses Pinsker's inequality to move from line 4 to line 5 \cite[pg.~26]{Yeung2008}, and then the convex dual of the $\ell_1$ norm to move from line 6 to line 7 \cite[Ex.~3.27]{Boyd}. Note that Pinkser's inequality requires that $\tilde{P}$ and $\hat{P}$ are stochastic (not substochastic) so we are implicitly using the goal state in each line, and noting that component of $x$ corresponding to the goal state is constrained to have value zero.  The final equality uses the minimum of $a\lambda + \frac{1}{\lambda}b$ as in \Cref{appn:minAlambdaBoneOnLambda}. 

\begin{rmk} Note that this is the same bound as the KL-divergence bound in \Cref{subsubsec:Pinkser}. 
This gives a very similar bound to \citet[Tab.~1]{neuPikeBurke} as the maximum (supremum norm) is equal to the span if we include the goal state, as in \Cref{rmk:includegoalstatel1}.
\end{rmk}

\subsection{\texorpdfstring{$\chi^2$-divergence}{Chi-squared divergence}}\label{subsec:Chisquaredivbound}
Here we consider the Pearson's $\chi^2$ distance
\[ D_{\chi^2}(P,\hat{P}) = \sum_{s'} \frac{(P(s')-\hat{P}(s'))^2}{\hat{P}(s')} = \left\lVert \frac{P-\hat{P}}{\sqrt{\hat{P}}} \right\rVert_2^2.\] We have not seen the corresponding EVI operator that uses this divergence studied in the SSP literature. 

This is distance is again only well-defined where $\hat{P}(s')>0$, so we again modify $\hat{P}$ to $\hat{P}^+$ from \Cref{rem:modifiedP}. Then we have that 
\begin{align*}
    &D_{\chi^2}(\tilde{P},\hat{P}^+)  \\
    &=\sum_{s'\in\Sc} \frac{(\tilde{P}(s')-\hat{P}^+(s'))^2}{\hat{P}^+(s')}\\
    &= \sum_{s'\in\Sc} \frac{(\tilde{P}(s')-\hat{P}(s')+\hat{P}(s') - \hat{P}^+(s'))^2}{\hat{P}^+(s')}\\
    &\le \sum_{s'\in\Sc} \frac{(\tilde{P}(s')-\hat{P}(s'))^2}{\hat{P}^+(s')} + \sum_{s'\in\Sc} \frac{(\hat{P}(s')-\hat{P}^+(s'))^2}{\hat{P}^+(s')}\\
    &=\sum_{s'\in\Sc\setminus\Sc_0} \frac{(\tilde{P}(s')-\hat{P}(s'))^2}{\hat{P}^+(s')} + \sum_{s'\in\Sc_0} \frac{(\tilde{P}(s')-\hat{P}(s'))^2}{\hat{P}^+(s')} + \sum_{s'\in\Sc\setminus\Sc_0} \frac{(\hat{P}(s')-\hat{P}^+(s'))^2}{\hat{P}^+(s')} + \sum_{s'\in\Sc_0} \frac{(\hat{P}(s')-\hat{P}^+(s'))^2}{\hat{P}^+(s')}\\
    &=\sum_{s'\in\Sc\setminus\Sc_0} \frac{(\tilde{P}(s')-\hat{P}(s'))^2}{\hat{P}(s')\frac{n}{n+z}} + \sum_{s'\in\Sc_0} \frac{(\tilde{P}(s'))^2}{\frac{1}{n+z}} + \sum_{s'\in\Sc\setminus\Sc_0} \frac{(\hat{P}(s')\frac{z}{n+z})^2}{\hat{P}(s')\frac{n}{n+z}} + \sum_{s'\in\Sc_0} \frac{(\frac{1}{n+z})^2}{\frac{1}{n+z}}\\
    &\le \frac{n+z}{n}\epsilon + (n+z)\sum_{s'\in\Sc_0} (\tilde{P}(s'))^2 + \frac{z^2}{n(n+z)} + \frac{z}{n+z}
\end{align*}
To ensure this is $\mathcal{O}(\epsilon)$ as $n$ increases, then we need to require that $\sum_{s'\in\Sc_0} (\tilde{P}(s'))^2$ is $\mathcal{O}(\frac{1}{n^2})$. To do this, we add the constraint that $\sum_{s'\in\Sc_0} (\tilde{P}(s'))^2\le \frac{1}{n^2}$ and we have that 
\[ D_{\chi^2}(\tilde{P},\hat{P}^+) \le 
(1+\frac{z}{n})\epsilon + \frac{n+z}{n^2} + \frac{z^2}{n(n+z)} + \frac{z}{n+z} =:\epsilon^+= \mathcal{O}(\epsilon) .\] Note that as the algorithm progresses and $n$ increases, $z$ either stays the same or shrinks (as it is equivalent to the number of states $s'$ that have never been reached from $s\in\Sc,a\in\Ac$ so far in the algorithm). 

We have that 
\begin{align*}
&\CB_{\min}(s,a)(x) \\
& = \min_{\tilde{P}\in \Delta} \{\langle x,\tilde{P}-\hat{P}^+\rangle \mid D_{\chi^2}(\hat{P}^+,\tilde{P}) \leq \epsilon^+(s,a) \} \\
& =\max_{\lambda \le 0} \min_{\tilde{P}\ge 0} \{\langle x-\lambda 1,\tilde{P}-\hat{P}^+\rangle \mid \left\lVert \frac{\tilde{P}-\hat{P}^+}{\sqrt{\hat{P}^+}} \right\rVert_2^2 \leq \epsilon^+(s,a) \} \\
&= \max_{\lambda \le 0} \min_{\tilde{P}\ge 0} \{\langle x-\lambda 1,\tilde{P}-\hat{P}^+\rangle + \lambda \hat{P}^+(g|s,a)\mid \left\lVert \frac{\tilde{P}-\hat{P}^+}{\sqrt{\hat{P}^+}} \right\rVert_2^2 \leq \epsilon^+(s,a) \} \\
& =-\min_{\lambda \le 0} \max_{\tilde{P}\ge 0 } \{-\left\langle \sqrt{\hat{P}^+}(x-\lambda 1),\frac{\tilde{P}-\hat{P}^+}{\sqrt{\hat{P}^+}}\right\rangle - \lambda \hat{P}^+(g|s,a) \mid \left\lVert \frac{\tilde{P}-\hat{P}^+}{\sqrt{\hat{P}^+}} \right\rVert_2 \leq \sqrt{\epsilon^+(s,a)}\} \\
& \ge -\min_{\lambda \le 0} \max_{\tilde{P}\ge 0 } \{\left\lVert \sqrt{\hat{P}^+}(x-\lambda 1)\right\rVert_2\left\lVert\frac{\tilde{P}-\hat{P}^+}{\sqrt{\hat{P}^+}}\right\rVert_2 - \lambda \hat{P}^+(g|s,a) \mid \left\lVert \frac{\tilde{P}-\hat{P}^+}{\sqrt{\hat{P}^+}} \right\rVert_2 \leq \sqrt{\epsilon^+(s,a)}\}\\
& \ge -\min_{\lambda \le 0}\{\left\lVert \sqrt{\hat{P}^+}( x-\lambda 1)\right\rVert_2\sqrt{\epsilon^+(s,a)} - \lambda \hat{P}^+(g|s,a)\}.
\end{align*}
Here at each line until the last line we are also requiring that $\sum_{s'\in\Sc_0} (\tilde{P}(s'))^2\le \frac{1}{n^2}$.We also used the Cauchy-Swartz inequality to move from line 5 to line 6 (see for example \citet{Hardy1988}.) 

At this point, we can take the derivative with respect to $\lambda$ and set equal to zero to find the location of the minimum. Note that for the second term, this would occur at the boundary where $\lambda =0$ where it has the value of $0$. For the first term, this is convex in $\lambda$ so the global minimum occurs at $\lambda = \langle x,\hat{P}^+\rangle$ which is positive when $x\ge 0$, which would give 
\begin{equation}- \sqrt{\epsilon^+(s,a) \hat{V}^+(x)},\label{eqn:boundIfSumPis1}\end{equation} with $\hat{V}^+$ as in \Cref{eqn:defineVplus}. However, $\lambda$ must be less than zero, so the minimum also occurs at $\lambda = 0$ and we have 
\begin{equation}\CB_{\min}(s,a)(x) \ge - \sqrt{\epsilon^+(s,a) \sum_{s'}\hat{P}^+(s'\mid s,a)x_{s'}^2}.\label{eqn:boundChiSquared}\end{equation} Note that if we required $ \sum_{s'\in\Sc}{P}(s'|s,a) = 1$ then \Cref{eqn:boundIfSumPis1} would be the appropriate bound instead, which is smaller than \Cref{eqn:boundChiSquared}. We are effectively paying a penalty to relax $ \sum_{s'\in\Sc}{P}(s'|s,a) = 1$ to the less restrictive $ \sum_{s'\in\Sc}{P}(s'|s,a) \le 1$.

\begin{rmk} \label{rem:includingGoalStateBoundChiSq}
We could instead consider including the goal state in all of the analysis in this section. Then we indeed have $ \sum_{s'\in\Sc\cup\{g\}}{P}(s'|s,a) = 1$ and we would need to use the definition of $\hat{P}^+$ that includes the goal state as in \Cref{rem:modifiedP}. We would also require that the element of $x$ corresponding to $g$ would be set to zero (as there the goal state always has zero cost-to-go). We could then use the bound in \Cref{eqn:boundIfSumPis1} where $\hat{V}^+$ is defined by summing over $g$ as well. This would bound would be between the bounds in \Cref{eqn:boundIfSumPis1} and \Cref{eqn:boundChiSquared} that do not include the goal state. 

Note that \Cref{eqn:boundIfSumPis1} is the first term in the bound for this divergence for \citet[A.5.4]{neuPikeBurke}, however the second term in their bound does not appear due to our definition of $\hat{P}^+$.

We discuss an alternative approach to this bound in \Cref{appn:alternativeChisquared}.
\end{rmk}

\subsection{\texorpdfstring{Variance-weighted $\ell_\infty$-norm}{Variance-weighted l-infinity-norm}}\label{subsec:varweightLinfinitynorm}
Here we consider the norm
\[ D_{\infty}(P,\hat{P}) = \max_{s'} \frac{(P(s')-\hat{P}(s'))^2}{\hat{P}(s')} = \left\lVert \frac{P-\hat{P}}{\sqrt{\hat{P}}} \right\rVert_\infty^2.\] We have not seen the corresponding EVI operator that uses this divergence studied in the SSP literature. 

This is again only well-defined where $\hat{P}(s')>0$, so we again modify $\hat{P}$ to $\hat{P}^+$ from \Cref{rem:modifiedP}. Then we have that 
\begin{align*}
    &D_{\chi^2}(\tilde{P},\hat{P}^+)  \\
    &=\max_{s'\in\Sc} \frac{(\tilde{P}(s')-\hat{P}^+(s'))^2}{\hat{P}^+(s')}\\
    &= \max_{s'\in\Sc} \frac{(\tilde{P}(s')-\hat{P}(s')+\hat{P}(s') - \hat{P}^+(s'))^2}{\hat{P}^+(s')}\\
    &\le \max_{s'\in\Sc} \frac{(\tilde{P}(s')-\hat{P}(s'))^2}{\hat{P}^+(s')} + \max_{s'\in\Sc} \frac{(\hat{P}(s')-\hat{P}^+(s'))^2}{\hat{P}^+(s')}
    \end{align*}
The first term is 
\begin{align*}
&\max_{s'\in\Sc} \frac{(\tilde{P}(s')-\hat{P}(s'))^2}{\hat{P}^+(s')}\\
&= \max\left\{ \max_{s'\in\Sc\setminus\Sc_0}\frac{(\tilde{P}(s')-\hat{P}(s'))^2}{\hat{P}^+(s')},  \max_{s'\in\Sc_0}\frac{(\tilde{P}(s')-\hat{P}(s'))^2}{\hat{P}^+(s')}\right\}\\
&=\max\left\{ \max_{s'\in\Sc\setminus\Sc_0}\frac{(\tilde{P}(s')-\hat{P}(s'))^2}{\hat{P}(s')\frac{n}{n+z}},  \max_{s'\in\Sc_0}\frac{(\tilde{P}(s')-\hat{P}(s'))^2}{\frac{1}{n+z}}\right\}\\
&\le\max\left\{ \frac{n+z}{n}\epsilon,  (n+z)\max_{s'\in\Sc_0}(\tilde{P}(s'))^2\right\}
\end{align*}
The second term is 
 \begin{align*}
& \max_{s'\in\Sc} \frac{(\hat{P}(s')-\hat{P}^+(s'))^2}{\hat{P}^+(s')}\\
 & = \max\left\{\max_{s'\in\Sc\setminus\Sc_0} \frac{(\hat{P}(s')-\hat{P}^+(s'))^2}{\hat{P}^+(s')} + \max_{s'\in\Sc_0} \frac{(\hat{P}(s')-\hat{P}^+(s'))^2}{\hat{P}^+(s')}\right\}\\
 & = \max\left\{\max_{s'\in\Sc\setminus\Sc_0} \frac{(\hat{P}(s')\frac{z}{n+z})^2}{\hat{P}(s')\frac{n}{n+z}} + \max_{s'\in\Sc_0} \frac{(\frac{1}{n+z})^2}{\frac{1}{n+z}}\right\}\\
 & = \max\left\{\max_{s'\in\Sc\setminus\Sc_0} \frac{(\hat{P}(s')\frac{z}{n+z})^2}{\hat{P}(s')\frac{n}{n+z}} + \max_{s'\in\Sc_0} \frac{(\frac{1}{n+z})^2}{\frac{1}{n+z}}\right\}\\
  & = \max\left\{\frac{z^2}{n(n+z)}, \frac{z}{n+z}\right\} = \mathcal{O}(\frac{1}{n})
\end{align*}
Then we need to again bound $\max_{s'\in\Sc_0}(\tilde{P}(s'))^2$ to ensure the bound is $\mathcal{O}(\epsilon)$. So we can impose the constraint that $\max_{s'\in\Sc_0}(\tilde{P}(s'))^2\le \frac{1}{n^2}.$

We now follow the same reasoning as the $\chi^2$ case, however we apply H\"{o}lder's inequality for $p=1$, $q=\infty$. We have that
\begin{align*}
&\CB_{\min}(s,a)(x) \\
& = \min_{\tilde{P}\in \Delta} \{\langle x,\tilde{P}-\hat{P}\rangle \mid D_{\infty}(\hat{P},\tilde{P}) \leq \epsilon(s,a) \} \\
& =-\min_{\lambda \le 0} \max_{\tilde{P}\ge 0 } \{-\left\langle \sqrt{\hat{P}^+}(x-\lambda 1),\frac{\tilde{P}-\hat{P}^+}{\sqrt{\hat{P}^+}}\right\rangle - \lambda \hat{P}^+(g|s,a)\rangle \mid \left\lVert \frac{\tilde{P}-\hat{P}^+}{\sqrt{\hat{P}^+}} \right\rVert_\infty \leq \sqrt{\epsilon(s,a)} \} \\
& \ge -\min_{\lambda \le 0} \max_{\tilde{P}\ge 0 } \{\left\lVert \sqrt{\hat{P}^+}(x-\lambda 1)\right\rVert_1\left\lVert\frac{\tilde{P}-\hat{P}^+}{\sqrt{\hat{P}^+}}\right\rVert_\infty - \lambda \hat{P}^+(g|s,a)\mid \left\lVert \frac{\tilde{P}-\hat{P}^+}{\sqrt{\hat{P}^+}} \right\rVert_\infty \leq \sqrt{\epsilon(s,a)}\}\\
& \ge -\min_{\lambda \le 0}\{\left\lVert \sqrt{\hat{P}^+}( x-\lambda 1)\right\rVert_1\sqrt{\epsilon(s,a)} -\lambda \hat{P}^+(g|s,a)\rangle\}.
\end{align*}
As in \Cref{appn:weightedspan}, the first term to be minimised over is convex and has minimum at $\lambda\ge 0$. As we require $\lambda\le 0$ then the minimum occurs at $\lambda =0$. The second term is linear and also has minimum at $\lambda =0$. Therefore we have 
\begin{align}
&\CB_{\min}(s,a)(x) \ge -\left\lVert \sqrt{\hat{P}^+}( x)\right\rVert_1\sqrt{\epsilon(s,a)}. \label{eqn:boundWeightedLinfinty}\end{align}

\begin{rmk}
As in \Cref{rem:includingGoalStateBoundChiSq}, we could include the goal state in the above analysis. In this case we would also need to modify $\hat{P}^+$ to include the goal state as in \Cref{rem:modifiedP}, and then use the exact location of the minimum from \Cref{appn:weightedspan}. This would give a slightly tighter bound than \Cref{eqn:boundWeightedLinfinty}.

As we do not include the goal state, and as we have a different definition of $\hat{P}^+$, we have a different bound to \citet[Tab.~1]{neuPikeBurke}.

We discuss an alternative approach to this bound in \Cref{appn:AlternativeVarWeightsupNorm}.
\end{rmk}

\subsection{General norms and divergences}
Note that all norms in $\mathbb{R}^n$ are equivalent, so there are relations of the form 
\[\alpha_1\lVert \cdot\rVert_a \le \lVert \cdot \rVert_b \le \alpha_2 \lVert \cdot \rVert_a\] for any two norms in $\R^n$ $\lVert \cdot\rVert_a$ and $\lVert \cdot \rVert_a$, where $\alpha_1,\alpha_2$ are positive real numbers that depend only on the norms chosen. Therefore there is a relation 
\[ \lVert \cdot \rVert_a \ge \alpha \lVert \cdot \rVert_1\] for all norms $\lVert \cdot \rVert_a$ in $\R^n$. This allows us to use the bound of one norm to bound another. 

There are also many inequalities relating norms and divergences, such as detailed in \citet{SansonVerdu2016} that will lead to similar bounds.

\subsection{Discussion of results}\label{subsec:discussion}
This section discusses calculations of $\CB_{\min}$ as defined in \Cref{sec:unknownMDPs} and how to bound these as inspired by \citet{neuPikeBurke}. In the $\ell_1$ and sup-norm case, \citet{rosenbergCohen} studied their corresponding EVI algorithms, and we discuss bounds in these cases in \Cref{subsec:l1norm} and \Cref{subsec:supnorm}. The bound in the sup-norm case, while interesting to observe how it differs from the true $\CB_{\min}$ does not reduce computational complexity, so we do not study this further. 

In the $\ell_1$ case, we consider the bound $\CB_{\min}^{\dagger,0}$ and conjecture that using this in $\hat{U}^{\dagger,0}$ would lead to a less computationally intense but good approximation to $\hat{U}$. We show how the corresponding optimisation program \Cref{CP primalUnknowndaggerl1} relates to \Cref{CP primalUnknown}, and discuss the existence of fixed points of $\hat{U}^{\dagger, 0}$ in \Cref{conj:l1dagger}. Finally, we conjecture that iterating $\hat{U}^{\dagger,0}$ gives a fixed point to $\hat{U}^{\dagger,0}$ and the solution to \Cref{CP primalUnknowndaggerl1} in all but a special case, as in \Cref{conj:l1dagger}. 

In \Cref{appn:exploringTheConjecture} we go on to discuss in detail the behaviour of $\hat{U}^{\dagger,0}$ in the $ell_1$ case, showing that it is not monotonic, however that it the conjecture indeed appears to hold in all but a special case. This special case is detailed further for the case where $N=2$ in \Cref{appn:conjecture2statecase}, where the operator shows oscillating behaviour, while its fixed point is equal to the solution to \Cref{CP primalUnknowndaggerl1}. We also give a potential alternative method of finding the fixed points in \Cref{conj:FPprocedure}, that has helped us understand the operator $\hat{U}^{\dagger,0}$. Other than these results, \Cref{appn:exploringTheConjecture} remains an open question.

In \Cref{subsec:KLdiv,subsec:reverseKLdiv,subsec:Chisquaredivbound,subsec:varweightLinfinitynorm} we describe the EVI operators and derived specific bounds for their optimisation programs and relate these to similar bounds derived in \citet{neuPikeBurke}. In each case, we can define an optimisation program related to \Cref{CP primalUnknowndaggerl1}, we then expect a similar conjecture to \Cref{appn:conjecture2statecase} to hold, and that similar results to \Cref{appn:exploringTheConjecture} should hold. Note that these EVI operators have also not yet been studied for SSPs, which is an avenue of future research.

At this stage, given the open questions generated by this research, we have not gone on to consider the regret of the algorithms that may arise if the bounds in the previous sections were used for the operator $\hat{U}^{\dagger, 0}$.

\section{Conclusion}\label{sec:conclusion}
In this paper we have first recalled and introduced the known SSP setting in \Cref{sec:SSPsAsMDPsKnown}. This includes summarising policy and value iteration following \citet{BertsekasTsitsiklis91} and \citet{Kallenberg}, with several of our own proofs for the known results on convergence of operators. We particularly studied the relationship to linear programming, where we note the simplex algorithm applied to \Cref{LP dualKnown} identifies with policy iteration, while value iteration results from iterating the operator deduced from the dual program, \Cref{LP primalKnown}. 

We then consider the case of unknown parameters in \Cref{sec:unknownMDPs}, focusing on the unknown transition function case. In particular we recall extended value iteration for SSPs from \citet{jaksch10a} and \citet{rosenbergCohen}, and then we show how this can be considered as a convex program, \Cref{CP primalUnknown}, which is new research. We show that the unique solution to this convex program give fixed points of the operator $\hat{U}$. We study this convex program and its dual, \Cref{OP dualUnknown}; in particular we give the form of the dual program and deduce that the duality gap is zero, i.e. we have strong duality.

We do not study the dual program further, or its convex form \Cref{CP dualKnown2}, however we note that these programs may be useful for considering policy iteration methods for SSPs with unknown transitions, along the lines of the discounted infinite horizon version of \citet{Kaufman2011} and the finite horizon version in \citet{Auer2006}. This may be an avenue of future research. 

In \Cref{sec:bounds}, we aim to determine whether similar bounds to those achieved by \citet{neuPikeBurke} for finite horizon MDPs will be applicable for SSPs. These bounds are applied to the EVI operators from the primal program,  \Cref{subsec:UtoCP}. We note that unlike the finite setting, we are only aware of the EVI operators from the $\ell_1$ norm  and the supremum norm, as defined in \Cref{subsec:l1norm,subsec:supnorm}, being studied in the SSP literature as in \citet{rosenbergCohen} and \citet{Tarbouriech2019}. Studying the EVI operators in \Cref{subsec:KLdiv,subsec:reverseKLdiv,subsec:Chisquaredivbound,subsec:varweightLinfinitynorm} is future work. 

We indeed find appropriate bounds depending on the various norms and divergences used, with subtle differences to the bounds found in \citet{neuPikeBurke}, which lead us to define new value-iteration based operators. However, while finite horizon MDPs value iteration methods usually rely on backwards induction and other finite methods, infinite horizon methods usually rely on iterating an operator that converges. We are left with an open question on the convergence of the operator deduced from bounds on the $\ell_1$ norm in \Cref{conj:l1dagger}, and its relationship to the convex programs of \Cref{sec:unknownMDPs}. 

Exploring the convergence of these operators, we consider results from simulations and the specific case where there are only two states in \Cref{appn:exploringTheConjecture}; these give affirmative evidence for this conjecture, showing numerical evidence that all but a special case converge to the fixed point and this fixed point is the optimal solution of the convex program \Cref{CP primalUnknowndaggerl1}. However, we note that any formal proof of convergence is expected to be highly non-trivial as the operator is not monotone, and this requires further study. We expect similar results for the operators deduced from the other norms and divergences. 

We overall conclude that the idea to extend \citet{neuPikeBurke} to the SSP setting holds appeal for the understanding SSP EVI operators from the perspective of convex optimisation. This approach leads to avenues of future research in studying the properties of the SSP EVI operators associated to \Cref{subsec:KLdiv,subsec:reverseKLdiv,subsec:Chisquaredivbound,subsec:varweightLinfinitynorm}, as well as generating the optimisation programs that policy iteration methods for SSP would be based on. 

However, while the operators we developed from bounding EVI operators involve less computations, these operators may not be monotone and have more complicated convergence behaviour. Given the difficulties encountered so far with proving convergence, this suggests a more nuanced approach for continuing along this path. One example would be to adapt the bounds to ensure that the operators are monotone and have the corresponding convergence properties. We note that similar research along these lines has been recently studied in \citet{Tarbouriech2021}, where an EVI operator is adapted to a monotone piecewise operator that achieves minimax regret. This approach may benefit from being studied with the convex optimisation lens we have applied here. We leave this for future work.

\section{Acknowledgements}
We acknowledge Dr George Stamastecu for his enthusiasm, for inspiring the idea to combine \citet{neuPikeBurke} and \citet{rosenbergCohen}, and for many helpful discussions. We thank Professor Langford White and Professor Hung Nguyen for many helpful conversations. We also thank Dr Gergly Neu for email correspondence on the proof in \Cref{appn: proofBoundCumulant}. 

\appendix

\newpage
\addcontentsline{toc}{section}{References}
\bibliographystyle{abbrvnat}
\bibliography{references.bib}

\newpage

\label{appendix}

\section{\texorpdfstring{Further discussion on \Cref{conj:l1dagger}}{Further discussion on the conjecture}} \label{appn:exploringTheConjecture}

Here we discuss \Cref{conj:l1dagger} and the operator \[(\hat{U}^{\dagger,0}x)_s = \min_{a\in\Ac} c(s,a) +\max\left\{\sum_{s'\in\Sc}\hat{P}(s'|s,a)x_{s'} -\epsilon(s,a)\max(x),0\right\}.\]
We also consider fixing a policy $\pi$ and defining the operator 
\[(\hat{L}_{\pi}^{\dagger,0}x)_s = c(s,\pi(s)) +\max\left\{\sum_{s'\in\Sc}\hat{P}(s'|s,\pi(s))x_{s'} -\epsilon(s,\pi(s))\max(x),0\right\}.\]
Note that if there is only one action for each task, then  $\hat{L}_{\pi}^{\dagger,0}$ is equal to  $\hat{U}^{\dagger,0}$, and in general $\hat{U}^{\dagger,0} = \min_{\pi}\hat{L}_{\pi}^{\dagger,0}$. For this reason, we find studying  $\hat{L}_{\pi}^{\dagger,0}$ gives insight into the behaviour of  $\hat{U}^{\dagger,0}$. 

\begin{rmk}\label{rem:epsilonlessthanone}
For $x\ge 0$ note that
\begin{align*}
    \sum_{s'}\hat{P}(s'|s,a)x_{s'} - \epsilon(s,a)\max_{s''}x_{s''}&\le \sum_{s'}\hat{P}(s'|s,a)\max_{s''}x_{s''} - \epsilon(s,a)\max_{s''}x_{s''} \\ &\le(1-\epsilon(s,a))\max_{s''}x_{s''} \end{align*}
This is less than or equal to $0$ whenever $\epsilon(s,a)\ge1$. Hence we can assume the operator $\hat{L}_{\pi}^{\dagger,0}$ is equal to the costs in this case. We then find that all interesting behaviour occurs when $\epsilon(s,a)<1$, which we often assume to hold.
\end{rmk}

\subsection{How we would have liked the proof to go}
We would like to be able to show that $\hat{U}^{\dagger,0}$ is a contraction using a similar argument to \Cref{lem:properContractionU}. This proof would proceed as follows.

We assume that $\epsilon(s,a)<1$ as per \Cref{rem:epsilonlessthanone}.
For any $x \in \R^{N}$ and $s\in\Sc, s''\in\Sc,a\in\Ac$ then let 
\[ b_1(x,s,a,s'') = \sum_{s'}\hat{P}(s'|s,a)x_{s'} - \epsilon(s,a)x_{s''}, \quad b_2(x,s,a,s'') = 0.\]
Then \[(\hat{U}^{\dagger,0}x)_s = \min_a\max_{i}\min_{s''} (c(s,a) + b_{i}(x,s,a,s''))\]
For any $x, y \in \R^{N}$ and $s\in\Sc$ then
\begin{align}\nonumber
    &\lvert(\hat{U}^{\dagger,0} x)_s - (\hat{U}^{\dagger,0} y)_s \rvert \\\nonumber
    & =\left \lvert \min_a\max_{i}\min_{s''} (c(s,a) + b_{i}(x,s,a,s''))  - \min_a\max_{i}\min_{s''} (c(s,a) + b_{i}(y,s,a,s'')) \right\rvert\\\nonumber
    & \le \max_a\left\lvert \max_{i}\min_{s''} (c(s,a) + b_{i}(x,s,a,s'')) - \max_{i}\min_{s''} (c(s,a) + b_{i}(y,s,a,s'')) \right \rvert\\\nonumber
    & \le \max_a\max_{i}\left\lvert \min_{s''} (c(s,a) + b_{i}(x,s,a,s'')) - \min_{s''} (c(s,a) + b_{i}(y,s,a,s''))\right) \rvert\\\nonumber
    & \le \max_a\max_{i}\max_{s''}\left\lvert c(s,a) + b_{i}(x,s,a,s'') - c(s,a) - b_{i}(y,s,a,s'')\right\rvert\\\nonumber
    & = \max_a\max_{i}\max_{s''}\left\lvert b_{i}(x,s,a,s'')  - b_{i}(y,s,a,s'') \right\rvert\\\nonumber
    & = \max_a\max\left\{\max_{s''}\left\lvert \sum_{s'}\hat{P}(s'|s,a)(x_{s'}-y_{s'}) - \epsilon(s,a)(x_{s''}-y_{s''})\right\lvert,0\right\}\\ \nonumber
    & = \max_a\max_{s''}\left\lvert \sum_{s'}\hat{P}(s'|s,a)(x_{s'}-y_{s'}) - \epsilon(s,a)(x_{s''}-y_{s''})\right\lvert\\\nonumber
    &\le \max_a\max_{s''}\left\lvert\left(\sum_{s'}\hat{P}(s'|s,a)-\epsilon(s,a)\delta_{s',s''}\right)x_{s''}\right\rvert
\end{align} Note that we have repeatedly used \Cref{lem:maxarrange}. 

We would then like to be able to bound the operators $\hat{P}(\cdot|s,a)-\epsilon(s,a)\delta_{\cdot,s''}$ by a suitable norm, as in \Cref{lem:properContractionU}. This norm needs to work for all choices of $s''$. 

Unfortunately there does not appear to be a way to further adapt the proof \Cref{lem:properContractionU}, as it uses the following inequality
\begin{align*}
    \left\lvert\left(\sum_{s'}\hat{P}(s'|s,a)-\epsilon(s,a)\delta_{s',s''}\right)x_{s''}\right\rvert
    & = \sum_{s'}\left\lvert\hat{P}(s'|s,a)-\epsilon(s,a)\delta_{s',s''}\right\rvert\max_{s''}x_{s''}.
\end{align*}
This tends to result in a sum that gives $\sum_{s'}\hat{P}(s'|s,a)+\epsilon$ which may be greater than 1, and is not a sharp bound in general. This suggests using other proof techniques.

In the next sections we consider how the operators $\hat{P}(\cdot|s,a)-\epsilon(s,a)\delta_{\cdot,s''}$ behave and whether we can gain insight from this.

\subsection{The 1-state case}
Let us first consider the $1\times 1$ case, where there is only 1 state $s$ so the vector $x$ degenerates to a single element $x$. 
\begin{lemma}
\Cref{conj:l1dagger} holds for the 1 state case.
\end{lemma}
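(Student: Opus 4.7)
The plan is to reduce the operator $\hat{U}^{\dagger,0}$ to a scalar self-map and apply the Banach fixed point theorem. When $N=1$, writing $\alpha_a := \hat{P}(s\mid s,a) - \epsilon(s,a)$ and using $\max(x) = x$ for the single scalar $x$, one sees that the operator takes the form
\[ f(x) = \min_{a\in\Ac}\bigl[c(s,a) + \max\{\alpha_a x,\, 0\}\bigr]. \]
The first step is to observe that after a single application of $f$ from any $x\in\R$, the image already lies in $[c_{\min},\infty)$ with $c_{\min} := \min_a c(s,a) \ge 0$, so convergence on $\R$ reduces to convergence on the non-negative half-line, where $f$ simplifies further to $f(x) = \min_a[c(s,a) + \alpha_a^+ x]$ with $\alpha_a^+ := \max(\alpha_a, 0)$.

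Next I would argue that on $[0,\infty)$ the map $f$ is a strict contraction. It is the pointwise minimum of finitely many affine functions with non-negative slopes $\alpha_a^+$, hence is concave and globally Lipschitz with constant $\rho := \max_a \alpha_a^+$. The proper-policy assumption (\Cref{assu:proper}), possibly after tilting $\hat{P}$ as in \Cref{rem:modifiedP} so that every action reaches the goal with positive probability, forces $\hat{P}(s\mid s,a) < 1$ for every $a$ (otherwise $a$ would induce an improper policy under $\hat{P}$). Combined with $\epsilon(s,a) \ge 0$ this yields $\alpha_a^+ \le \hat{P}(s\mid s,a) < 1$, so $\rho < 1$. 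One subcase to check separately is $\epsilon(s,a) \ge 1$, which per \Cref{rem:epsilonlessthanone} trivialises the action to its cost; these actions contribute a constant to the min and do not affect the contraction constant.

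An application of the Banach fixed point theorem then gives a unique fixed point $x^* \in [0,\infty)$ and convergence of $f^k(x_0) \to x^*$ from any starting $x_0$. What remains is to identify $x^*$ with the unique optimum of \Cref{CP primalUnknowndaggerl1}: in the $N=1$ setting the superharmonicity constraints reduce to $x \le c(s,a) + \alpha_a^+ x$, i.e.\ $x \le c(s,a)/(1-\alpha_a^+)$, so the program's optimum is $\min_a c(s,a)/(1-\alpha_a^+)$, which rearranges to $x = c(s,a^*) + \alpha_{a^*}^+ x$ at the minimising action $a^*$ — that is, to the fixed-point equation $x = f(x)$. The main obstacle of the general conjecture is essentially absent here: with only one state the factor $\max(x)$ does not couple different coordinates, so the non-monotonicity and oscillating behaviour that complicate the $N \ge 2$ case (see \Cref{appn:conjecture2statecase}) do not arise, and a direct contraction argument suffices.
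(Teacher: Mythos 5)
Your proof is correct and follows essentially the same route as the paper's: reduce to a scalar map, note it is a contraction with factor $\max_a\max\{\hat{P}(s|s,a)-\epsilon(s,a),0\}<1$, and identify the unique fixed point with the optimum of \Cref{CP primalUnknowndaggerl1}. The only (welcome) difference is that you make explicit two points the paper leaves implicit — restricting to $x\ge 0$ before pulling $x$ out of the $\max$, and invoking \Cref{assu:proper} to guarantee the slope is strictly below $1$.
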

\begin{proof} For a fixed policy $\pi$ we have 
\[(\hat{L}_{\pi}^{\dagger,0}x)_s = c(s,\pi(s)) +\max\left\{\left(\hat{P}(s|s,\pi(s)) -\epsilon(s,\pi(s))\right)x,0\right\}.\]
If $\hat{P}(s|s,\pi(s)) -\epsilon(s,\pi(s))<0$ then $\hat{L}_{\pi}^{\dagger,0}$ is a contraction with contraction factor $0$ and fixed point $x^* =  c(s,\pi(s))$. If $\hat{P}(s|s,\pi(s)) -\epsilon(s,\pi(s))>0$ then $\hat{L}_{\pi}^{\dagger,0}$ is a contraction with contraction factor $\hat{P}(s|s,\pi(s)) -\epsilon(s,\pi(s))$ and we have the fixed point
\[ x^* = \frac{c(s,\pi(s))}{1 - \hat{P}(s|s,\pi(s)) +\epsilon(s,\pi(s))}>c(s,\pi(s)).\]
The superharmonic vector with respect to $\hat{L}_{\pi}^{\dagger,0}$ are all elements less than $x^*$, and therefore $x^*$ is the solution to the optimisation program in \Cref{CP primalUnknowndaggerl1} when a policy is fixed.

When the policy is not fixed and we return to the operator $\hat{U}^{\dagger,0}$ then we have 
\[(\hat{U}^{\dagger,0}x)_s = \min_ac(s,a) +x\max\left\{\hat{P}(s|s,a) -\epsilon(s,a),0\right\}.\] We can consider this as the known case operator $U$ from \Cref{eqn:defnU} where we let \[P(s|s,a) = \max\left\{\hat{P}(s|s,a) -\epsilon(s,a),0\right\}.\] Therefore this is a contraction where the fixed point can be found by iterating the operator $\hat{U}^{\dagger,0}$ from any starting point, and the fixed point is equal to optimal solution to the program in \Cref{CP primalUnknowndaggerl1}.

We have shown that \Cref{conj:l1dagger} holds for the 1 state case.
\end{proof}

\subsection{Properties in the general case}

\begin{lemma}
The operator $\hat{U}^{\dagger,0}$ with \[(\hat{U}^{\dagger,0}x)_s = \min_{a\in\Ac} c(s,a) +\max\left\{\sum_{s'\in\Sc}\hat{P}(s'|s,a)x_{s'} -\epsilon(s,a)\max(x),0\right\}\] is 
\begin{itemize} 
\item[(1)] non-montonic, in that $x\le y$ does not necessarily mean $\hat{U}^{\dagger,0}x\le \hat{U}^{\dagger,0}y$.
\item[(2)] in general not a contraction in the $\lVert \cdot\rVert_p$ norms for $p=1,2,\ldots, \infty.$
\end{itemize}
\end{lemma}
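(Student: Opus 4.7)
My plan is to establish both negative claims by explicit counterexample on $\Sc = \{1, 2\}$ with a single action $a$.

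For (1), the intuition I will highlight is that the bonus term $-\epsilon(s,a)\max(x)$ depends on the global maximum of $x$, whereas the averaged term $\sum_{s'} \hat{P}(s'\mid s,a) x_{s'}$ only weights each coordinate-wise increase by the corresponding transition probability. When $y$ exceeds $x$ at a coordinate $s'$ that becomes the new argmax of $y$, and $\hat{P}(s'\mid s,a) < \epsilon(s,a)$, the inner bracket at $s$ drops even though $x \le y$. To realise this concretely I would take $\hat{P}(1\mid 1,a) = 0.9$, $\hat{P}(g\mid 1,a) = 0.1$, $c(1,a) = 1$ and $\epsilon(1,a) = 0.5$, then compare $x = (1,1)$ with $y = (1,2)$; direct substitution gives $(\hat{U}^{\dagger,0}x)_1 = 1 + \max\{0.9 - 0.5,0\} = 1.4$ while $(\hat{U}^{\dagger,0}y)_1 = 1 + \max\{0.9 - 1.0,0\} = 1$, establishing (1).

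For (2), I would choose $x$ and $y$ with $\max(x) = \max(y)$ so that the bonus term cancels in the difference $\hat{U}^{\dagger,0}x - \hat{U}^{\dagger,0}y$, and engineer the transition data so that on the row where $x$ and $y$ disagree the operator behaves as the identity up to the constant cost. Taking $\hat{P}(1\mid 1, a) = \hat{P}(2\mid 2, a) = 1$ (pure self-loops), $\epsilon = 0.5$, $c(\cdot, a) = 1$, and $x = (10, 6)$ versus $y = (10, 7)$, the shared maximum is $10$ and a coordinate-wise evaluation yields $\hat{U}^{\dagger,0}x = (6, 2)$ and $\hat{U}^{\dagger,0}y = (6, 3)$, so $\hat{U}^{\dagger,0}x - \hat{U}^{\dagger,0}y = (0,-1) = x - y$. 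Hence $\|\hat{U}^{\dagger,0}x - \hat{U}^{\dagger,0}y\|_p = \|x - y\|_p$ for every $p \in [1,\infty]$, ruling out any uniform contraction factor $\gamma < 1$.

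The minor caveat I will address is that pure self-loops violate \Cref{assu:proper}. I would perturb to $\hat{P}(s\mid s, a) = 1 - \delta$ and $\hat{P}(g\mid s, a) = \delta$ and let $\delta \to 0^+$; this produces a family of proper instances whose $\ell_p$ contraction ratios converge to $1$, so no universal $\gamma < 1$ holds. Neither step is technically difficult; the only care required is to confirm that the inner $\max\{\cdot, 0\}$ selects the positive branch at each coordinate invoked, which amounts to verifying a few scalar inequalities. I expect no real obstacle here --- the content of the lemma is precisely that the nice structural properties enjoyed by $U$ (monotonicity, contraction in a weighted supremum norm via \Cref{lem:properContractionU}) are destroyed once the max-dependent bonus $-\epsilon(s,a)\max(x)$ enters, and the counterexamples above isolate exactly that failure mode.
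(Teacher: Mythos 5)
Your part (1) is correct and is essentially the paper's own argument: a two-state, one-action instance in which raising $y$ at the coordinate that becomes the new argmax inflates the penalty $\epsilon(s,a)\max(y)$ faster than the averaged term $\sum_{s'}\hat{P}(s'|s,a)y_{s'}$, so the inner bracket drops and monotonicity fails. The paper uses $\hat{P}\equiv 0.45$, $\epsilon=0.5$, $x=(1,0.9)$, $y=(1,2)$; your $\hat{P}(1|1,a)=0.9$, $x=(1,1)$, $y=(1,2)$ computation ($1.4$ versus $1$) is equally valid and isolates the same mechanism.

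Part (2) has a genuine gap. Your only instance on which the exhibited pair attains ratio exactly $1$ is the pure self-loop case $\hat{P}(s|s,a)=1$, which you correctly discard because it violates \Cref{assu:proper} (with a single action there is no proper policy at all). But the repair does not close the argument: for the perturbed instance $\hat{P}(s|s,a)=1-\delta$, $\hat{P}(g|s,a)=\delta$, your pair $x=(10,6)$, $y=(10,7)$ gives image difference $(0,-(1-\delta))$, i.e.\ ratio $1-\delta<1$, and the same $1-\delta$ ceiling appears for every pair one writes down in that instance. So what you have proved is that no contraction modulus is \emph{uniform over the family} of instances as $\delta\to 0^+$ --- a statement about the family, not about any single admissible operator. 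The lemma (and the paper's proof, which defers to \Cref{prop:whencontraction} and \Cref{ex:oscillatingBehaviour}) asserts something stronger: there is a \emph{fixed} admissible instance on which $\hat{U}^{\dagger,0}$ is not a contraction. Each of your perturbed instances may perfectly well be a contraction with modulus $1-\delta$; nothing you compute rules that out.

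The paper's route shows why a different kind of example is needed. Writing $\hat{L}_{\pi}^{\dagger,0}x = c_\pi + \max\{(\hat{P}_\pi - \epsilon_\pi\delta_{s''})x,0\}$ with $s''=\argmax_s x_s$, the relevant linear piece is $\hat{P}_\pi - \epsilon_\pi\delta_{s''}$, which can have \emph{negative} entries and hence spectral radius exceeding $1$ even though $\hat{P}_\pi$ is strictly substochastic; \Cref{prop:whencontraction} characterises when this happens (roughly, large off-diagonal transition mass together with $\epsilon$ exceeding certain entries), and \Cref{ex:oscillatingBehaviour} exhibits a proper instance ($p_{12}=p_{21}=0.999$, $p_{11}=p_{22}=10^{-5}$, $\epsilon_1=0.2$, $\epsilon_2=0.1$) whose iteration settles into a genuine period-two orbit. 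A map with a two-cycle cannot be a contraction in any norm, so that single instance proves (2) outright. Your self-loop construction, by contrast, keeps the penalty aligned with the diagonal, which is exactly the regime where the active pieces remain (weak) contractions; to fix your proof you would need either to argue that the degenerate $\delta=0$ instance is admissible for the operator as defined (the paper evidently does not take that view, since its own counterexample is carefully made proper), or to switch to an off-diagonal example of the paper's type.
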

\begin{proof}
For (1) note that in the $|\Sc|=1$ case, the operator is monotonic. However, unlike the 1 state case, all other cases cannot be reduced to the known case operator $U$ from \Cref{eqn:defnU}. This is due to the behaviour of the $\max x$ in the operator. For example, consider the following counterexample. Let $|\mathcal{S}|=2$, $|\mathcal{A}| = 1$ and let
\[ P = \begin{bmatrix} 0.45 & 0.45\\ 0.45 & 0.45 \end{bmatrix}, \quad \epsilon = \begin{bmatrix} 0.5 \\ 0.5 \end{bmatrix}, \quad x = \begin{bmatrix} 1\\ 0.9 \end{bmatrix}, \quad y = \begin{bmatrix} 1\\ 2 \end{bmatrix}, c = \begin{bmatrix} 0.5 \\ 0.5 \end{bmatrix}.\] Then we have $x\le y$ however 
\[ \hat{U}^{\dagger,0}x = c + \max\{Px-\epsilon \max{x},0\} = \begin{bmatrix} 0.855\\ 0.855 \end{bmatrix} \ge \begin{bmatrix} 0.85\\ 0.85 \end{bmatrix} = c + \max\{Py-\epsilon \max{y},0\} = \hat{U}^{\dagger,0}y.\]
This example used a large enough $\epsilon$ combined with the maximum of $x$ and $y$ occurring at different locations to produce the counterexample, which is not possible in the 1 state case.

For (2) refer to \Cref{prop:whencontraction} and \Cref{ex:oscillatingBehaviour} where we show that in the $|\Sc|=2$ case non-contractive behaviour can occur.
\end{proof}

We may wish to write the operator $\hat{L}_{\pi}^{\dagger,0}$ as follows:
\[ \hat{L}_{\pi}^{\dagger,0}x = c(\cdot,\pi(\cdot))+\max\{(\hat{P}_{\pi}-\epsilon_{\pi}\delta_{s''})x,0\}.\] Here $s''=\argmax_{s\in\Sc}(x)$, and $\delta_{s''}$ is equal to $1$ on the element $x_{s''}$ and zero elsewhere. As a matrix, we have 
\[ \hat{P}_{\pi}-\epsilon_{\pi}\delta_{s''} = \begin{bmatrix}p_{11} & p_{12} &\cdots & p_{1N}\\ p_{21} & p_{22}& \cdots & p_{2N}\\ \vdots&\vdots&\ddots&\vdots\\p_{N1} &p_{N2}& \cdots & p_{NN}\end{bmatrix} - \begin{bmatrix}  0 & \cdots & 0 & \epsilon(1,\pi(1)) & 0 & \cdots & 0  \\ 0 & \cdots & 0 & \epsilon(2,\pi(2)) & 0 & \cdots & 0  \\ \vdots & \ddots & \vdots & \vdots & \vdots & \ddots & \vdots  \\ 0 & \cdots & 0 & \epsilon(N,\pi(N)) & 0 & \cdots & 0  \end{bmatrix}.\] Here the $s''$ column of $\epsilon_{\pi}\delta_{s''}$ is the non-zero column. So this has the effect of editing $\hat{P}_{\pi}$ by a non-zero column. 

In the case where $ \epsilon(s,\pi(s)) \ge p_{s,s''}$ for all $s$, then $ \hat{P}_{\pi}-\epsilon_{\pi}\delta_{s''}\le \hat{P}_{\pi}$ is a non-negative sub-stochastic matrix. If we consider this as the matrix $P$ in \Cref{lem:PI}, then we can see it must be a contraction as it will have all eigenvalues with norm less than 1 whenever $\hat{P}_{\pi}$ has this property, i.e. when \Cref{assu:proper} holds for $\hat{P}_{\pi}$.

\subsection{The 2-state case} \label{appn:conjecture2statecase}

In the $2\times 2$ case we can determine the contractive properties of each piece. That is, if we fix a policy $\pi$ then the possible pieces correspond to the 7 matrices
\begin{align*} P_1= \begin{bmatrix} p_{11} -\epsilon(1,\pi(1)) & p_{12}\\ p_{21}- \epsilon(2,\pi(2)) & p_{22}\end{bmatrix}, \quad&P_2 = \begin{bmatrix} p_{11}  & p_{12}-\epsilon(1,\pi(1))\\ p_{21} & p_{22}- \epsilon(2,\pi(2))\end{bmatrix}\\
P_{11}= \begin{bmatrix} p_{11} -\epsilon(1,\pi(1)) & p_{12}\\ 0 & 0\end{bmatrix}, \quad&P_{21} = \begin{bmatrix} p_{11}  & p_{12}-\epsilon(1,\pi(1))\\0 & 0\end{bmatrix}\\
P_{12}= \begin{bmatrix} 0 & 0\\ p_{21}- \epsilon(2,\pi(2)) & p_{22}\end{bmatrix}, \quad&P_{22} = \begin{bmatrix} 0  & 0\\ p_{21} & p_{22}- \epsilon(2,\pi(2))\end{bmatrix}\\
P_0 = \begin{bmatrix} 0 & 0\\ 0 & 0 \end{bmatrix}.\quad&
\end{align*}
If $ \hat{L}_{\pi}^{\dagger,0}x = c+P_{k}x$ for a given $k=1,2,11,12,21,22,0$, then we say $P_k$ is the \emph{active} piece.

In this section, we consider whether each active piece is a contraction by examining the eigenvalues. If the active piece is a contraction, then that each active piece has a fixed point. We expect exactly one of these active pieces' fixed points to be the fixed point for $\hat{L}_{\pi}^{\dagger,0}$. We will then show that only one of the fixed points of $P_1$ and $P_2$ can be in the active piece in $\hat{L}_{\pi}^{\dagger,0}$, and the same for the pairs $(P_{11},P_{21})$ and $(P_{21},P_{22})$. 

\begin{rmk}\label{rmk:epsilongreaterthan1}
Note that if $\epsilon(s,\pi(s))>1$ for all $s$ then the only active piece is $P_0$ with fixed point $c(\cdots,\pi(\cdot))$, which is the fixed point of $\hat{L}_{\pi}^{\dagger,0}$. In the case where $\epsilon(s,\pi(s))>1$ only for $s=s'$, say $s'=1$ without loss of generality, then the only possible active pieces are $P_0,P_{21},P_{22}$ and the fixed points must have $x_1=c(1,\pi(1))$. As only one of $(P_{21},P_{22})$ is possibly active, the the fixed point of $\hat{L}_{\pi}^{\dagger,0}$is either the one that is active of these two, or if neither are then $P_0$ fixed point must be active and be the fixed point of $\hat{L}_{\pi}^{\dagger,0}$.
\end{rmk}

Note that $P_0$ corresponds to a contraction, however the other pieces may or may not be contractions. We let $\epsilon_1=\epsilon(1,\pi(1))$ and $\epsilon_2=\epsilon(2,\pi(2))$ for the sequel. Also, regardless of whether the active piece is a contraction, a fixed point to the active piece exists and is equal to $(I-P_i)^{-1}c$.

\begin{proposition}\label{prop:whencontraction}
The active piece $P_2$ of $\hat{L}_{\pi}^{\dagger,0}$  is a contraction unless it holds that $p_{11}+p_{22} < \epsilon_2$ and
    \begin{equation}1+p_{11}(p_{22}-\epsilon_2)+p_{11}+p_{22}-\epsilon_2 <  (p_{12}-\epsilon_1)p_{21}.\label{eqn:conditionsOnP2}\end{equation}
    For this to occur, we must necessarily have $\epsilon_2\ge\epsilon_1$, $p_{12}\ge \epsilon_1$ and $\epsilon_2\ge p_{22}$.
\end{proposition}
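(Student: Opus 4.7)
The plan is to reduce the contraction property of the $2\times 2$ matrix $P_2$ to its spectral radius and then analyse the characteristic polynomial via Jury's stability criterion. By the same argument as in \Cref{lem:LpiIsContraction}, $P_2$ is a contraction in some induced norm iff $\rho(P_2)<1$, so it suffices to analyse
\begin{align*}
p(\lambda)=\lambda^2-(p_{11}+p_{22}-\epsilon_2)\lambda+\bigl[p_{11}(p_{22}-\epsilon_2)-p_{21}(p_{12}-\epsilon_1)\bigr].
\end{align*}
Jury's criterion gives $\rho(P_2)<1$ iff $p(1)>0$, $p(-1)>0$ and $|\det P_2|<1$. First I verify $p(1)>0$ always under \Cref{assu:proper}: a direct expansion yields
\begin{align*}
p(1)=\det(I-\hat{P}_\pi)+\epsilon_2(1-p_{11})+\epsilon_1 p_{21},
\end{align*}
and \Cref{lem:PI} gives $\rho(\hat{P}_\pi)<1$, hence $\det(I-\hat{P}_\pi)>0$; the remaining terms are non-negative.

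The key algebraic observation is the factorisation
\begin{align*}
1+p_{11}(p_{22}-\epsilon_2)+(p_{11}+p_{22}-\epsilon_2)=(1+p_{11})(1+p_{22}-\epsilon_2),
\end{align*}
which rewrites \Cref{eqn:conditionsOnP2} as $(1+p_{11})(1+p_{22}-\epsilon_2)<p_{21}(p_{12}-\epsilon_1)$ and identifies it as $p(-1)<0$. The third Jury condition $|\det P_2|<1$ is disposed of using substochasticity ($p_{21}\le 1-p_{22}$) together with the interesting range $\epsilon_1,\epsilon_2<1$ from \Cref{rem:epsilonlessthanone}: a short calculation places $\det P_2$ strictly inside $(-1,1)$, so (iii) never fails here. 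Therefore $\rho(P_2)\ge 1$ iff $p(-1)\le 0$. Combined with $p(1)>0$ and the positive leading coefficient, this forces real roots with $\lambda_-<-1<\lambda_+<1$; summing them, $\lambda_++\lambda_-=p_{11}+p_{22}-\epsilon_2<0$, which recovers the first stated condition. Conversely, assuming both stated conditions, $p(-1)<0$ and so $\rho(P_2)>1$.

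It remains to extract the three necessary conditions. The bound $\epsilon_2\ge p_{22}$ is immediate from $p_{11}+p_{22}<\epsilon_2$ and $p_{11}\ge 0$. Using the factored form, the left-hand side $(1+p_{11})(1+p_{22}-\epsilon_2)$ is strictly positive in the interesting range (where $\epsilon_2<1\le 1+p_{22}$), so the right-hand side $p_{21}(p_{12}-\epsilon_1)$ must be strictly positive, forcing $p_{12}>\epsilon_1$. For $\epsilon_2\ge\epsilon_1$, I tighten the right-hand side via substochasticity $p_{21}\le 1-p_{22}$ and $p_{12}\le 1-p_{11}$ to obtain $(1+p_{11})(1+p_{22}-\epsilon_2)<(1-p_{22})(1-p_{11}-\epsilon_1)$; expanding and rearranging yields
\begin{align*}
\epsilon_2-\epsilon_1>\frac{p_{11}+p_{22}}{1+p_{11}}(2-\epsilon_1)\ge 0,
\end{align*}
giving the desired inequality. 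The main obstacle I anticipate is the determinant bound $|\det P_2|<1$: outside the interesting range $\epsilon<1$ it can be violated and must be treated by separate casework (showing that such failures still imply $p(-1)\le 0$, so that (ii) remains the sole failure mode), and even inside the interesting range the bound requires combining substochasticity with the $\epsilon<1$ assumption carefully so that Jury's criterion genuinely collapses to the single condition $p(-1)<0$.
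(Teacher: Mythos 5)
Your proposal is correct, and it reaches the same failure condition as the paper by a genuinely different and tidier route. The paper writes down the roots of the characteristic polynomial explicitly and then runs through casework on the discriminant (imaginary roots; real roots with $p_{12}-\epsilon_1<0$; real roots with $\epsilon_2\le\epsilon_1$; real roots with $p_{11}+p_{22}-\epsilon_2$ positive or negative), bounding $|\lambda_\pm|$ separately in each case. You instead invoke the Jury/Schur--Cohn test and observe that two of its three conditions never fail: $p(1)=\det(I-\hat{P}_{\pi})+\epsilon_2(1-p_{11})+\epsilon_1p_{21}>0$ under \Cref{assu:proper} via \Cref{lem:PI}, and $|\det P_2|<1$ follows from substochasticity once one restricts to $\epsilon_1,\epsilon_2<1$ as in \Cref{rem:epsilonlessthanone} --- the same restriction the paper makes at the outset of its proof, so the "obstacle" you flag is resolved exactly as the paper resolves it. The failure mode therefore collapses to $p(-1)\le 0$, and your factorisation $1+p_{11}(p_{22}-\epsilon_2)+p_{11}+p_{22}-\epsilon_2=(1+p_{11})(1+p_{22}-\epsilon_2)$ identifies this with \Cref{eqn:conditionsOnP2}; the paper arrives at the identical inequality but only as the endpoint of its last case. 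Your approach buys two things the paper's does not make explicit: first, the root localisation $\lambda_-<-1<\lambda_+<1$ shows that $p_{11}+p_{22}<\epsilon_2$ is actually implied by \Cref{eqn:conditionsOnP2} rather than being an independent hypothesis, which clarifies the logical structure of the statement; second, the derivation $\epsilon_2-\epsilon_1>\tfrac{(p_{11}+p_{22})(2-\epsilon_1)}{1+p_{11}}\ge 0$ gives a quantitative version of the necessary condition $\epsilon_2\ge\epsilon_1$ (and in fact strict inequality), where the paper only infers the three necessary conditions qualitatively from which cases were eliminated. The only blemishes are cosmetic: the strict-versus-nonstrict boundary $p(-1)=0$ (spectral radius exactly $1$) is glossed over by both you and the paper, and you should state the Jury test for a monic quadratic explicitly or prove it in two lines, since the paper never cites it.
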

\begin{proof}
Here we consider only the cases where $ \epsilon(s,\pi(s))<1$ for all $s$ otherwise this reduces as in \Cref{rmk:epsilongreaterthan1} to the pieces where $ \epsilon_i<1$. To determine whether $P_0$ corresponds to a contraction, we determine whether the eigenvalues are less than 1.

The eigenvalues of $P_2$ are the roots of the following polynomial in $\lambda$
\begin{align*} \det(P-\epsilon\delta_{s''}-\lambda I) &= (p_{11}-\lambda)(p_{22}-\epsilon_2 -\lambda) - (p_{12}-\epsilon_1)p_{21} \\
&= \lambda^2 -(p_{11}+p_{22}-\epsilon_2)\lambda + p_{11}(p_{22}-\epsilon_2)-(p_{12}-\epsilon_1)p_{21}.
\end{align*}
This has roots
\begin{align*}\lambda_{\pm} &= \frac{(p_{11}+p_{22}-\epsilon_2)\pm \sqrt{(p_{11}+p_{22}-\epsilon_2)^2-4(p_{11}(p_{22}-\epsilon_2)-(p_{12}-\epsilon_1)p_{21})}}{2}\\
&= \frac{(p_{11}+p_{22}-\epsilon_2)\pm \sqrt{(p_{11}-p_{22}+\epsilon_2)^2+4(p_{12}-\epsilon_1)p_{21}}}{2}\\
\end{align*}
We can assume at least one of $\epsilon_1-p_{12}>0$ or $\epsilon_2-p_{22}>0$ is true, (as if both are true then we could redefine $\epsilon_1=\epsilon_2=0$, $p_{12}=p_{12}-\epsilon_1 >0$, and $ p_{22}=p_{22}-\epsilon_2 >0$, and then $P_2$ is a substochastic matrix which is a contraction already by \Cref{lem:LpiIsContraction}.)

We now consider various cases.

\noindent{\bf Imaginary roots} 

Assume both roots $\lambda_{\pm}$ are imaginary, then the norm of $\lambda_{\pm}$ (regardless of which root is chosen) is 
\begin{align*}
    &\frac{1}{2}\sqrt{(p_{11}+p_{22}-\epsilon_2)^2 + 4(p_{11}(p_{22}-\epsilon_2)-(p_{12}-\epsilon_1)p_{21}) - (p_{11}+p_{22}-\epsilon_2)^2} \\
    &= \sqrt{p_{11}(p_{22}-\epsilon_2)-(p_{12}-\epsilon_1)p_{21}}\\
    &\le\sqrt{(1-p_{12})(1-p_{21}-\epsilon_2) - (p_{12}-\epsilon_1)p_{21}} \\
    &= \sqrt{ 1-\epsilon_2 +p_{12}(\epsilon_2 - 1) + p_{21}(\epsilon_1 -1 )}\\
    &\le \sqrt{1-\epsilon_2} \le 1,
\end{align*}
with equality occurring in the last line only when $\epsilon_2 =1$, and equality occurring in the second line only if $P$ is row stochastic (not substochastic).  Hence the spectral radius is less than 1 and $P_2$ is a contraction in this case.

\noindent{\bf Real roots and $p_{12}-\epsilon_1<0$} 

In the case where both $\lambda_{\pm}$ are real and $p_{12}-\epsilon_1<0$, then the norm is 
\begin{align*} \frac{1}{2}\left|(p_{11}+p_{22}-\epsilon_2)\pm \sqrt{(p_{11}-p_{22}+\epsilon_2)^2+4(p_{12}-\epsilon_1)p_{21}}\right| &< \frac{1}{2}\left|(p_{11}+p_{22}-\epsilon_2)\pm \sqrt{(p_{11}-p_{22}+\epsilon_2)^2}\right|\\
& \le \frac{1}{2}\max\{2p_{11},2|p_{22}-\epsilon_2|\} \le 1.
\end{align*}
 Hence the spectral radius is less than 1 and $P_2$ is a contraction in this case.

\noindent{\bf Real roots, $p_{12}-\epsilon_1\ge 0$ and $\epsilon_2\le\epsilon_1$}

In the case where we have $p_{12}-\epsilon_1\ge 0$, then we can assume $p_{22}-\epsilon_2 \le 0$. Then $(p_{12}-\epsilon)p_{21}\le (1-p_{11}-\epsilon)(1-p_{22})$ using that $P$ is substochastic by row. 
We can then rewrite the surd in $\lambda$ as follows 
\begin{align*}(p_{11}-p_{22}+\epsilon_2)^2+4(p_{12}-\epsilon_1)p_{21}& \le (p_{11}-p_{22}+\epsilon_2)^2+4(1-p_{11}-\epsilon_1)(1-p_{22})\\
& = (p_{11}+p_{22}+\epsilon_2 -2)^2 + 4(1-p_{22})(\epsilon_2-\epsilon_1).
\end{align*}
where equality only occurs in the first line if $P$ is row stochastic. If $\epsilon_2 \le \epsilon_1$ then
we see that 
\begin{align*}
    |\lambda| &\le \frac{1}{2}\left|p_{11}+p_{22}-\epsilon_2 \pm |(p_{11}+p_{22}+\epsilon_2 - 2)|\right|\\
    &\le \frac{1}{2}\max\left\{|2p_{11}+2p_{22}-2|, |2-\epsilon_2|)\right\}\le 1.
\end{align*} 
 Hence the spectral radius is less than 1 and $P_2$ is a contraction in this case.

\noindent{\bf Real roots and $p_{11}+p_{22}-\epsilon_2>0$}

In the case $\epsilon_2 >\epsilon_1$, $p_{12}-\epsilon_1\ge0$, $p_{22}-\epsilon_2\le 0$ then $|\lambda| \le 1$ if and only if 
\begin{align*}
(2\pm(p_{11}+p_{22}-\epsilon_2))^2 &\ge (p_{11}+p_{22}-\epsilon_2)^2-4(p_{11}(p_{22}-\epsilon_2)-(p_{12}-\epsilon_1)p_{21})\\
\Leftrightarrow 1\pm(p_{11}+p_{22}-\epsilon_2)&\ge -(p_{11}(p_{22}-\epsilon_2)-(p_{12}-\epsilon_1)p_{21})
\end{align*}
In the case where $p_{11}+p_{22}-\epsilon_2>0$, then the stricter bound is
\begin{align*}
    1-(p_{11}+p_{22}-\epsilon_2)&\ge -(p_{11}(p_{22}-\epsilon_2)-(p_{12}-\epsilon_1)p_{21})\\
 \Leftrightarrow   1+p_{11}(p_{22}-\epsilon_2)&\ge p_{11}+p_{22}-\epsilon_2 +(p_{12}-\epsilon_1)p_{21}\\
  \Leftrightarrow   (1-p_{11})(1-p_{22}+\epsilon_2)&\ge(p_{12}-\epsilon_1)p_{21}\\
  \Leftarrow (1-p_{11})(1-p_{22}+\epsilon_2)&\ge(1-p_{11}-\epsilon_1)p_{21}\\
  \Leftrightarrow (1-p_{11})(1-p_{22}+\epsilon_2-p_{21})+\epsilon_1p_{21}&\ge 0\\
  \Leftarrow (1-p_{11})\epsilon_2+\epsilon_1p_{21}&\ge 0,
\end{align*}
which is always true. Here we used that $P$ is substochastic to move from the third line to the fourth line, and to move from the fifth line to the sixth line. Hence the spectral radius is less than 1 and $P_2$ is a contraction in this case.

\noindent{\bf Real roots and $p_{11}+p_{22}-\epsilon_2<0$}

In the case where $p_{11}+p_{22}-\epsilon_2<0$, then the stricter bound is
\begin{align*}
    1+(p_{11}+p_{22}-\epsilon_2)&\ge -(p_{11}(p_{22}-\epsilon_2)-(p_{12}-\epsilon_1)p_{21})\\
 \Leftrightarrow   1+p_{11}(p_{22}-\epsilon_2)+p_{11}+p_{22}-\epsilon_2&\ge  (p_{12}-\epsilon_1)p_{21}\\
 \Leftarrow 1+p_{11}(p_{22}-\epsilon_2)+p_{11}+p_{22}-\epsilon_2&\ge  (1-p_{11}-\epsilon_1)(1-p_{22})\\
 \Leftrightarrow 2(p_{11}+p_{22})+\epsilon_1(1-p_{22})&\ge \epsilon_2(1+p_{11}).
\end{align*}
This is not always true, for example if $p_{11},p_{22},\epsilon_1$ are small compared to $\epsilon_2$. Here we used that $P$ is substochastic to move from the second line to the third line.  Hence the spectral radius may not be less than 1 and $P_2$ may not be a contraction in this case. The previous cases suggest this necessarily also requires $\epsilon_2\ge\epsilon_1$, $p_{12}\ge \epsilon_1$ and $\epsilon_2\ge p_{22}$.
\end{proof}

\begin{example}\label{ex:iteratingLhatdaggerpi01}
If we have $p_{11}=0.1, p_{12}=0.89, p_{21}=0.89, p_{22}=0.1$ and $\epsilon_2=0.9$, $\epsilon_1=0.1$ then $P_2$ is not a contraction. It has eigenvalues $0.6016$ and $-1.3016$ therefore has spectral radius of $1.3016>1$.

However, $\hat{L}_{\pi}^{\dagger,0}$ still acts as a contraction during simulations. For example, if we let the costs equal to $0.01$ then iterating we converge to $x=(0.019694135768511, 0.010892287380350)$. If we iterated $\hat{L}_{\pi}$ we would converge to $J^*=(1,1)$. From \Cref{lem:existenceFixedPoint} any fixed point of, and indeed the image of, $\hat{L}_{\pi}^{\dagger,0}$ is contained in the set 
\[ [0.01,1]\times [0.01,1] .\] 
Plotting this set, the points $x,J^*$ and the costs in \Cref{fig:exampleContraction}, we see that due to $\epsilon_2$ being large, then $x$ is close to the costs and away from $J^*$. It appears that $\hat{L}_{\pi}^{\dagger,0}$ contracts as it avoids being in the active piece $P_2$ (the red shaded region) by moving to other active pieces (the blue shaded region) initially and then staying there.

\begin{figure}
    \centering
    \includegraphics[width=\linewidth]{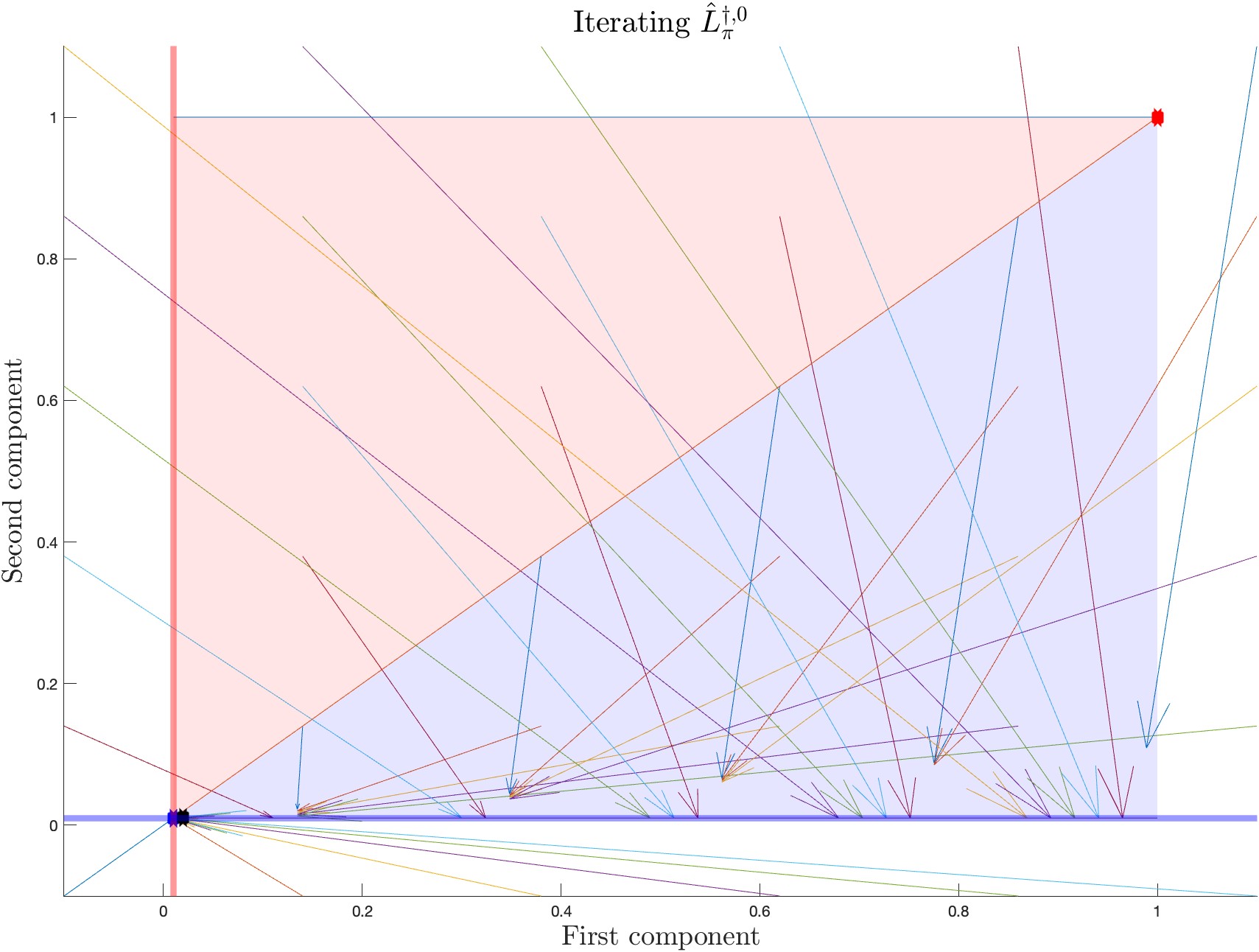}
    \caption{Here we illustrate iterating $\hat{L}_{\pi}^{\dagger,0}$ where $p_{11}=0.1, p_{12}=0.89, p_{21}=0.89, p_{22}=0.1$ and $\epsilon_2=0.9$, $\epsilon_1=0.1$. The black dot plots $x=(0.019694135768511, 0.010892287380350)$, the fixed point of $\hat{L}_{\pi}^{\dagger,0}$. The red dot plots $J^* = (1,1)$, the fixed point of $\hat{L}_{\pi}$. The blue dot, and the intersection of the red and blue lines, is the costs vector $(0.01,0.01)$. The shaded region is the set $\mathcal{I}=[0.01,1]\times [0.01,1]$ which contains any fixed points of $\hat{L}_{\pi}^{\dagger,0}$, as in \Cref{lem:existenceFixedPoint}. The red shaded region is where $P_{2},P_{21},P_{22}$ are active, while the blue shaded region is where $P_{1},P_{11},P_{12}$ are active. \\
    We have also plotted the first iteration of  $\hat{L}_{\pi}^{\dagger,0}$ when applied to vectors in the grid $\{-0.1,0.14,0.38,0.62,0.86,1.1\}\times\{-0.1,0.14,0.38,0.62,0.86,1.1\}$ in various colours using arrows. The start of the arrow is at the initial point in the grid, while the end point (where the arrow head lies) is the image of the starting point under  $\hat{L}_{\pi}^{\dagger,0}$. We can see that all elements of the grid outside $\mathcal{I}$ land in $\mathcal{I}$. Also, it appears that elements in the red shaded region land in the blue shaded region, and that these slowly converge to $x$.}
    \label{fig:exampleContraction}
\end{figure}
\end{example}

Unlike in \Cref{ex:iteratingLhatdaggerpi01}, the behaviour of $\hat{L}_{\pi}^{\dagger,0}$ may involve iterating between the different active pieces many times before convergence, as in the following example.

\begin{example}
If we have $p_{11}=0.00001, p_{12}=0.999 p_{21}=0.999, p_{22}=0.00001$ and $\epsilon_2=0.01$, $\epsilon_1=0.01$ then all active components are contractions.

We observe $\hat{L}_{\pi}^{\dagger,0}$ acting as a contraction during simulations, however it displays alternating behaviour, where it alternatives between the active pieces. For example, if we let the costs equal to $0.01$ then iterating we converge to $x=(0.90991810737, 0.90991810737)$. If we iterated $\hat{L}_{\pi}$ we would converge to $J^*=(10.100556144346518,10.100556144346518)$. From \Cref{lem:existenceFixedPoint} any fixed point of $\hat{L}_{\pi}^{\dagger,0}$ is contained in the set 
\[ [0.01,10.100556144346518]\times [0.01,10.100556144346518] .\] 
We plot this set, the points $x,J^*$ and the costs in \Cref{fig:exampleContraction2} and \Cref{fig:exampleContraction3}.

\begin{figure}
    \centering
    \includegraphics[width=\linewidth]{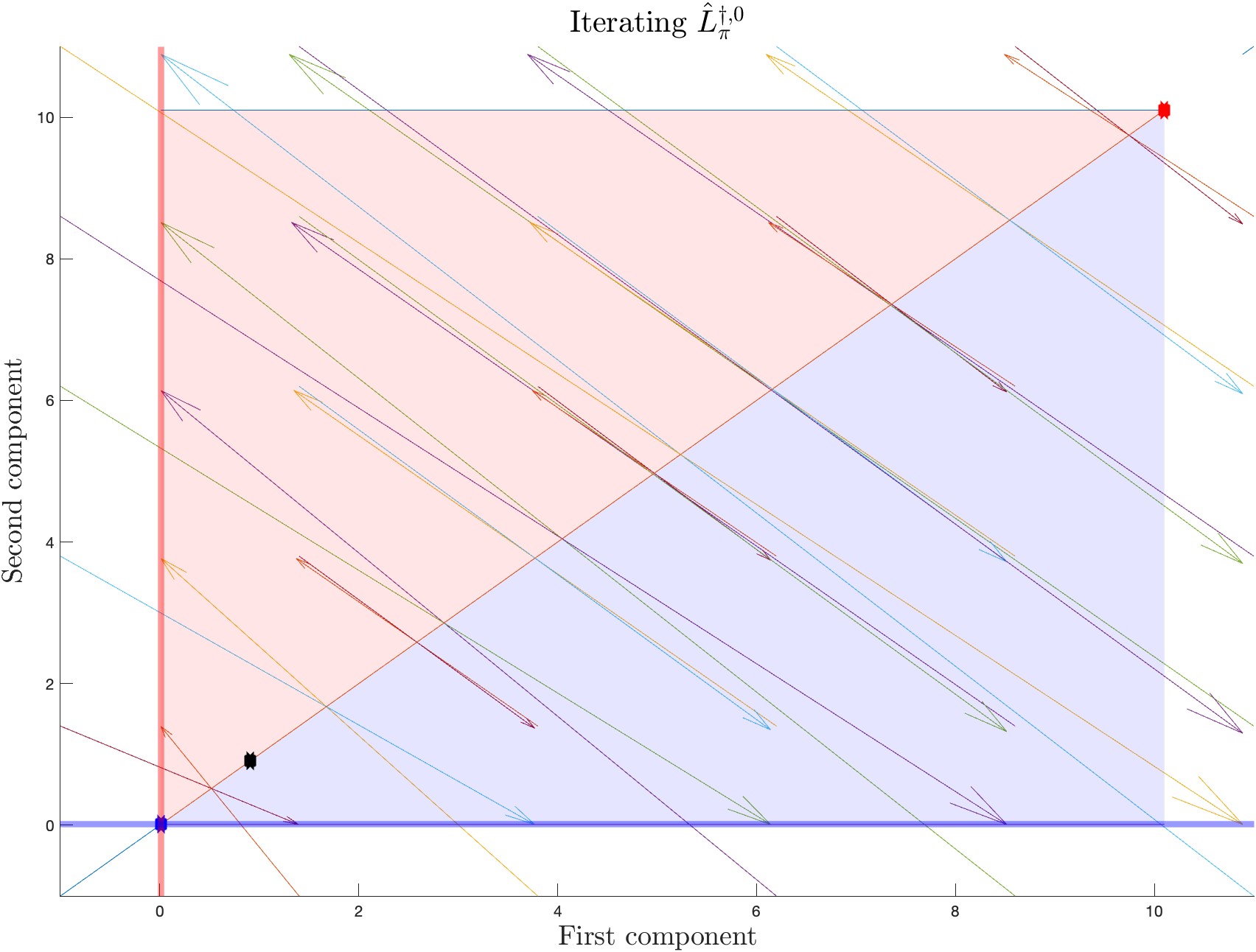}
    \caption{Here we illustrate iterating $\hat{L}_{\pi}^{\dagger,0}$ where $p_{11}=0.00001, p_{12}=0.999, p_{21}=0.999, p_{22}=0.00001$ and $\epsilon_2=0.01$, $\epsilon_1=0.01$. The black dot plots $x=(0.90991810737, 0.90991810737)$, the fixed point of $\hat{L}_{\pi}^{\dagger,0}$. The red dot plots $J^*=(10.100556144346518,10.100556144346518)$, the fixed point of $\hat{L}_{\pi}$. The blue dot, and the intersection of the red and blue lines, is the costs vector $(0.01,0.01)$. The shaded region is the set $\mathcal{I}=[0.01,10.1006]\times [0.01,10.1006]$ which is where any fixed point must lie. The red shaded region is where $P_{2},P_{21},P_{22}$ are active, while the blue shaded region is where $P_{1},P_{11},P_{12}$ are active. \\
    We have also plotted the first iteration of  $\hat{L}_{\pi}^{\dagger,0}$ when applied to vectors in the grid $\{-1,1.4,3.8,6.2,8.6,11\}\times\{-1,1.4,3.8,6.2,8.6,11\}$ in various colours using arrows. The start of the arrow is at the initial point in the grid, while the end point (where the arrow head lies) is the image of the starting point under  $\hat{L}_{\pi}^{\dagger,0}$. We can see that all elements of the grid outside $\mathcal{I}$ land in $\mathcal{I}$. Also, it appears that elements in the red shaded region land in the blue shaded region, and that these slowly converge to $x$.}
    \label{fig:exampleContraction2}
\end{figure}

\begin{figure}
    \centering
    \includegraphics[width=\linewidth]{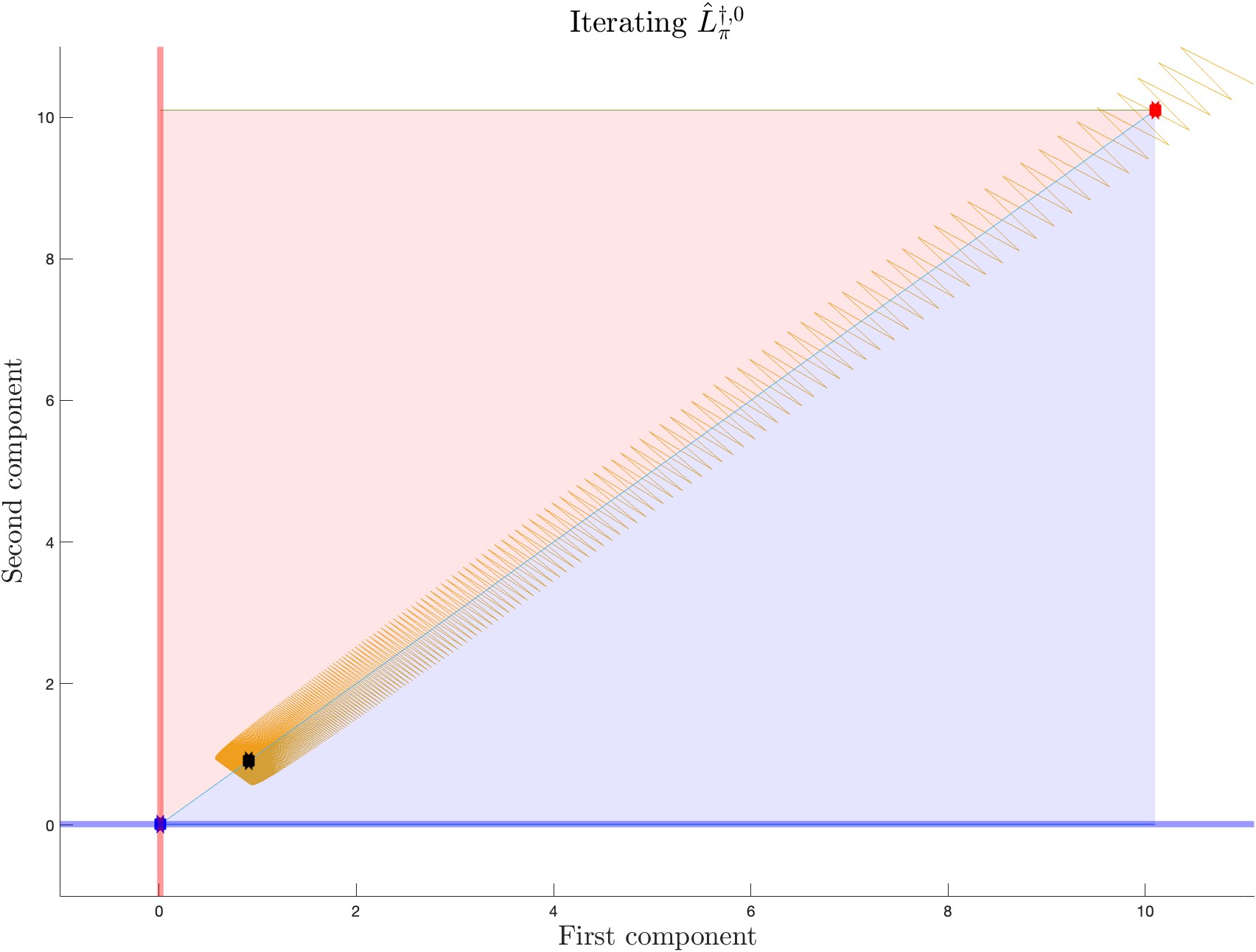}
    \caption{Here we illustrate iterating $\hat{L}_{\pi}^{\dagger,0}$ where $p_{11}=0.00001, p_{12}=0.999, p_{21}=0.999, p_{22}=0.00001$ and $\epsilon_2=0.01$, $\epsilon_1=0.01$. The black dot plots $x=(0.90991810737, 0.90991810737)$, the fixed point of $\hat{L}_{\pi}^{\dagger,0}$. The red dot plots $J^*=(10.100556144346518,10.100556144346518)$, the fixed point of $\hat{L}_{\pi}$. The blue dot, and the intersection of the red and blue lines, is the costs vector $(0.01,0.01)$. The shaded region is the set $\mathcal{I}=[0.01,10.1006]\times [0.01,10.1006]$ which is where any fixed point must lie. The red shaded region is where $P_{2},P_{21},P_{22}$ are active, while the blue shaded region is where $P_{1},P_{11},P_{12}$ are active. \\
    We have also plotted all iterations of $(11.1,10.468)$ under $\hat{L}_{\pi}^{\dagger,0}$ in orange. It oscillates between the red and blue regions before converging to $x$.}
    \label{fig:exampleContraction3}
\end{figure}
\begin{figure}
    \centering
    \includegraphics[width=\linewidth]{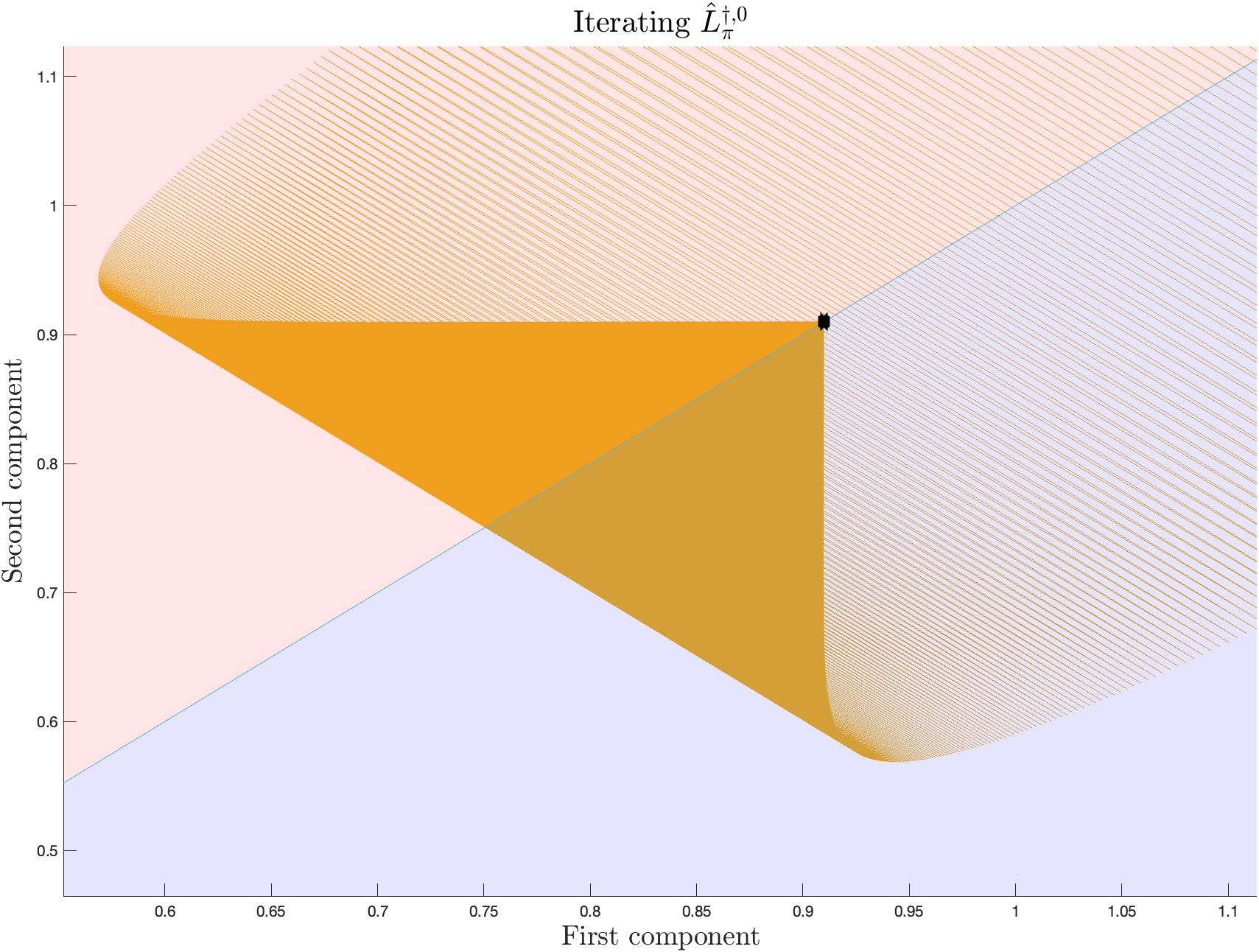}
    \caption{Close up of last few iterations of $\hat{L}_{\pi}^{\dagger,0}$ applied to $(11.1,10.468)$ before it converges to $x=(0.90991810737, 0.90991810737)$. The iterations continue past the fixed point before oscillating back to $x$.}
    \label{fig:exampleContraction4}
\end{figure}
\end{example}

The active pieces corresponding to $P_1,P_{11},P_{12},P_{21},P_{22}$ can be considered in the same way as \Cref{prop:whencontraction} by either swapping $p_{12}$ and $p_{21}$ for $P_1,P_{11},P_{21}$ and by taking $p_{11},p_{12},p_{21},\allowbreak p_{22},\epsilon_1,\epsilon_2$ to be equal to zero where appropriate.
\begin{corollary}
The active pieces corresponding to $P_{21}$ and $P_{12}$ are always contractions. 
\end{corollary}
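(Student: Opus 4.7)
The plan is to read off the spectra of $P_{21}$ and $P_{12}$ directly, which is easy because each matrix has a zero row. For $P_{21}$, expanding the characteristic polynomial along the second row gives $\det(P_{21} - \lambda I) = -\lambda(p_{11} - \lambda)$, so its eigenvalues are $0$ and $p_{11}$. Symmetrically, the eigenvalues of $P_{12}$ are $0$ and $p_{22}$. Hence $\rho(P_{21}) = p_{11}$ and $\rho(P_{12}) = p_{22}$, and the task reduces to showing that these two diagonal entries are strictly less than $1$.

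Next I would combine substochasticity with the standing propriety assumption (\Cref{assu:proper}) to upgrade the weak bound $p_{ii} \le 1$ to a strict one. Substochasticity gives $p_{11} + p_{12} + \hat{P}(g \mid 1, \pi(1)) = 1$ and $p_{21} + p_{22} + \hat{P}(g \mid 2, \pi(2)) = 1$, together with non-negativity of each term. If $p_{11} = 1$ held, then $p_{12} = 0$ and $\hat{P}(g \mid 1, \pi(1)) = 0$, so state $1$ would be absorbing under $\pi$ and $\pi$ would be improper with infinite cost-to-go at $s_1$; an identical argument rules out $p_{22} = 1$. Therefore $\rho(P_{21}), \rho(P_{12}) \in [0, 1)$.

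Finally, to pass from spectral radius strictly less than $1$ to genuine contraction of the associated affine map $x \mapsto c_\pi + P_{2i} x$, I would reuse the norm-selection argument already invoked in the proof of \Cref{lem:LpiIsContraction}: by \citet[Thm.~3, pg.~27]{Des2009}, for each $\delta > 0$ there is a vector norm on $\R^N$ whose induced matrix norm satisfies $\lVert P_{2i} \rVert < \rho(P_{2i}) + \delta$. Choosing $\delta = (1 - \rho(P_{2i}))/2$ exhibits a norm in which each of $P_{21}, P_{12}$ is a strict contraction, and the Banach contraction mapping theorem then gives unique fixed points that can be obtained by iteration.

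The only non-routine step is the second paragraph: one must be careful to justify strict inequality, because over the class of all substochastic $\hat{P}$ it is possible to have $p_{ii} = 1$, and the corollary therefore implicitly relies on \Cref{assu:proper} holding for the estimated model $\hat{P}$ (as arranged by the tilting construction $\hat{P}^{*}$ of \Cref{rem:modifiedP}). Everything else is immediate from the triangular structure of the two active pieces.
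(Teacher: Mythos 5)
Your proof is correct, but it takes a different route from the paper. The paper obtains this corollary in one line by specialising the non-contraction criterion of \Cref{prop:whencontraction}: substituting the zero row of $P_{21}$ (respectively $P_{12}$) into \Cref{eqn:conditionsOnP2} collapses that inequality to $1+p_{11}<0$ (respectively $1+p_{22}<0$), which can never hold, so the excluded case never arises. You instead bypass the proposition entirely and read the spectrum off the triangular structure, getting eigenvalues $\{0,p_{11}\}$ and $\{0,p_{22}\}$ directly, then pass to a contraction via the norm-selection argument of \Cref{lem:LpiIsContraction}. The paper's route is shorter given that the heavy case analysis has already been done; yours is self-contained, elementary, and --- importantly --- more careful on one point the paper glosses over: the proposition's case analysis repeatedly slides from $|\lambda|\le 1$ to ``spectral radius less than $1$'', so the degenerate case $p_{ii}=1$ (state $i$ absorbing under $\pi$) is silently excluded. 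Your second paragraph makes explicit that strictness comes from \Cref{assu:proper} applied to the estimated model (or from the tilting $\hat{P}^*$ of \Cref{rem:modifiedP}), which is exactly the hypothesis the paper relies on implicitly throughout the appendix. No gap; your version is, if anything, the more rigorous of the two.
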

\begin{proof}
The corresponding equation to \Cref{eqn:conditionsOnP2} is
\[ 1+p_{11} <  0 \]
for $P_{21}$, which is always false, and 
\[ 1+p_{22} <  0 \]
for $P_{12}$, which is again always false, so the active pieces corresponding to these matrices are contractions.
\end{proof}

\begin{proposition}
Only one of the fixed points of each of the pairs $(P_1,P_2)$, $(P_{11},P_{21})$ and $(P_{21},P_{22})$ are in the active piece of $\hat{L}_{\pi}^{\dagger,0}$, unless the fixed points are equal and are a multiple of $x=(1,1)$.
\end{proposition}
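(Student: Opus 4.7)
The plan is to reduce the proposition to a sign-opposition identity between the scalar gaps $x^{(1)}_1-x^{(1)}_2$ and $x^{(2)}_2-x^{(2)}_1$ at the two candidate fixed points $x^{(1)}=(I-P_1)^{-1}c$ and $x^{(2)}=(I-P_2)^{-1}c$. First I would isolate the algebraic identity behind the pairing: the matrices $P_1,P_2$ (and analogously the pairs $(P_{11},P_{21})$ and $(P_{12},P_{22})$) differ only in which column the vector $(\epsilon_1,\epsilon_2)^T$ is subtracted from, so $P_1 x - P_2 x = (x_2-x_1)(\epsilon_1,\epsilon_2)^T$, and in particular $P_1 x=P_2 x$ precisely on the diagonal $\{x_1=x_2\}$. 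Since the active region of $P_1$ is contained in $\{x_1\geq x_2\}$ and that of $P_2$ in $\{x_2\geq x_1\}$, the two regions meet only on this diagonal.

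The central step is to compute the gaps explicitly in terms of $c$. With $w=(1,-1)^T$, one has $x^{(1)}_1-x^{(1)}_2=w^T(I-P_1)^{-1}c$ and $x^{(2)}_2-x^{(2)}_1=-w^T(I-P_2)^{-1}c$. A direct $2\times 2$ inversion yields
\[
(I-P_1^T)^{-1}w=\frac{1}{D_1}\begin{pmatrix} p_{0,2}+\epsilon_2\\ -(p_{0,1}+\epsilon_1)\end{pmatrix},\qquad (I-P_2^T)^{-1}w=\frac{1}{D_2}\begin{pmatrix} p_{0,2}+\epsilon_2\\ -(p_{0,1}+\epsilon_1)\end{pmatrix},
\]
where $p_{0,i}:=1-p_{i1}-p_{i2}$ is the goal probability from state $i$ and $D_k:=\det(I-P_k)$; a short expansion gives $D_1=\det(I-\hat P)+\epsilon_1(1-p_{22})+p_{12}\epsilon_2>0$ (and similarly for $D_2$) whenever $\pi$ is proper for $\hat P$. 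The two right-hand vectors are identical, so the gap-producing vectors are antiparallel with positive ratio $D_1/D_2$, giving $x^{(2)}_2-x^{(2)}_1=-(D_1/D_2)(x^{(1)}_1-x^{(1)}_2)$: the two gaps have opposite signs for every admissible $c$.

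If both fixed points lie in their respective active regions then both $x^{(1)}_1-x^{(1)}_2\geq 0$ and $x^{(2)}_2-x^{(2)}_1\geq 0$, so by the identity both vanish. Each $x^{(k)}$ then sits on the diagonal $\{x_1=x_2\}$ where $P_1$ and $P_2$ agree, so each is a fixed point of both matrices; uniqueness of the solution of $(I-P_k)x=c$ then forces $x^{(1)}=x^{(2)}$, a scalar multiple of $(1,1)$. For the pairs $(P_{11},P_{21})$ and $(P_{12},P_{22})$ the proof is more direct: the zeroed row pins the corresponding coordinate of each fixed point to the associated cost (e.g.\ $x^{(1)}_2=c_2=x^{(2)}_2$ for $(P_{11},P_{21})$), and the two active-region inequalities on the remaining coordinate collapse to a single linear relation $c_1=c_2(1-p_{11}-p_{12}+\epsilon_1)$; on this locus both fixed points reduce to $c_2(1,1)$ (respectively $c_1(1,1)$), so they are equal and proportional to $(1,1)$. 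The main obstacle is the proportionality in the $(P_1,P_2)$ step: it is transparent for $N=2$ because the two vectors $(I-P_k^T)^{-1}w$ coincide up to the scalar $D_k$, but this relies sensitively on the $2\times 2$ structure and a generalisation to $N\geq 3$ (where the maximiser $s''$ can take more than two values and the analogous pairing becomes richer) does not follow from this argument and would require a separate idea.
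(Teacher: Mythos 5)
Your proposal is correct and follows the same basic route as the paper's proof: explicit $2\times 2$ inversion of $I-P_k$ and the observation that membership of each fixed point in its active piece is governed by the sign of one and the same linear functional of $c$, namely $(p_{0,2}+\epsilon_2)c_1-(p_{0,1}+\epsilon_1)c_2$ (the paper writes this as $(1-p_{22}-p_{21}+\epsilon_2)c_1+(p_{12}+p_{11}-\epsilon_1-1)c_2$). Two of your refinements are genuine improvements. First, you verify $D_k=\det(I-P_k)>0$ via $D_1=\det(I-\hat P)+\epsilon_1(1-p_{22})+p_{12}\epsilon_2$; the paper silently multiplies the inequality $x_1\ge x_2$ through by $D_1$ without checking its sign, and positivity is exactly what is needed for the two active-piece conditions to be complementary rather than coincident. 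Second, in the boundary case where both conditions hold with equality, the paper computes all four coordinates explicitly and divides by $1-p_{22}-p_{21}+\epsilon_2$, which degenerates when $p_{0,2}+\epsilon_2=0$; your argument (both fixed points lie on the diagonal, where $P_1x=P_2x$, so each solves $(I-P_2)x=c$, and invertibility forces $x^{(1)}=x^{(2)}$) avoids that edge case entirely and is cleaner. Your treatment of the degenerate pairs $(P_{11},P_{21})$ and $(P_{12},P_{22})$ matches the paper's "set the appropriate entries to zero" reduction. Your closing caveat that the antiparallel-vector identity is special to $N=2$ is accurate but harmless here, since the proposition is stated only for the two-state case.
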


\begin{proof}
Firstly take any piece $P_k$ and consider that, provided a fixed point exists, it is equal to $(I-P_k)^{-1}c$ where $c$ is the costs vector and $I$ is the $N\times N$ identity matrix.

We now compare the fixed points of $P_{1}$ and $P_{2}$. 

The fixed point of $P_1$ is 
\[\begin{bmatrix} x_1 \\ x_2 \end{bmatrix}= \frac{1}{(1-p_{11}+\epsilon_1)(1-p_{22})-(p_{21}-\epsilon_2)p_{12}} \begin{bmatrix} 1-p_{22} & p_{12}\\ p_{21}-\epsilon_2 & 1-p_{11}+\epsilon_1\end{bmatrix} \begin{bmatrix} c_1 \\ c_2 \end{bmatrix} .\] This fixed point is in the active piece if and only if $x_1\ge x_2$, which is if and only if 
\begin{equation} (1-p_{22}-p_{21}+\epsilon_2)c_1 + (p_{12}+p_{11}-\epsilon_1-1)c_2\ge 0. \label{eqn: fixedpoint1}\end{equation}

The fixed point of $P_2$ is 
\[\begin{bmatrix} y_1 \\ y_2 \end{bmatrix}= \frac{1}{(1-p_{11})(1-p_{22}+\epsilon_2)-p_{21}(p_{12}-\epsilon_1)} \begin{bmatrix} 1-p_{22}+\epsilon_2 & p_{12}-\epsilon_1\\ p_{21} & 1-p_{11}\end{bmatrix} \begin{bmatrix} c_1 \\ c_2 \end{bmatrix} .\] This fixed point is in the active piece if and only if $y_2\ge y_1$ which is if and only if 
\begin{equation} (1-p_{22}-p_{21}+\epsilon_2)c_1 + (p_{12}+p_{11}-\epsilon_1-1)c_2\le 0.\label{eqn: fixedpoint2}\end{equation}
Notice that either exactly one of \Cref{eqn: fixedpoint1} or \Cref{eqn: fixedpoint2} is true or they are both true. In the former case, this means that one one of the fixed points of $P_1$ and $P_2$ occurs in the active piece. 

In the latter case, we have $x_1=x_2$ and $y_1=y_2$ and
\begin{equation}(1-p_{22}-p_{21}+\epsilon_2)c_1 =(1+\epsilon_1 -p_{12}-p_{11})c_2.\label{eqn: fixedpointequality}\end{equation}
We can write $x_1$ as
\begin{align*}
    x_1 & = \frac{(1-p_{22})c_1+p_{12}c_2}{\det(1-P_1)}\\
    & =  \frac{c_2}{1-p_{22}-p_{21}+\epsilon_2}\frac{(1-p_{22})(1+\epsilon_1 -p_{12}-p_{11})+p_{12}(1-p_{22}-p_{21}+\epsilon_2)}{\det(1-P_1)}\\
     & =  \frac{c_2}{1-p_{22}-p_{21}+\epsilon_2}\frac{\det(1-P_1)}{\det(1-P_1)}\\
     & = \frac{c_2}{1-p_{22}-p_{21}+\epsilon_2},
\end{align*}
using \Cref{eqn: fixedpointequality} to move from the first line to the second. Then we can similarly write $y_2$ as
\begin{align*}
    y_2 & = \frac{p_{21}c_1+(1-p_{11})c_2}{\det(1-P_2)}\\
    & =  \frac{c_2}{1-p_{22}-p_{21}+\epsilon_2}\frac{p_{21}(1+\epsilon_1 -p_{12}-p_{11})+(1-p_{11})(1-p_{22}-p_{21}+\epsilon_2)}{\det(1-P_2)}\\
     & =  \frac{c_2}{1-p_{22}-p_{21}+\epsilon_2}\frac{\det(1-P_2)}{\det(1-P_2)}\\
     & = \frac{c_2}{1-p_{22}-p_{21}+\epsilon_2},
\end{align*}
again using \Cref{eqn: fixedpointequality} to move from the first line to the second. So we have 
\[x_2=x_1=y_2=y_1.\] This means the fixed points are equal and are a multiple of $(1,1)$.

We can consider $(P_{11},P_{21})$  and $(P_{21},P_{22})$ in the same way, by taking $p_{11},p_{12},p_{21},\allowbreak p_{22},\epsilon_1,\epsilon_2$ to be equal to zero where appropriate.
\end{proof}

This gives some intuition for determining the fixed point(s) of  $\hat{L}_{\pi}^{\dagger,0}$. We expect that the fixed point of  $\hat{L}_{\pi}^{\dagger,0}$ is the fixed point corresponding whichever of $P_1$ and $P_2$ are in the active piece. However, they may not be possible fixed points of  $\hat{L}_{\pi}^{\dagger,0}$ if they fall outside the feasible region for active points bounded by the costs and the fixed point $J^*$ of  $\hat{L}_{\pi}$. In this case, the remaining fixed points may be a possible fixed point of  $\hat{L}_{\pi}^{\dagger,0}$. We detail this below.

\begin{conjecture} \label{conj:FPprocedure}
The fixed point to $\hat{L}_{\pi}^{\dagger,0}$ can be found by following this procedure:
\begin{itemize}
    \item Calculate $x_0,x_1,x_2,x_{11},x_{12},x_{21},x_{22}$ the fixed points of $P_0,P_1,P_2,P_{11},P_{12},P_{21},P_{22}$ respectively. Calculate the fixed point $J^*$ of $\hat{L}_{\pi}$.
    \item Discard any $x_i$ that does not satisfy $x_0\le x_i \le J^*$.
    \item Discard any $x_i$ whose fixed point is not in the corresponding active piece of $P_i$.
    \item The fixed point of $\hat{L}_{\pi}^{\dagger,0}$ is either whichever of $P_1$ or $P_2$ remains, or if neither remains, it is the remaining fixed point whose elements have the highest sum.
\end{itemize}
\end{conjecture}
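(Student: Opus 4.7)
The plan is to exploit the piecewise-linear structure of $\hat{L}_{\pi}^{\dagger,0}$ already uncovered in this appendix. First I would formalise the partition of $\R^2$ into seven active regions $R_0, R_1, R_2, R_{11}, R_{12}, R_{21}, R_{22}$, cut out by the linear inequalities determined by (a) which component of $x$ achieves the maximum, and (b) whether each of the two $\max\{\cdot,0\}$ clippings is active. On each region the operator coincides with an affine map $x \mapsto c_\pi + P_i x$. Hence any fixed point $x^*$ lying in $R_i$ must satisfy $x^* = c_\pi + P_i x^*$ and so equal $x_i = (I - P_i)^{-1} c_\pi$ whenever $I - P_i$ is invertible. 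Conversely, $x_i$ is a genuine fixed point of $\hat{L}_{\pi}^{\dagger,0}$ if and only if $x_i \in R_i$. This immediately justifies Step~1 of the procedure (enumerate the candidates) and Step~3 (discard any $x_i$ outside its own active region $R_i$).

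For Step~2 I would invoke \Cref{lem:existenceFixedPoint} specialised to the fixed-policy case: every fixed point of $\hat{L}_{\pi}^{\dagger,0}$ is sandwiched between the cost vector $x_0 = c_\pi$ (which is the fixed point of the trivial piece $P_0$) and $J^*$, the fixed point of $\hat{L}_\pi$. Thus any $x_i$ violating $x_0 \le x_i \le J^*$ is safely discarded. After Steps~1--3 the candidate set has shrunk to a handful, and the proposition earlier in this appendix guarantees that at most one of $x_1, x_2$ survives Step~3 (outside the degenerate case where they coincide on the diagonal), so the ``$x_1$ or $x_2$ wins if it remains'' clause of Step~4 is well-posed wherever it applies.

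The harder part is Step~4's tie-breaker in the clipped regime. The plan is to use the link to \Cref{CP primalUnknowndaggerl1}: any surviving fixed point $x_i \in R_i$ is automatically feasible for the policy-fixed version of that convex programme, because equality in the defining equation of $\hat{L}_{\pi}^{\dagger,0}$ implies the corresponding superharmonicity inequality. Since the programme maximises $\sum_s x_s$, the surviving candidate with the largest element sum is a priori no worse than any other fixed point as a programme value. I would then complete the argument by showing that this candidate is in fact the unique fixed point selected by iteration of $\hat{L}_{\pi}^{\dagger,0}$ when convergence occurs, matching the programme optimum.

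The main obstacle will be the final uniqueness step: showing that among the at most five clipped candidates $\{x_0, x_{11}, x_{12}, x_{21}, x_{22}\}$, two distinct ones cannot simultaneously be fixed points except in degenerate configurations in which the one with greater sum is the limit of iteration. A promising route is to exploit that in each clipped piece $P_{ij}$ at least one row of $P_{ij}$ vanishes, so the corresponding coordinate equals its cost; this dramatically restricts the combinatorics of simultaneous consistency, and together with the explicit fixed-point formulas already derived in the analysis around \Cref{prop:whencontraction} should reduce the question to a finite check of linear inequalities in $p_{ij}$, $\epsilon_i$, and $c_i$.
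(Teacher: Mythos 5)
First, note that the statement you are addressing is presented in the paper as a \emph{conjecture}: the paper offers no proof of \Cref{conj:FPprocedure}, only empirical support from simulations together with the surrounding structural analysis (the enumeration of the seven affine pieces, \Cref{prop:whencontraction}, and the proposition showing that at most one of the pair $(x_1,x_2)$ can lie in its own active region). Your Steps 1--3 essentially re-derive that structural analysis and are sound: on each active region the operator coincides with an affine map, so a fixed point lying in region $R_i$ must equal $(I-P_i)^{-1}c_{\pi}$, and conversely $x_i$ is a genuine fixed point of $\hat{L}_{\pi}^{\dagger,0}$ precisely when $x_i\in R_i$; the sandwich $x_0\le x\le J^*$ for any fixed point follows from $\hat{L}_{\pi}^{\dagger,0}x\ge c_{\pi}$ and $x=\hat{L}_{\pi}^{\dagger,0}x\le \hat{L}_{\pi}x$ together with $(I-P_{\pi})^{-1}\ge 0$. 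Up to that point you are on the same track as the paper, only more explicit.

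The genuine gap is exactly where you place it, and your sketch does not close it. If two distinct candidates both survive Step 3 then, by your own ``if and only if'', the operator has two genuine fixed points, and the phrase ``the fixed point of $\hat{L}_{\pi}^{\dagger,0}$'' in Step 4 is not even well-defined; the conjecture implicitly asserts uniqueness, which neither the paper nor your argument establishes (the paper only obtains existence from \Cref{lem:existenceFixedPoint} via Brouwer, since the operator is not monotone and, by \Cref{prop:whencontraction} and \Cref{ex:oscillatingBehaviour}, not always a contraction). Your appeal to \Cref{CP primalUnknowndaggerl1} shows only that every surviving candidate is feasible for the programme, hence that the max-sum candidate dominates the others in objective value; it does not show that this candidate is the programme optimum (the optimum need not a priori be attained at a fixed point of the clipped operator), nor that it is the point to which iteration converges, nor that the other surviving candidates fail to be fixed points. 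The proposed ``finite check of linear inequalities'' for the clipped pieces is plausible but is a research plan rather than a proof. In short, your proposal is a correct formalisation of the easy half of the conjecture plus an honest identification of the open half --- which is the same place the paper leaves it.
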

Through simulations we have seen such a procedure hold for dimensions greater than 2. Note that we assume in this procedure that there is a unique fixed point, given we already know that one exists from \Cref{lem:existenceFixedPoint}.

\begin{example}\label{ex:oscillatingBehaviour}
The $\hat{L}_{\pi}^{\dagger,0}$ operator may oscillate between two points. We see this in \Cref{fig:OscillatingPoints}, where starting at the magenta point and iterating the operator follows the orange line oscillating between different points in the blue and red regions which correspond to the active pieces $P_1$ and $P_2$ respectively. Eventually it oscillates between the two black points. 

Note that we observed this oscillating behaviour between these two points for various other starting points, except the green point which a fixed point of $\hat{L}_{\pi}^{\dagger,0}$ found through \Cref{conj:FPprocedure}. This fixed point is associated to the active piece corresponding to $P_2$. The eigenvalues of $P_2$ are $-1.0529,0.85295$, so the spectral radius is greater than one. This means that the operator expands rather than contracts around this fixed point. We understand that this oscillating behaviour occurs because of this.
\begin{figure}
    \centering
    \includegraphics[width=\linewidth]{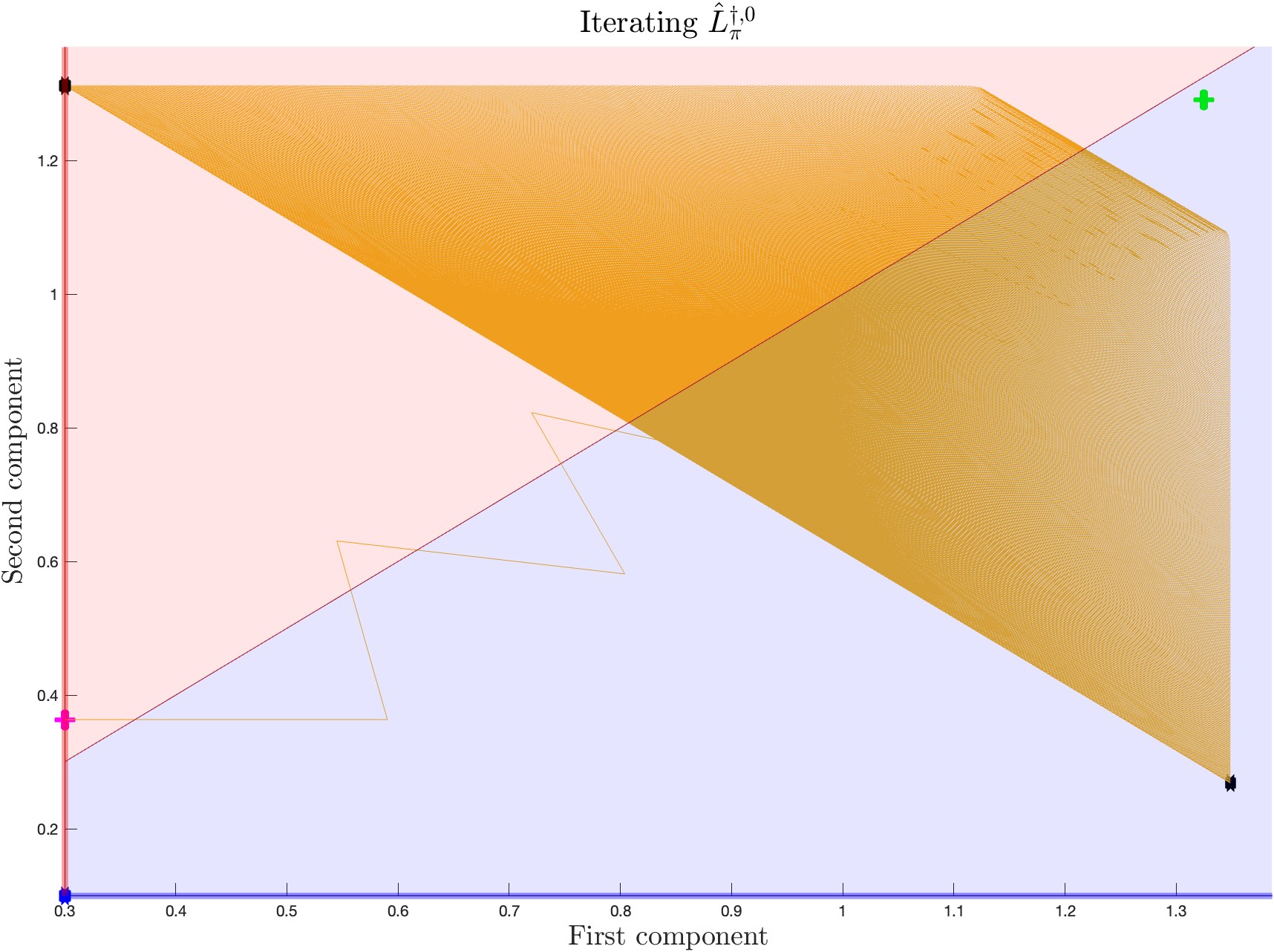}
    \caption{Example of oscillating behaviour for $\hat{L}_{\pi}^{\dagger,0}$. Here $p_{11}=0.00001, p_{12}=0.999, p_{21}=0.999, p_{22}=0.00001$, $\epsilon_1=0.2$, $\epsilon_2=0.1$, and $c_1=0.3,c_2=0.1$. The black dots plot $x=(0.3, 1.3124)$ and $x=(1.34862, 0.26847)$, and the operator $\hat{L}_{\pi}^{\dagger,0}$ oscillates between these two points. The magenta cross plots $J^*=(0.3,0.363367)$ where we start iterating $\hat{L}_{\pi}^{\dagger,0}$. All subsequent iterations are plotted using the orange line until the operator starts to oscillate between the black points. \\
    The green cross plots the fixed point of $\hat{L}_{\pi}^{\dagger,0}$ as found through \Cref{conj:FPprocedure}. The blue dot is the costs vector $(0.3,0.1)$. The red shaded region is where $P_{2},P_{21},P_{22}$ are active, while the blue shaded region is where $P_{1},P_{11},P_{12}$ are active.}
    \label{fig:OscillatingPoints}
\end{figure}

In general we found that cases were such behaviour occurred were found only very rarely in our simulations. As an aside, we note that if we modified $\hat{L}_{\pi}^{\dagger,0}$ to be bounded below by $0$ (instead of by the costs), as we discuss in \Cref{rmk:whyboundbycosts}, we observed this type of osciallating behaviour occurring very frequently in simulations, which is why we did not study this operator further.
\end{example}

\subsection{Discussion of results}
We have devoted this section into trying to understand the contractive properties of the operators $\hat{L}_{\pi}^{\dagger,0}$ and $\hat{U}^{\dagger,0}$. We focused much of our attention on $\hat{L}_{\pi}^{\dagger,0}$, as in general $\hat{U}^{\dagger,0} = \min_{\pi}\hat{L}_{\pi}^{\dagger,0}$. 

We see that $\hat{L}_{\pi}^{\dagger,0}$ has at least one fixed point by \Cref{lem:existenceFixedPoint}, and is always a contraction in the $|\Sc|=1$ case. We also see that $\hat{L}_{\pi}^{\dagger,0}$ is not in general monotone. 

In the $|\Sc|=2$ case, we found that there was a case (detailed in \Cref{prop:whencontraction} and \Cref{ex:oscillatingBehaviour}) where the operator was not a contraction and developed oscillating behaviour. This case required $\epsilon$ to be large enough, and for a non-contractive piece to be active around its fixed point. We still found that a fixed point existed in this example, and that this fixed point was equal to the optimal cost-to-go found by solving \Cref{CP primalUnknowndaggerl1}.

In general, we have empirically observed that the fixed point can be found by following \Cref{conj:FPprocedure}; that it equal to iterating $\hat{L}_{\pi}^{\dagger,0}$ whenever no-oscillating behaviour exists; and that it is equal to optimal cost-to-go found by solving \Cref{CP primalUnknowndaggerl1}. This validates \Cref{conj:l1dagger}. Further work is need to prove \Cref{conj:l1dagger} in the $|\Sc|=1$ case and in general. 

Some of this further work may be in developing understanding the contractice properties of piece-wise linear operators, particularly non-monotonic ones. This research could benefit from some of the literature on piece-wise PDE theory, such as \citet{nogueira2015} and \citet{GRIEWANK2015}.

Overall, the aim of understanding the contractive properties of $\hat{U}^{\dagger,0}$ was to substitute iterating the more computationally complex EVI operator $\hat{U}$ from \Cref{subsubsec:EVI} with the easy to calculate $\hat{L}_{\pi}^{\dagger,0}$. In all cases except the oscillating case, we have seen this work empirically. In the oscillating case, which occurs infrequently, there may be ways to adjust the algorithm. For example, to find the fixed point through \Cref{conj:FPprocedure}, or find the optimal point through solving \Cref{CP primalUnknowndaggerl1}, or even deviating back to iterating $\hat{U}$. We also observed that taking $x_s = \max_ix_{is}$ where $x_{i_1},x_{i_2}$ are the oscillating points results in $x$ being quite close to the fixed point found through \Cref{conj:FPprocedure}, which may be an option to investigate.

\section{\texorpdfstring{Alternative modified $\hat{P}$ for the KL-divergence from \Cref{subsec:KLdiv} }{Different modified P-hat for the KL-divergence}}\label{appn:alternativeKLdiv}
Adapting \Cref{subsec:KLdiv}, here we use the \emph{modified empirical transition probability} as in \citet[pg.~21]{neuPikeBurke} defined by 
\[\hat{P}^{+}(s'|s,a) = \frac{\max\{1,N(s,a,s')\}}{N(s,a)}.\]
This redefines any zeros of $\hat{P}$ to be equal to $1/N(s,a)$, however this does not ensure that $\hat{P}^+$ is a valid probability distribution. Let $\mathcal{S}_{0}\subseteq\mathcal{S}$ be the corresponding states where $\hat{P}$ is zero for a given $s,a$.

We now use the unnormalised relative KL-divergence to account for $\hat{P}^+$ not necessarily being a valid probability distribution. Here 
\[ D_{RKL}(\tilde{P}(\cdot|s,a),\hat{P}(\cdot|s,a)) = \sum_{s'}\tilde{P}(s'|s,a)\log\frac{\tilde{P}(s'|s,a)}{\hat{P}^+(s'|s,a)} + \sum_{s'} (\hat{P}^+(s'|s,a)-P(s'|s,a)).\]

We then have that
\begin{align*}
&\CB_{\min}(s,a)(x) \\
& = \min_{\tilde{P}\in \Delta} \{\langle x,\tilde{P}-\hat{P}\rangle \mid D_{RKL}(\tilde{P},\hat{P}) \leq \epsilon(s,a) \} \\
& =  \max_{\lambda \le 0} \min_{\tilde{P}\in \Delta} \{\langle x,\tilde{P}-\hat{P}\rangle - \lambda\left( D_{RKL}(\tilde{P},\hat{P})-\epsilon(s,a)\right) \} \\
& = - \min_{\lambda \le 0} \max_{\tilde{P}\in \Delta} \{-\left(\langle x,\tilde{P}-\hat{P}\rangle - \lambda \left(D_{RKL}(\tilde{P},\hat{P})-\epsilon(s,a)\right)\right) \}\\
& = - \min_{\lambda \le 0} \max_{\tilde{P}\in \Delta} \{-\left(\langle x,\tilde{P}-\hat{P}\rangle  - \lambda \left(D_{KL}(\tilde{P},\hat{P}^+)+\langle 1, \hat{P}^+ - \tilde{P}\rangle-\epsilon(s,a)\right)\right) \}\\
& = - \min_{\lambda \le 0} \max_{\tilde{P}\in \Delta} \{-\left(\langle x,\tilde{P}-\hat{P}\rangle  - \lambda \left(D_{KL}(\tilde{P},\hat{P}^+)-\epsilon'(s,a)\right)\right) \ \\
& = - \min_{\lambda \ge 0} \max_{\tilde{P}\in \Delta} \{\left(\langle x,\hat{P}-\tilde{P}\rangle  - \lambda \left(D_{KL}(\tilde{P},\hat{P})+\sum_{s'\in \mathcal{S}_0}\tilde{P}\log\frac{\tilde{P}}{\hat{P}^+}-\epsilon'(s,a)\right)\right) \} \\
& \ge - \min_{\lambda \ge 0} \max_{\tilde{P}\in \Delta} \{\left(\langle x,\hat{P}-\tilde{P}\rangle  - \lambda \left(\frac{1}{2\log2}\lVert\tilde{P}-\hat{P}\rVert_{1}^2+\sum_{s'\in \mathcal{S}_0}\tilde{P}\log\frac{\tilde{P}}{\hat{P}^+}-\epsilon'(s,a)\right)\right) \}  \\
& = - \min_{\lambda \ge 0} \max_{\tilde{P}\in \Delta} \{\left(\langle \frac{1}{\sqrt{\lambda}}x,\sqrt{\lambda}(\hat{P}-\tilde{P})\rangle  -  \frac{1}{2}\lVert\sqrt{\lambda}(\tilde{P}-\hat{P})\rVert_{1}^2-\lambda\log2\sum_{s'\in \mathcal{S}_0}\tilde{P}\log\frac{\tilde{P}}{\hat{P}^+}+\lambda\epsilon'(s,a)\log2\right) \} \\
& \ge - \min_{\lambda \ge 0}  \{\left(\frac{1}{2\lambda}\lVert x\rVert_{\infty}^2-\lambda\log2\sum_{s'\in \mathcal{S}_0}\min_{\tilde{P}\in \Delta}\tilde{P}\log\frac{\tilde{P}}{\hat{P}^+}+\lambda\epsilon'(s,a)\log2\right) \} \\
& \ge - \min_{\lambda \ge 0}  \{\left(\frac{1}{2\lambda}\lVert x\rVert_{\infty}^2+\lambda\frac{\log2}{e}\sum_{s'\in \mathcal{S}_0}\hat{P}^+(s'|s,a)+\lambda\epsilon'(s,a)\log2\right) \} \\
& = - 2\lVert x\rVert_{\infty}\sqrt{\frac{\log2}{2}\Big(\frac{1}{e}\sum_{s'\in \mathcal{S}_0}\hat{P}^+(s'|s,a)+\epsilon'(s,a)\Big)}.
\end{align*}
Here we have defined $\epsilon'(s,a) = \epsilon(s,a) +1 - \langle 1,\hat{P}^{+} \rangle $. We use Pinkser's inequality (see \citet[pg.~26]{Yeung2008}) to move from line 6 to line 7, and we use the relationship between convex conjugates of norms from \cite[Ex.~3.27]{Boyd} to move from line 8 to line 9. We also use the result that the minimum with respect to $x$ of $x\log(\frac{x}{a})$ is $-\frac{a}{e}$ occurring at $x=\frac{a}{e}$ as in \Cref{appn:minofLogfunction}, and that the minimum with respect to $\lambda$ of $a\lambda + \frac{b}{\lambda}$ is $\sqrt{ab}$ occurring at $\lambda = \sqrt{\frac{b}{a}}$ as in \Cref{appn:minAlambdaBoneOnLambda}. Both of these minima can be found by finding the first derivative and setting equal to zero and noting that the functions are convex.

Instead of using Pinkser's inequality, we can use that the convex conjugate of a KL divergence is the cumulant function as in \citet[Cor.~4.14]{Boucheron2012}. We can then proceed from line 6 above as follows
\begin{align*}
&\CB_{\min}(s,a)(x) \\
& = - \min_{\lambda \ge 0} \max_{\tilde{P}\in \Delta} \{\langle -x,\tilde{P}-\hat{P}\rangle  - \lambda \left(D_{KL}(\tilde{P},\hat{P})+\sum_{s'\in \mathcal{S}_0}\tilde{P}\log\frac{\tilde{P}}{\hat{P}^+}-\epsilon'(s,a)\right) \} \\
& \ge - \min_{\lambda \ge 0}  \{\lambda\log\left(\sum_{s'}\hat{P}(s')e^{\frac{-1}{\lambda}(x_{s'}-\langle \hat{P},x\rangle)}\right)-\lambda\sum_{s'\in \mathcal{S}_0}\min_{\tilde{P}\in \Delta} \tilde{P}\log\frac{\tilde{P}}{\hat{P}^+}+\lambda\epsilon'(s,a) \} \\
& \ge - \min_{\lambda \ge 0, \lambda \ge \lVert x-\langle \hat{P},x\rangle \rVert_{\infty})}  \{\frac{1}{\lambda}\sum_{s'}\hat{P}(s')(x_{s'}-\langle \hat{P},x\rangle)^2+\lambda\frac{\log2}{e}\sum_{s'\in \mathcal{S}_0}\hat{P}^+(s'\mid s,a) +\lambda\epsilon'(s,a) \} \\
& = - \begin{cases} 2\sqrt{\hat{V}(x)\left( \frac{\log2}{e}\sum_{s'\in \mathcal{S}_0}\hat{P}^+(s'\mid s,a) +\epsilon'(s,a)\right)} & \text{ if } \epsilon'(s,a)\le f(x,\hat{P}),\\
\frac{1}{\lVert x-\langle \hat{P},x\rangle \rVert_{\infty}}\hat{V}(x)+\lVert x-\langle \hat{P},x\rangle \rVert_{\infty}\left(\frac{\log2}{e}\sum_{s'\in \mathcal{S}_0}\hat{P}^+(s'\mid s,a) +\epsilon'(s,a)\right) & \text{ if } \epsilon'(s,a)> f(x,\hat{P}).
\end{cases}
\end{align*}
Here $\hat{V}(x) = \sum_{s'}\hat{P}(s')(x_{s'}-\langle \hat{P},x\rangle)^2$ could be considered the variance of $x$ with respect to $\hat{P}(\cdot|s,a)$. To move from line 2 to line 3 we use result that $\lambda \log \mathbb{E}^+e^{X/\lambda}\le \mathbb{E}^+(X)+ \frac{1}{\lambda}\mathbb{E}^+(X^2)$ whenever $|X|\le \lambda$ with probability 1 (see \Cref{appn: proofBoundCumulant} for details), with $X=x - \langle \hat{P},x\rangle$ considered as a finite random variable with probability distribution $\hat{P}$. Then $|X|\le \lambda$ with probability 1 translates to 
\[\lambda \ge \sup_{s'\in\Sc}\{|x_{s'}-\langle \hat{P}(\cdot|s,a),x\rangle| : \hat{P}(s'|s,a)>0\}\] which we write as $\lambda \ge \lVert x-\langle \hat{P},x\rangle \rVert_{\infty}$.
The function $f(x,\hat{P})$ is 
\[ f(x,\hat{P}) = \hat{V}(x)\frac{1}{\lVert x - \langle \hat{P},x\rangle \rVert_{\infty}^2} -\frac{\log2}{e}\sum_{s'\in\mathcal{S}_0}\hat{P}^+(s'\mid s,a).\] This comes from the location of the minimum of the function $a\lambda + b\frac{1}{\lambda}$ as in \Cref{appn:minAlambdaBoneOnLambda}.

Alternatively, we could proceed using Hoeffding's lemma as in \citet[Lem.~2.2]{Boucheron2012}. Then we have 
\begin{align*}
\CB_{\min}(s,a)(x) & = - \min_{\lambda \ge 0} \max_{\tilde{P}\in \Delta} \{\langle -x,\tilde{P}-\hat{P}\rangle  - \lambda \left(D_{KL}(\tilde{P},\hat{P})+\sum_{s'\in \mathcal{S}_0}\tilde{P}\log\frac{\tilde{P}}{\hat{P}^+}-\epsilon'(s,a)\right) \} \\
& \ge - \min_{\lambda \ge 0}  \{\lambda\log\left(\sum_{s'}\hat{P}(s')e^{\frac{-1}{\lambda}(x_{s'}-\langle \hat{P},x\rangle)}\right)-\lambda\sum_{s'\in \mathcal{S}_0}\min_{\tilde{P}\in \Delta} \tilde{P}\log\frac{\tilde{P}}{\hat{P}^+}+\lambda\epsilon'(s,a) \} \\
& \ge - \min_{\lambda \ge 0}  \{\frac{1}{2\lambda}(\spn(x-\langle \hat{P},x\rangle))^2 +\lambda\frac{\log 2}{e}\sum_{s'\in \mathcal{S}_0}\hat{P}^+(s'\mid s,a)+\lambda\epsilon'(s,a) \} \\
& =-\frac{2}{\sqrt{2}}\sqrt{(\spn(x-\langle \hat{P},x\rangle))^2\left(\frac{\log 2}{e}\sum_{s'\in \mathcal{S}_0}\hat{P}^+(s'\mid s,a)+\epsilon'(s,a)\right)}. 
\end{align*}
We could use any of the above as $\CB_{\min}^{\dagger}$, or alternatively the maximum of the three, and also bounding by $-\hat{P}(\cdot|s,a)x(\cdot)$ as we did in \Cref{eqn:hatDaggerBalloon}.

\section{\texorpdfstring{Alternative approach to the $\chi^2$-divergence bound from \Cref{subsec:Chisquaredivbound}}{Alternative approach to the Chi-squared-divergence bound}} \label{appn:alternativeChisquared}
Here we do not modify $\epsilon$ and instead try to find a different bound for $\CB_{\min}$.
We have that 
\begin{align*}
&\CB_{\min}(s,a)(x) \\
& = \min_{\tilde{P}\in \Delta} \{\langle x,\tilde{P}-\hat{P}\rangle \mid D_{\chi^2}(\hat{P},\tilde{P}) \leq \epsilon(s,a) \} \\
& =\max_{\lambda \le 0} \min_{\tilde{P}\ge 0} \{\langle x-\lambda 1,\tilde{P}-\hat{P}\rangle \mid \left\lVert \frac{\tilde{P}-\hat{P}^+}{\sqrt{\hat{P}^+}} \right\rVert_2^2 \leq \epsilon(s,a) \} \\
&= \max_{\lambda \le 0} \min_{\tilde{P}\ge 0} \{\langle x-\lambda 1,\tilde{P}-\hat{P}^+\rangle + \langle x - \lambda 1,\hat{P}^+-\hat{P}\rangle \mid \left\lVert \frac{\tilde{P}-\hat{P}^+}{\sqrt{\hat{P}^+}} \right\rVert_2^2 \leq \epsilon(s,a) \} \\
& =-\min_{\lambda \le 0} \max_{\tilde{P}\ge 0 } \{-\left\langle \sqrt{\hat{P}^+}(x-\lambda 1),\frac{\tilde{P}-\hat{P}^+}{\sqrt{\hat{P}^+}}\right\rangle - \langle x - \lambda 1,\hat{P}^+-\hat{P}\rangle \mid \left\lVert \frac{\tilde{P}-\hat{P}^+}{\sqrt{\hat{P}^+}} \right\rVert_2 \leq \sqrt{\epsilon(s,a)} \} \\
& \ge -\min_{\lambda \le 0} \max_{\tilde{P}\ge 0 } \{\left\lVert \sqrt{\hat{P}^+}(x-\lambda 1)\right\rVert_2\left\lVert\frac{\tilde{P}-\hat{P}^+}{\sqrt{\hat{P}^+}}\right\rVert_2 - \langle x - \lambda 1,\hat{P}^+-\hat{P}\rangle\mid \left\lVert \frac{\tilde{P}-\hat{P}^+}{\sqrt{\hat{P}^+}} \right\rVert_2 \leq \sqrt{\epsilon(s,a)}\}\\
& \ge -\min_{\lambda \le 0}\{\left\lVert \sqrt{\hat{P}^+}( x-\lambda 1)\right\rVert_2\sqrt{\epsilon(s,a)} -\langle x - \lambda 1,\hat{P}^+-\hat{P}\rangle\}.
\end{align*}
Here we used the Cauchy-Swartz inequality to move from line 4 to line 5 (see for example \citet{Hardy1988}.) At this point, we can take the derivative with respect to $\lambda$ and set equal to zero to find the location of the minimum. In the case where $\hat{P}(\cdot\mid s,a)$ has no zeros, then this occurs at $\lambda = \langle x,\hat{P}^+\rangle$, which would give 
\[\CB_{\min}(s,a)(x) \ge - \sqrt{\epsilon(s,a) \hat{V}^+(x)}.\] However, this is not less than zero, so the minimum occurs at the boundary where $\lambda = 0$ and we have 
\[\CB_{\min}(s,a)(x) \ge - \sqrt{\epsilon(s,a) \sum_{s'}\hat{P}^+(s'\mid s,a)x_{s'}^2}.\]
When $\hat{P}(\cdot\mid s,a)$ has zeros, setting the derivative equal to zero reduces to solving the following quadratic
\[A\lambda^2+B\lambda +C =0\] where 
\begin{align*} A &= \sum_{s'}\hat{P}^+(s')\left (Q-\sum_{s'}\hat{P}^+(s')\right)\\
B & = (2-Q)\sum_{s'}\hat{P}^+(s')x_{s'}\\
C & = \sum_{s'}\hat{P}^+(s')x_{s'}\left(Q-\sum_{s'}\hat{P}^+(s')x_{s'}\right)\\
Q & = \frac{4}{\sqrt{\epsilon(s,a)}}\left ( \sum_{s'} (\hat{P}^+(s')-\hat{P}(s'))\right)^2 .\end{align*}
The usual quadratic formula gives the roots. One then needs to check whether the roots give (global) minima and whether they are less than or equal to zero. However this results in needing to understand the relationship between $\hat{P}^+-\hat{P}$ and $\epsilon$.

\section{\texorpdfstring{Alternative approach to the variance-weighted $\ell_{\infty}$-norm from \Cref{subsec:varweightLinfinitynorm}}{Alternative approach to the variance-weighted l-infinity-norm}} \label{appn:AlternativeVarWeightsupNorm}
Here we do not modify $\epsilon$ and instead try to find a different bound for $\CB_{\min}$. 
We have that
\begin{align*}
&\CB_{\min}(s,a)(x) \\
& = \min_{\tilde{P}\in \Delta} \{\langle x,\tilde{P}-\hat{P}\rangle \mid D_{\infty}(\hat{P},\tilde{P}) \leq \epsilon(s,a) \} \\
& =-\min_{\lambda \le 0} \max_{\tilde{P}\ge 0 } \{-\left\langle \sqrt{\hat{P}^+}(x-\lambda 1),\frac{\tilde{P}-\hat{P}^+}{\sqrt{\hat{P}^+}}\right\rangle - \langle x - \lambda 1,\hat{P}^+-\hat{P}\rangle \mid \left\lVert \frac{\tilde{P}-\hat{P}^+}{\sqrt{\hat{P}^+}} \right\rVert_\infty \leq \sqrt{\epsilon(s,a)} \} \\
& \ge -\min_{\lambda \le 0} \max_{\tilde{P}\ge 0 } \{\left\lVert \sqrt{\hat{P}^+}(x-\lambda 1)\right\rVert_1\left\lVert\frac{\tilde{P}-\hat{P}^+}{\sqrt{\hat{P}^+}}\right\rVert_\infty - \langle x - \lambda 1,\hat{P}^+-\hat{P}\rangle\mid \left\lVert \frac{\tilde{P}-\hat{P}^+}{\sqrt{\hat{P}^+}} \right\rVert_\infty \leq \sqrt{\epsilon(s,a)}\}\\
& \ge -\min_{\lambda \le 0}\{\left\lVert \sqrt{\hat{P}^+}( x-\lambda 1)\right\rVert_1\sqrt{\epsilon(s,a)} -\langle x - \lambda 1,\hat{P}^+-\hat{P}\rangle\}.
\end{align*}

This will likely end up with the minimum at $\lambda =0$, however this again relies on the relationship between $\epsilon$ and $\hat{P}^+ - \hat{P}$ as in \Cref{appn:alternativeChisquared}.

 \section{Lemma regarding maximums and minimums of vectors} \label{appendix2:rearrangeMaxs}
\begin{lemma} \label{lem:maxarrange}
If $x$ and $y$ are vectors with entries $x_{i}$ and $y_{i}$. Say w.l.o.g. we have $\min_ix_{i} \ge\min_i y_{i}$ then for $i'$ such that $y_{i'}$ attains its minimum then we have
\[ |x_{i'} - y_{i'}|=x_{i'} - y_{i'} \ge \min_ix_{i} -\min_i y_{i} = |\min_ix_{i} -\min_i y_{i}|.\]
Similarly, if $\max_ix_{i} \ge\max_i y_{i}$ then for $i'$ such that $x_{i'}$ attains its maximum then we have
\[ |x_{i'} - y_{i'}|=x_{i'} - y_{i'} \ge \max_ix_{i} -\max_i y_{i} = |\max_ix_{i} -\max_i y_{i}|.\]
Then in general we have that 
\begin{enumerate}
    \item 
    $\max_i|(x_{i}-y_{i})| \ge | \min_i(x) - \min_i(y)|$, and
    \item 
    $\max_i|(x_{i}-y_{i})| \ge | \max_i(x) - \max_i(y)|$.
\end{enumerate}

\end{lemma}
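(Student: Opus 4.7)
The plan is to prove the two displayed inequalities (the specific witness statements) directly, and then obtain (1) and (2) as immediate corollaries by taking the max over $i$ on the left-hand side. The key observation is that the index $i'$ is chosen asymmetrically with respect to $x$ and $y$: for the min-case it is the minimiser of $y$ (under the WLOG ordering), and for the max-case it is the maximiser of $x$. With that choice, both the sign of $x_{i'}-y_{i'}$ and the comparison with $\min_i x_i - \min_i y_i$ (respectively $\max_i x_i - \max_i y_i$) fall out in one line.

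For the min-case, I would fix the WLOG hypothesis $\min_i x_i \ge \min_i y_i$ (if the opposite inequality holds, swap the roles of $x$ and $y$; the claim is symmetric in the two vectors because of the absolute values in (1) and (2)). Then pick $i'$ with $y_{i'} = \min_i y_i$. By the defining property of the minimum, $x_{i'} \ge \min_i x_i$, and combined with the WLOG hypothesis this gives
\[
x_{i'} - y_{i'} \;\ge\; \min_i x_i - \min_i y_i \;\ge\; 0,
\]
so both sides coincide with their absolute values, yielding the first displayed inequality. Taking a max over $i$ on the left of $|x_{i'}-y_{i'}|$ and observing $\max_i|x_i-y_i|\ge |x_{i'}-y_{i'}|$ gives (1).

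For the max-case, fix the WLOG hypothesis $\max_i x_i \ge \max_i y_i$ and pick $i'$ with $x_{i'} = \max_i x_i$. Then $y_{i'} \le \max_i y_i \le \max_i x_i = x_{i'}$, hence
\[
x_{i'} - y_{i'} \;\ge\; \max_i x_i - \max_i y_i \;\ge\; 0,
\]
and the same taking-of-maxima argument yields (2).

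There is no real obstacle: the lemma is a bookkeeping statement to the effect that the $\ell_\infty$-distance between two vectors dominates the distance between their extrema. The one subtlety worth flagging in the write-up is that the witness index must be chosen so that its contribution to $\max_i|x_i - y_i|$ is simultaneously (i) of the correct sign to drop the absolute value on the left, and (ii) large enough to bound $|\min_i x_i - \min_i y_i|$ (or the max analogue). Choosing $i'$ as described above ensures both properties at once, which is why the WLOG ordering is introduced before picking the index rather than after.
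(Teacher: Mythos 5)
Your proposal is correct and follows essentially the same argument as the paper: in both cases the key step is to choose the witness index asymmetrically (the minimiser of $y$ for the min-statement, the maximiser of $x$ for the max-statement) so that $x_{i'}-y_{i'}$ is nonnegative and dominates the difference of the extrema, after which (1) and (2) follow by bounding $|x_{i'}-y_{i'}|$ by $\max_i|x_i-y_i|$. The only difference is cosmetic: you prove the max-case directly by the symmetric choice of witness, whereas the paper derives it from the min-case by applying that statement to $-y$ and $-x$; your direct route is, if anything, slightly cleaner.
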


\begin{proof}
Say $x_{i_1}$, $y_{i_2}$ are where the minimums are attained, and we have  $x_{i_1} \ge y_{i_2}$. Then $x_{i_2}\ge x_{i_1}\ge y_{i_2}$, so \[|\max_i (x_{i}-y_{i})| \ge x_{i_2} - y_{i_2} \ge x_{i_1}-y_{i_2} = \min_ix_{i} -\min_i y_{i} = |\min_ix_{i} -\min_i y_{i}|.\] This proves the first statement for $i'=i_2$, and item 1. in general.

For the second statement, consider that 
\[ |\max_ix_{i} -\max_i y_{i}| =  |\min_i (-y_{i})-\min_i(-x_{i})|\] and that $\max_ix_{i} \ge\max_i y_{i}$ implies that $-\min_i-x_{i} \ge-\min_i -y_{i}$, so $\min_i-x_{i} \le \min_i -y_{i}$. Then we can apply the first statement to $-y, -x$ respectively, so that 
\begin{align*} |\max_ix_{i} -\max_i y_{i}| &=  |\min_i (-y_{i})-\min_i(-x_{i})| \\
& = \min_i (-y_{i})-\min_i(-x_{i}) \\
& = -y_{i_2}--x_{i_1} \\
&\le -y_{i_1}--x_{i_1} \\
&\le |\max_i(-y_{i}--x_{i})|\\
&= |\max_i(x_{i}-y_{i}|\\
& =|\min_i(y_{i}-x_{i}| \end{align*} where $x_{i_1}$ and $y_{i_2}$ are where the maximums of $x$ and $y$ are attained respectively.

\end{proof}

\section{Lemma on span} \label{appn:proofSpan}

\begin{lemma}
We have that for any vector $f\in \mathbb{R}^n=(f_1,\ldots, f_n)$ 
\[\min_{\lambda\in \R} \lVert f- \lambda 1\rVert_{\infty} = \spn(f) \] 
where $1=(1,\ldots,1)\in\R^n$ and the ``span" of the value function is defined as $\spn(f) = \frac{1}{2}(\max(f) - \min(f))$. We also have that if $f\ge 0$ then
\[\min_{\lambda\le 0} \lVert f- \lambda 1\rVert_{\infty} = \max(f) .\]
\end{lemma}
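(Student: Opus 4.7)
The plan is to handle both statements by the same elementary observation: for fixed $\lambda$, the quantity $\lVert f-\lambda 1\rVert_\infty=\max_i|f_i-\lambda|$ is achieved either by the index where $f_i$ is largest or by the index where $f_i$ is smallest (these are the points farthest from any single $\lambda$). Concretely I would first prove the reduction
\begin{equation*}
\lVert f-\lambda 1\rVert_\infty=\max\bigl\{\max(f)-\lambda,\;\lambda-\min(f)\bigr\},
\end{equation*}
by splitting on the sign of $f_i-\lambda$ and noting that $f_i-\lambda\le\max(f)-\lambda$ and $\lambda-f_i\le\lambda-\min(f)$ for all $i$, with equality attained at the extremal indices.

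For the first identity I would then minimise the pointwise max of the two affine functions $g_1(\lambda)=\max(f)-\lambda$ (decreasing) and $g_2(\lambda)=\lambda-\min(f)$ (increasing) over $\lambda\in\mathbb{R}$. Since the max of a decreasing and an increasing affine function is convex and minimised at the unique crossing point, setting $g_1(\lambda)=g_2(\lambda)$ yields $\lambda^{*}=\tfrac{1}{2}(\max(f)+\min(f))$, and substituting back gives the common value $\tfrac{1}{2}(\max(f)-\min(f))=\operatorname{spn}(f)$.

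For the second identity, the extra hypothesis $f\ge 0$ combined with the constraint $\lambda\le 0$ forces $f_i-\lambda\ge f_i\ge 0$ for every $i$, so $|f_i-\lambda|=f_i-\lambda$ and therefore $\lVert f-\lambda 1\rVert_\infty=\max(f)-\lambda$. This is a decreasing function of $\lambda$ on $(-\infty,0]$, so its minimum over $\lambda\le 0$ is attained at $\lambda=0$ and equals $\max(f)$.

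No substantial obstacle is expected; the only mild care is in verifying the reduction formula and checking that in the first part the minimiser $\lambda^{*}$ lies in $\mathbb{R}$ (no sign constraint), whereas in the second the crossing point $\tfrac{1}{2}(\max(f)+\min(f))$ would be nonnegative when $f\ge 0$ and hence excluded, which is precisely why the constrained minimum jumps to the boundary $\lambda=0$.
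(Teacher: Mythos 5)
Your proposal is correct and follows essentially the same route as the paper: both reduce $\lVert f-\lambda 1\rVert_\infty$ to the maximum of the decreasing branch $\max(f)-\lambda$ and the increasing branch $\lambda-\min(f)$, locate the minimiser at the crossing point $\tfrac{1}{2}(\max(f)+\min(f))$, and observe that under $f\ge 0$ this point is nonnegative so the constrained minimum over $\lambda\le 0$ sits at the boundary $\lambda=0$. Your explicit verification of the reduction formula is a slightly more careful rendering of what the paper asserts directly, but the argument is the same.
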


\begin{proof}
We have \[\min_{\lambda\in \R} \lVert f- \lambda 1\rVert_{\infty} = \min_{\lambda\in \R} \max_i|f_i - \lambda| . \]
If we consider $|f_i-\lambda|$ as functions in $\lambda$, we have that 
\[ |f_i - \lambda| =\begin{cases}\lambda - f_i &\text{ when }\lambda \ge f_i\\
f_i - \lambda & \text{ when } \lambda \le f_i \end{cases}\]

and then for some $x$ we must have
\[ \max_i|f_i - \lambda| =\begin{cases}\lambda - \min_i{f_i} &\text{ when }\lambda \ge x\\
\max{f_i} - \lambda & \text{ when } \lambda\le x \end{cases}.\] It follows that $x$ has the value of $\frac{1}{2}(\max_if_i+\min_if_i)$. Subsequently taking the minimum over $\lambda\in \mathbb{R}$ we see that this occurs precisely at $x$, where the value of this minimum is $\frac{1}{2}(\max_if_i - \min_if_i)=\spn(f)$, as required. We see this in \Cref{fig:span}.

If we consider the minimum over $\lambda \le 0$ and we have $f\ge 0$, then $x\ge 0$ and we see that the minimum occurs at $\lambda =0$ so we have
\[ \min_{\lambda \le 0}\{ \max_i|f_i - \lambda|\} =  \min_{\lambda \le 0} \{\max{f_i} - \lambda\} = \max_{f_i}f_i\]

\begin{figure}
    \centering
\begin{tikzpicture}[scale=0.80, vect/.style={->,shorten >=3pt,>=latex'}]
\tkzDefPoint(0,0){A}
\tkzDefPoint(0,6){B}
\tkzDefPoint(0,-6){C}
\tkzDefPoint(10,0){D}
\tkzLabelPoint[](D){$\lambda$}
\tkzDefPoint(-4,0){E}
\tkzDefPoint(0,3){F}
\tkzLabelPoint[left](F){\(f_i\)}
\tkzDefPoint(6,-2){G}
\tkzDrawSegment[gray,dashed](F,G)
\tkzDrawSegments[vect](A,B)
\tkzDrawSegment(A,C)
\tkzDrawSegments[vect](A,D)
\tkzDrawSegment(E,A)
\tkzDefPoint(0,4){H}
\tkzDefPoint(7,-2){I}
\tkzDrawSegment[gray, dashed](H,I)
\tkzDefPoint(0,-3){J}
\tkzLabelPoint[left](J){\(-\min_if_i\)}
\tkzDefPoint(6,2){K}
\tkzDefPoint(0,-4){L}
\tkzDefPoint(7,2){M}
\tkzDrawSegment[gray, dashed](L,M)
\tkzDefPoint(0,5){N}
\tkzLabelPoint[left](N){\(\max_i f_i\)}
\tkzDefPoint(8,-2){O}
\tkzDefPoint(0,-5){P}
\tkzLabelPoint[left](P){\(- f_i\)}
\tkzDefPoint(8,2){Q}
\tkzDrawSegment[gray,dashed](P,Q)
\tkzDefPoint(40/7,0){MM}
\tkzDefPoint(384/82, 37/41){GG}
\tkzDrawSegment(GG,N)
\tkzDrawSegment(GG,K)
\tkzDrawSegment[gray,dashed](O,GG)
\tkzDrawSegment[gray,dashed](J,GG)
\tkzDrawPoints(GG)
\tkzDefPoint(56/12,-1/8.7){XX}
\tkzLabelPoint[below](XX){\(x\)}
\end{tikzpicture}
    \caption{Graph $\max_i|f_i - \lambda|$ as a function of $\lambda$ and the point $x$ where the minimum occurs.}
    \label{fig:span}
\end{figure}
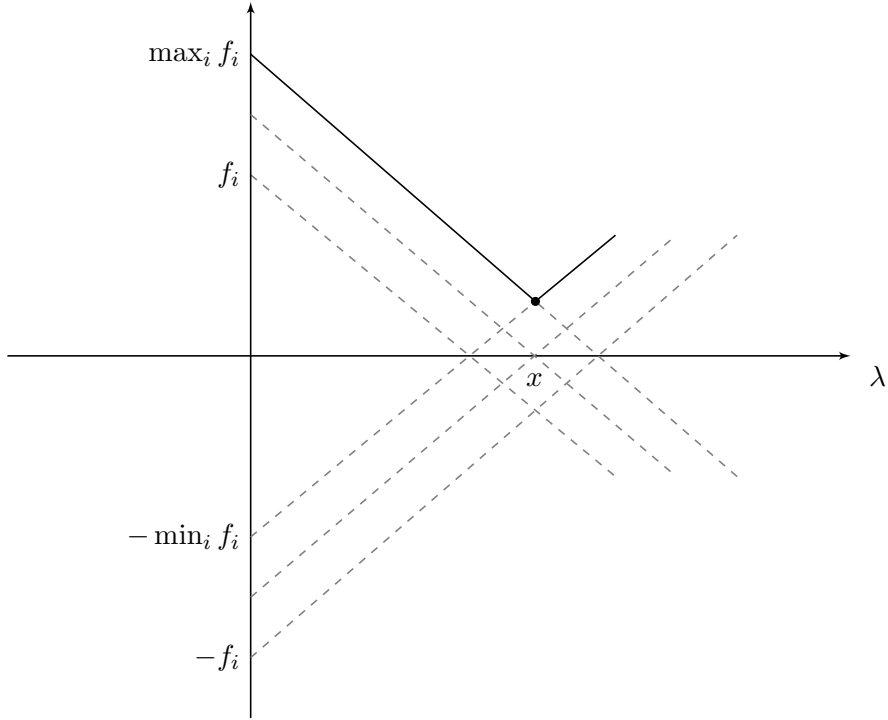

\end{proof}

\section{Proof that the cumulant is bounded}\label{appn: proofBoundCumulant}
We show here that $\lambda \log \mathbb{E}e^{X/\lambda}\le \mathbb{E}(X)+ \frac{1}{\lambda}\mathbb{E}(X^2)$ whenever $|X|\le \lambda$ with probability $1$, which we use in \Cref{subsec:KLdiv}. Thanks to correspondence from Gergely Neu for how to approach this.
\begin{proof}
First we show that $\exp(x)\le 1 + x + x^2$ for $x\in[-1,1]$. One can see this graphically, and we give an analytic proof below.

Define $h(x) = \exp(x)-1-x-x^2$ and note that $h(0) = 0$, $h(1) = -0.281718<0$, $h(-1) = -0.63212056<0$. Now $h'(x) = \exp(x)-1-2x$ and we can check that $h'(0) = 0$. Finally $h''(x) = \exp(x)-2$ and $h''(x)=0$ when $x=\log(2) = 0.69314718$, and where $x<\log(2)$ then $h''(x)<0$ (so $h$ is concave and $h'$ is strictly decreasing in this region) and $x>\log(2)$ is where $h''(x)>0$ (so $h$ is convex and $h'$ is strictly increasing in this region). This means that $h'$ can have at most two zeros, either in $[-1,\log(2)]$ (and we note that this is where $0$ lies, which is a zero of $h'$) or in $[\log(2),1]$. We note that on $[\log(2),1]$ $h'$ is increasing and $h'(\log(2))=-1-\log(2) <0$ and $h'(1)=\exp(1)-3 <0$, so there are no zeros for $h'$ in $[\log(2),1]$. Then for $h$ this means that there is a unique local maximum at $x=0$ with the value of zero, and $h<0$ on the region $[-1,\log(2)]$ and $h$ is concave there, and $h<0$ on $[\log(2),1]$ as $h$ is concave in this region so its maximum in this region must be either at $x=\log(2)$ (where $h(\log(2)) = -0.17360019$) or at $x=1$, but in both cases $h$ is negative. Therefore $h$ is negative at all $x\in [-1,1]$ except $x=0$ where $h(0)=0$, as in \Cref{fig:graph of e^x-1-x-x^2}. This implies the result.

\begin{figure}
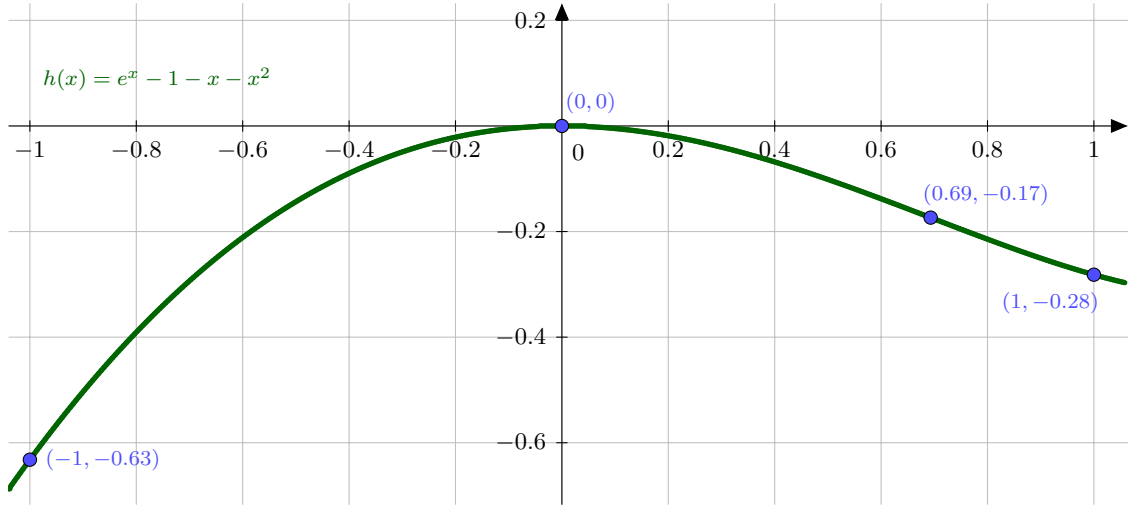
\centering
\definecolor{ududff}{rgb}{0.30196078431372547,0.30196078431372547,1.}
\definecolor{qqwuqq}{rgb}{0.,0.39215686274509803,0.}
\definecolor{cqcqcq}{rgb}{0.7529411764705882,0.7529411764705882,0.7529411764705882}

\caption{Graph of $h(x)=\exp(x)-1-x-x^2$ for $x\in [-1,1]$ with key points highlighted.}\label{fig:graph of e^x-1-x-x^2}
\end{figure}
This implies that \begin{equation}\log\left(\Eb\left(\exp\left(\frac{X}{\lambda}\right)\right)\right) \le \log\left( \Eb\left(1+\frac{X}{\lambda}+\frac{X^2}{\lambda^2}\right)\right) = \log\left(1+\frac{\Eb(X)}{\lambda}+ \frac{\Eb(X^2)}{\lambda^2}\right)\label{eqn:firstboundE}\end{equation} whenever $\frac{X}{\lambda}\in [-1,1]$ with probability 1, so whenever $|X|\le 1$ with probability 1.

Now we show that $\log(1+x)\le x$ for $x> -1$. We see this by considering that $\frac{1}{x}$ is decreasing, and therefore \[\log(1+x)=\int_{1}^{1+x}\frac{1}{t}dt \le (1+x-1)\frac{1}{t}\Big|_{t=1} = x\] by the definition of integration, as in \Cref{fig:graph of 1/x}.

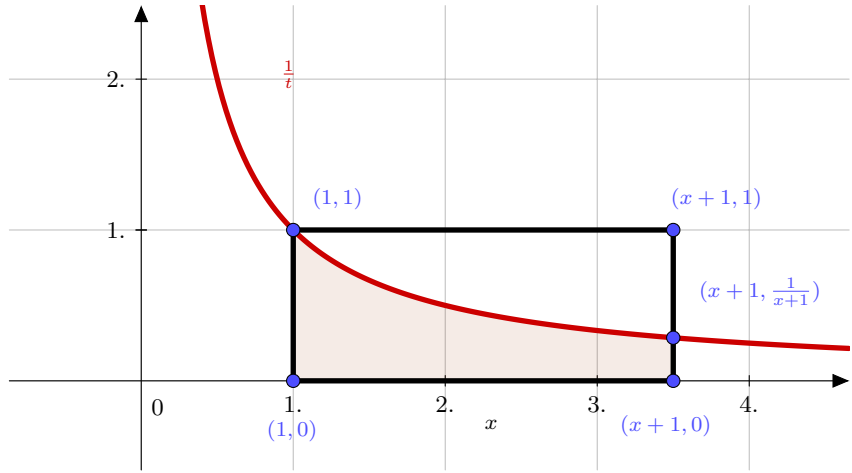
\begin{figure}
    \centering
\definecolor{zzttqq}{rgb}{0.6,0.2,0.}
\definecolor{ududff}{rgb}{0.30196078431372547,0.30196078431372547,1.}
\definecolor{ccqqqq}{rgb}{0.8,0.,0.}
\definecolor{cqcqcq}{rgb}{0.7529411764705882,0.7529411764705882,0.7529411764705882}
\begin{tikzpicture}[line cap=round,line join=round,>=triangle 45,x=2.0cm,y=2.0cm]
\draw [color=cqcqcq,, xstep=2.0cm,ystep=2.0cm] (-0.8677786276810572,-0.5908021433566658) grid (4.657149793221308,2.487536353112843);
\draw[->,color=black] (-0.8677786276810572,0.) -- (4.657149793221308,0.);
\foreach \x in {,1.,2.,3.,4.}
\draw[shift={(\x,0)},color=black] (0pt,2pt) -- (0pt,-2pt) node[below] {\footnotesize $\x$};
\draw[->,color=black] (0.,-0.5908021433566658) -- (0.,2.487536353112843);
\foreach \y in {,1.,2.}
\draw[shift={(0,\y)},color=black] (2pt,0pt) -- (-2pt,0pt) node[left] {\footnotesize $\y$};
\draw[color=black] (0pt,-10pt) node[right] {\footnotesize $0$};
\clip(-0.8677786276810572,-0.5908021433566658) rectangle (4.657149793221308,2.487536353112843);
\draw[line width=0.8pt,color=zzttqq,fill=zzttqq,fill opacity=0.10000000149011612, smooth,samples=50,domain=1.0:3.5] plot(\x,{1.0/\x}) -- (3.5,0.) -- (1.,0.) -- cycle;
\draw[line width=2.pt,color=ccqqqq,smooth,samples=100,domain=0.02:4.657149793221308] plot(\x,{1.0/(\x)});
\draw [line width=2.pt] (1.,1.)-- (1.,0.);
\draw [line width=2.pt] (1.,0.)-- (3.5,0.);
\draw [line width=2.pt] (3.5,0.)-- (3.5,0.2857142857142857);
\draw [line width=2.pt] (1.,1.)-- (3.5,1.);
\draw [line width=2.pt] (3.5,1.)-- (3.5,0.2857142857142857);
\begin{scriptsize}
\draw[color=ccqqqq] (0.9700354000619329,2.0338260150137923) node {$\frac{1}{t}$};
\draw [fill=ududff] (1.,1.) circle (2.5pt);
\draw[color=ududff] (1.2916528549169564,1.2068097025294469) node {$(1,1)$};
\draw [fill=ududff] (3.5,0.2857142857142857) circle (2.5pt);
\draw[color=ududff] (4.071346571878229,0.5865474681661876) node {$(x+1,\frac{1}{x+1})$};
\draw [fill=ududff] (3.5,1.) circle (2.5pt);
\draw[color=ududff] (3.784188130043387,1.2068097025294469) node {$(x+1,1)$};
\draw [fill=ududff] (1.,0.) circle (2.5pt);
\draw[color=ududff] (0.9930080754087203,-0.3323595457053075) node {$(1,0)$};
\draw [fill=ududff] (3.5,0.) circle (2.5pt);
\draw[color=ududff] (3.45108433751497,-0.29790053268512645) node {$(x+1,0)$};
\draw[color=black] (2.302450570175601,-0.28641419501173276) node {$x$};
\end{scriptsize}
\end{tikzpicture}
    \caption{Graph of $\frac{1}{t}$. The shaded area has the value $\int_{1}^{1+x}\frac{1}{t}dt=\log(1+x)$. This is less than the area of the square which has the value $x$.}
    \label{fig:graph of 1/x}
\end{figure}

Applying this to \Cref{eqn:firstboundE} we see that 
\begin{equation}\log\left(\Eb\left(\exp\left(\frac{X}{\lambda}\right)\right)\right) \le \log\left(1+\frac{\Eb(X)}{\lambda}+ \frac{\Eb(X^2)}{\lambda^2}\right) \le \frac{\Eb(X)}{\lambda}+ \frac{\Eb(X^2)}{\lambda^2} \end{equation} whenever $|X|\le 1$ with probability 1. Multiplying by $\lambda$ gives the result. Also, there is no issue using a substochastic probability distribution, which we use in \Cref{subsubsec:cumulantAppendixD}.
\end{proof}

\section{\texorpdfstring{Minimum of $a\lambda+\frac{1}{\lambda}b$}{Minimum of a function of lambda}}\label{appn:minAlambdaBoneOnLambda}
Here we show that the minimum of $a\lambda+\frac{1}{\lambda}b$ for $\lambda >0$ is $2\sqrt{ab}$ and it occurs at $\sqrt{\frac{b}{a}}$. We use this in \Cref{subsec:KLdiv,subsec:reverseKLdiv}.
\begin{proof}
Let $f(\lambda) = a\lambda+\frac{1}{\lambda}b$. Then \[f'(\lambda)= a -b\frac{1}{\lambda^2}.\] So $f'(\lambda) = 0$ when $\lambda =\sqrt{\frac{b}{a}}$ and $f(\sqrt{\frac{b}{a}}) = 2\sqrt{ab}$. Also, \[f''(\lambda) = 2b\frac{1}{\lambda^3} >0,\] for every $\lambda$ which means that $\lambda =\sqrt{\frac{b}{a}}$ is the global minimum.
\end{proof}
\begin{figure} \centering
\definecolor{ududff}{rgb}{0.30196078431372547,0.30196078431372547,1.}
\definecolor{qqwuqq}{rgb}{0.,0.39215686274509803,0.}
\definecolor{cqcqcq}{rgb}{0.7529411764705882,0.7529411764705882,0.7529411764705882}
\begin{tikzpicture}[line cap=round,line join=round,>=triangle 45,x=1.0cm,y=1.0cm]
\draw [color=cqcqcq,, xstep=1.0cm,ystep=1.0cm] (-0.76,-0.48) grid (10.84,8.78);
\draw[->,color=black] (-0.76,0.) -- (10.84,0.);
\foreach \x in {,1.,2.,3.,4.,5.,6.,7.,8.,9.,10.}
\draw[shift={(\x,0)},color=black] (0pt,2pt) -- (0pt,-2pt) node[below] {\footnotesize $\x$};
\draw[->,color=black] (0.,-0.48) -- (0.,8.78);
\foreach \y in {,1.,2.,3.,4.,5.,6.,7.,8.}
\draw[shift={(0,\y)},color=black] (2pt,0pt) -- (-2pt,0pt) node[left] {\footnotesize $\y$};
\draw[color=black] (0pt,-10pt) node[right] {\footnotesize $0$};
\clip(-0.76,-0.48) rectangle (10.84,8.78);
\draw[line width=2.pt,color=qqwuqq,smooth,samples=100,domain=-0.7600000000000002:10.840000000000003] plot(\x,{(\x)+2.0/(\x)});
\begin{scriptsize}
\draw[color=qqwuqq] (1.12,7.59) node {$\lambda+\frac{2}{\lambda}$};
\draw [fill=ududff] (1.4142135623730951,2.8284271247461903) circle (2.5pt);
\draw[color=ududff] (2.22,2.43) node {$(\sqrt{2}, 2\sqrt{2})$};
\end{scriptsize}
\end{tikzpicture}
\caption{Graph of $\lambda+2\frac{1}{\lambda}$, which has minimum at $(\sqrt{2},2\sqrt{2}$).}
\end{figure}
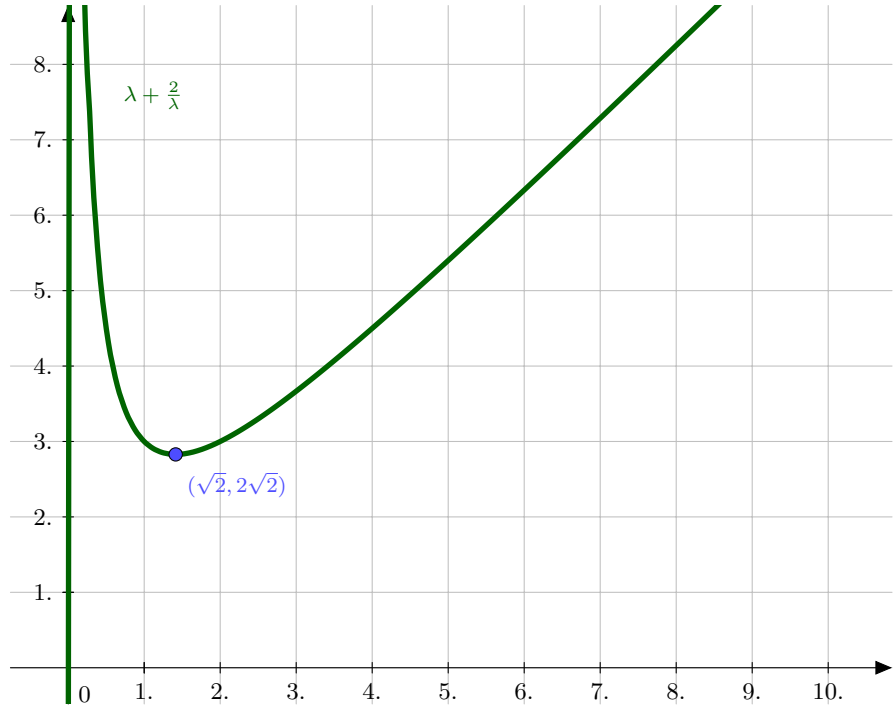

\section{\texorpdfstring{Minimum of $x\log(\frac{x}{a})$}{Minimum of a log function}}\label{appn:minofLogfunction}
Here we show that the minimum of $x\log(\frac{x}{a})$ for $x >0$ is $-\frac{a}{e}$ and it occurs at $x=\frac{a}{e}$. We use this in \Cref{appn:alternativeKLdiv}.
\begin{proof}
Let $g(x) = x\log(\frac{x}{a})$. Then \[ g'(x) = \log(\frac{x}{a}) + 1\] and $g'(x) = 0$ when $x = \frac{a}{e}$ and $g(\frac{a}{e}) = -\frac{a}{e}$. Also \[g''(x) = \frac{1}{x^2}\ge 0\] so $x=\frac{a}{e}$ is a global minimum.
\end{proof}.

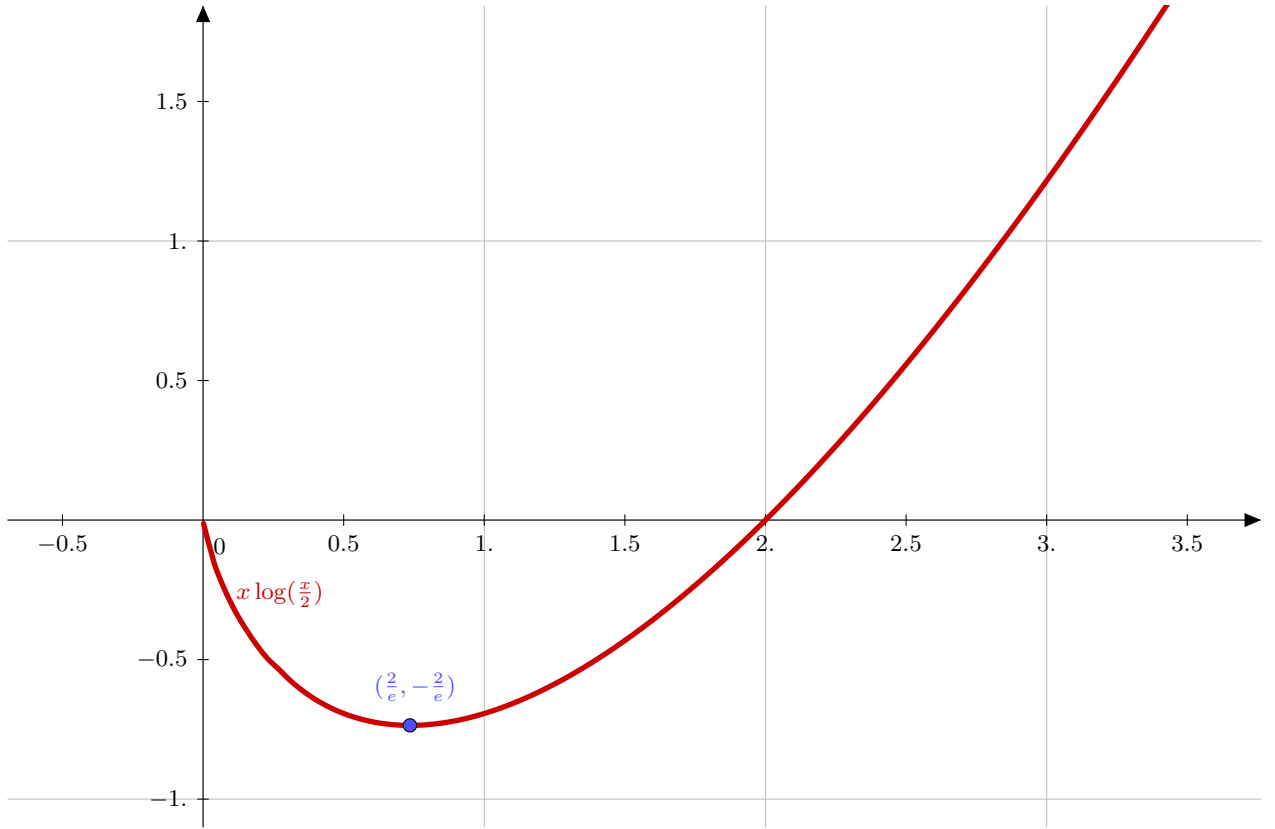
\begin{figure}
    \centering
\definecolor{ududff}{rgb}{0.30196078431372547,0.30196078431372547,1.}
\definecolor{ccqqqq}{rgb}{0.8,0.,0.}
\definecolor{cqcqcq}{rgb}{0.7529411764705882,0.7529411764705882,0.7529411764705882}
\begin{tikzpicture}[line cap=round,line join=round,>=triangle 45,x=3.7cm,y=3.7cm]
\draw [color=cqcqcq,, xstep=3.7cm,ystep=3.7cm] (-0.6945669886362705,-1.099596459435472) grid (3.762768632530062,1.8437820628174606);
\draw[->,color=black] (-0.6945669886362705,0.) -- (3.762768632530062,0.);
\foreach \x in {-0.5,0.5,1.,1.5,2.,2.5,3.,3.5}
\draw[shift={(\x,0)},color=black] (0pt,2pt) -- (0pt,-2pt) node[below] {\footnotesize $\x$};
\draw[->,color=black] (0.,-1.099596459435472) -- (0.,1.8437820628174606);
\foreach \y in {-1.,-0.5,0.5,1.,1.5}
\draw[shift={(0,\y)},color=black] (2pt,0pt) -- (-2pt,0pt) node[left] {\footnotesize $\y$};
\draw[color=black] (0pt,-10pt) node[right] {\footnotesize $0$};
\clip(-0.6945669886362705,-1.099596459435472) rectangle (3.762768632530062,1.8437820628174606);
\draw[line width=2.pt,color=ccqqqq,smooth,samples=100,domain=1.7918675944066492E-3:3.762768632530062] plot(\x,{(\x)*ln((\x)/2.0)});
\begin{scriptsize}
\draw[color=ccqqqq] (0.27356234656594076,-0.2645019418697383) node {\footnotesize $x\log(\frac{x}{2})$};
\draw [fill=ududff] (0.7357588823428847,-0.7357588823428847) circle (2.5pt);
\draw[color=ududff] (0.7538315842564183,-0.59715358555262184) node {\footnotesize $(\frac{2}{e},-\frac{2}{e})$};
\end{scriptsize}
\end{tikzpicture}
    \caption{Graph of $ x\log(\frac{x}{2})$ with minimum at $(\frac{2}{e},-\frac{2}{e})$.}
    \label{fig:my_label}
\end{figure}

\section{\texorpdfstring{Minimum of $\lVert a(b-\lambda)\rVert_1$}{Minimum of weighted span}}\label{appn:weightedspan}
Here we consider a weighted version of \Cref{appn:proofSpan} and use the $\ell_1$ norm rather than the $\ell_\infty$ norm. We assume $a$ is a positive vector in $\R^N$ and $b\in \R^N$, for a positive integer $N$.

In \Cref{fig:weightedspan}, note that $\lVert a(b-\lambda)\rVert_1$ is convex, so there is a unique minimum. As it is also piece-wise linear, then the minimum must occur at $\lambda = b_s$ for some $s=1,\ldots, N$. 
\begin{lemma}
Assume $a$ is a positive vector in $\R^N$ and $b\in \R^N$, for a positive integer $N$. Then reorder the vectors $a$ and $b$ so that the smallest element of $b$ is $b_1$ and the largest is $b_N$, with $a$ corresponding to the correct $b$. Then find the smallest $i$ such that \[\sum_{s=1}^ia_i\ge \frac{1}{2}\sum_{s=1}^Na_i.\] Then the minimum of $\lVert a(b-\lambda)\rVert_1$ occurs at $\lambda = b_i$.

If $b$ are non-negative and we require $\lambda\le0$, then the minimum occurs at $\lambda = 0$.
\end{lemma}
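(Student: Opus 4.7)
The plan is to exploit the piecewise linearity and convexity of $f(\lambda) := \lVert a(b-\lambda)\rVert_1 = \sum_{s=1}^N a_s |b_s - \lambda|$ and locate the minimum by a sign change in the (one-sided) derivative. Since each $|b_s-\lambda|$ is convex and the $a_s$ are positive, $f$ is a convex piecewise linear function with breakpoints exactly at the values $b_1, \dots, b_N$. As the excerpt already noted, convexity and piecewise linearity guarantee that some $b_s$ attains the minimum, so it suffices to identify which one.

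After reordering so that $b_1 \le b_2 \le \cdots \le b_N$, I would compute the one-sided derivatives of $f$ at $\lambda = b_i$. Away from breakpoints we have $f'(\lambda) = \sum_{s: b_s < \lambda} a_s - \sum_{s: b_s > \lambda} a_s$, and in particular the left derivative at $b_i$ equals $\sum_{s<i} a_s - \sum_{s \ge i} a_s$ while the right derivative equals $\sum_{s \le i} a_s - \sum_{s > i} a_s$ (taking ties among the $b_s$ into account does not change the argument, as any valid choice of $i$ in a tied block yields the same value $f(b_i)$). The minimality condition for a convex function is that the left derivative at the minimiser is $\le 0$ and the right derivative is $\ge 0$.

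The index $i$ defined in the statement is the smallest with $\sum_{s=1}^i a_s \ge \tfrac{1}{2}\sum_{s=1}^N a_s$, which rearranges to $\sum_{s\le i}a_s \ge \sum_{s>i}a_s$, giving exactly right derivative $\ge 0$. By minimality of $i$, $\sum_{s=1}^{i-1}a_s < \tfrac{1}{2}\sum_{s=1}^N a_s$, which rearranges to $\sum_{s<i}a_s < \sum_{s\ge i}a_s$, giving left derivative $<0$. Thus $0$ lies in the subdifferential at $b_i$ and convexity promotes this to a global minimum.

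For the second claim, when every $b_s \ge 0$ and the feasible $\lambda$ is restricted to $\lambda \le 0$, then $b_s - \lambda \ge 0$ for each $s$, so $f(\lambda) = \sum_s a_s b_s - \lambda \sum_s a_s$, which is strictly decreasing in $\lambda$ (since $a_s > 0$). Hence the constrained minimum over $\lambda \le 0$ is attained at the boundary $\lambda = 0$. I do not anticipate any real obstacle here; the only subtlety is handling ties in the $b_s$, which I would address by noting that within any tied block the argument identifies a flat piece on which $f$ is constant, so any $b_i$ in that block is a valid minimiser.
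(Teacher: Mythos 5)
Your proof is correct and follows essentially the same route as the paper's: both rest on convexity and piecewise linearity of $\sum_s a_s|b_s-\lambda|$ and identify the minimiser as the weighted-median breakpoint by examining the sign of the slope $\sum_{s\le k}a_s-\sum_{s>k}a_s$; you phrase this via one-sided derivatives and the subdifferential condition, while the paper writes out each linear piece explicitly, but the computation is identical. Your treatment of the constrained case $\lambda\le 0$ (observing $f$ is linear and strictly decreasing there) is a slightly more explicit version of the paper's appeal to convexity.
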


\begin{proof}
Say $b_k\le \lambda\le b_{k+1}$ for some $k=1,2,\ldots, N-1$ then 
\begin{align*} \lVert a(b-\lambda)\rVert_1 & = \sum_{s}|a_s(b_s-\lambda)|\\
&=  \sum_{s=1}^ka_s(\lambda-b_s) + \sum_{s=k+1}^Na_s(b_s-\lambda)\\
&=\lambda ( \sum_{s=1}^ka_s- \sum_{s=k+1}^ka_s) - \sum_{s=1}^ka_sb_s + \sum_{s=k+1}^Na_sb_s\\
&=\lambda ( 2\sum_{s=1}^ka_s- \sum_{s=N}^ka_s) + 2\sum_{s=k+1}^Na_sb_s - \sum_{s=1}^Na_sb_s\\
&=2\lambda ( \sum_{s=1}^ka_s- \frac{1}{2}\sum_{s=1}^Na_s) + 2\sum_{s=k+1}^Na_sb_s - \sum_{s=1}^Na_sb_s\\
\end{align*}
Then the minimum over $\lambda$ in $[b_k, b_{k+1}]$ occurs at $b_k$ whenever $ \sum_{s=1}^ka_s> \frac{1}{2}\sum_{s=1}^Na_s$, and it occurs at $b_{k+1}$ otherwise. As the function is piecewise linear and convex, the minimum must occur at the only $k$ such that  $ \sum_{s=1}^{k-1}a_s< \frac{1}{2}\sum_{s=1}^Na_s$ and  $ \sum_{s=1}^ka_s> \frac{1}{2}\sum_{s=1}^Na_s$, which is equivalent to the smallest $k$ such that \[\sum_{s=1}^ka_k\ge \frac{1}{2}\sum_{s=1}^Na_k,\] as required.

The second point follows by convexity of $\lVert a(b-\lambda)\rVert_1$.
\end{proof}
 
\begin{figure}
\definecolor{xdxdff}{rgb}{0.49019607843137253,0.49019607843137253,1.}
\definecolor{qqwuqq}{rgb}{0.,0.39215686274509803,0.}
\definecolor{cqcqcq}{rgb}{0.7529411764705882,0.7529411764705882,0.7529411764705882}
\begin{tikzpicture}[line cap=round,line join=round,>=triangle 45,x=1.8cm,y=1.8cm]
\draw [color=cqcqcq,, xstep=1.8cm,ystep=1.8cm] (-0.8091953776251293,-0.6640320560981714) grid (7.290693713999502,5.961848949948123);
\draw[->,color=black] (-0.8091953776251293,0.) -- (7.290693713999502,0.);
\foreach \x in {,1.,2.,3.,4.,5.,6.,7.}
\draw[shift={(\x,0)},color=black] (0pt,2pt) -- (0pt,-2pt) node[below] {\footnotesize $\x$};
\draw[->,color=black] (0.,-0.6640320560981714) -- (0.,5.961848949948123);
\foreach \y in {,1.,2.,3.,4.,5.}
\draw[shift={(0,\y)},color=black] (2pt,0pt) -- (-2pt,0pt) node[left] {\footnotesize $\y$};
\draw[color=black] (0pt,-10pt) node[right] {\footnotesize $0$};
\clip(-0.8091953776251293,-0.6640320560981714) rectangle (7.290693713999502,5.961848949948123);
\draw[line width=2.pt,color=qqwuqq,smooth,samples=100,domain=-0.8091953776251293:7.290693713999502] plot(\x,{0.3*abs((\x)-1.0)+0.2*abs((\x)-5.0)+0.2*abs((\x)-3.0)+0.4*abs((\x)-6.0)});
\begin{scriptsize}
\draw[color=qqwuqq] (2.425035955779759,4.437753210976351) node {$0.3|x - 1| + 0.2|x - 5| + 0.2|x - 3| + 0.4|x - 6|$};
\draw [fill=xdxdff] (1.,3.2) circle (2.5pt);
\draw[color=xdxdff] (0.8222310471897261,2.805976172587743) node {$(1,3.2)$};
\draw [fill=xdxdff] (3.,2.2) circle (2.5pt);
\draw[color=xdxdff] (2.7541833923652126,1.8900876533934386) node {$(3,2.2)$};
\draw [fill=xdxdff] (5.,2.) circle (2.5pt);
\draw[color=xdxdff] (4.8721755930020425,1.7612908303817394) node {$(5,2)$};
\draw [fill=xdxdff] (6.,2.3) circle (2.5pt);
\draw[color=xdxdff] (6.474980501592075,1.99331952345175486) node {$(6,2.3)$};
\end{scriptsize}
\end{tikzpicture}
\caption{Plot of $\lVert a(b-\lambda)\rVert_1$ with $b=(1,3,5,6)$ and $a=(0.3,0.2,0.2,0.4)$. We can see the minimum occurs at $(5,2)$ and that $\sum_{s=1}^ia_i\ge \frac{1}{2}\sum_{s=1}^Na_i$ is satisfied for $i=3$, $b_i=5$.}\label{fig:weightedspan}
\end{figure}
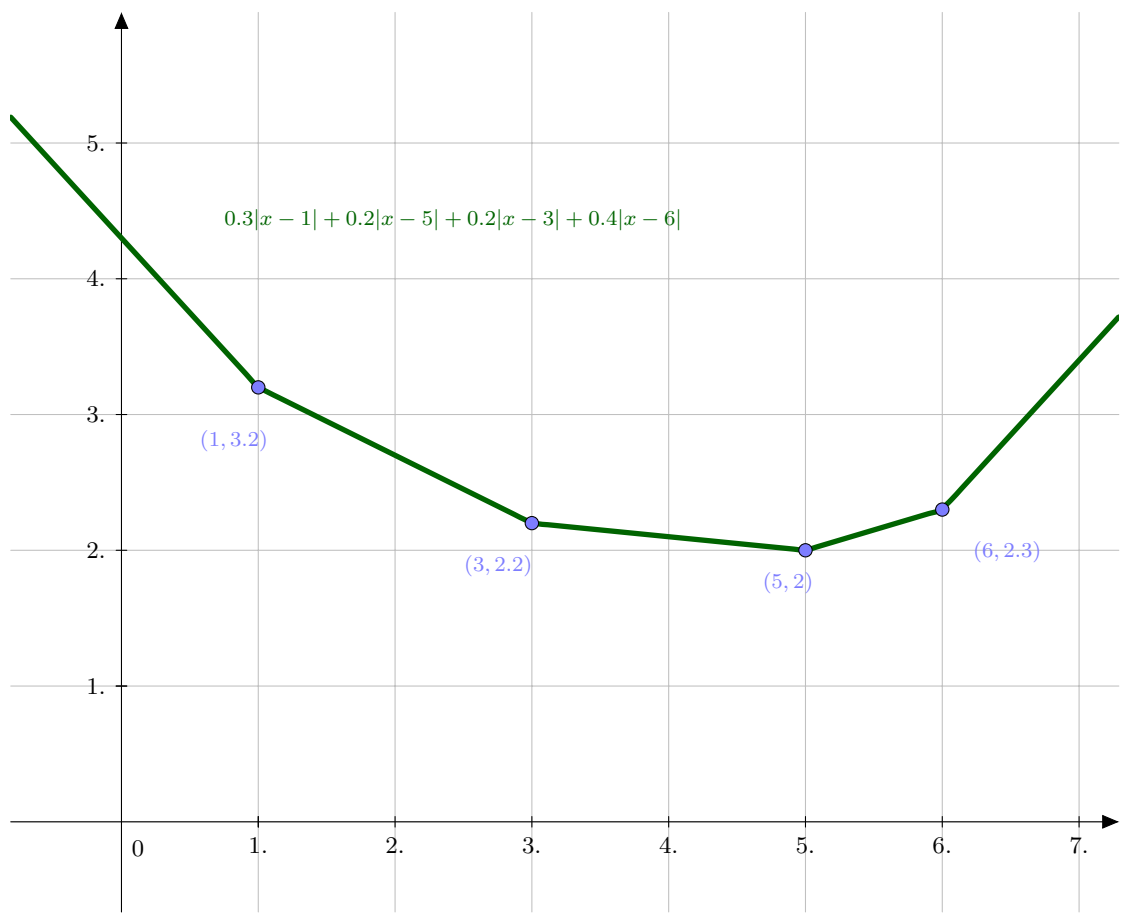
 
\end{document}